\newcommand{\reals}{\ensuremath{\mathbb{R}}}
\newcommand{\naturals}{\ensuremath{\mathbb{N}}}
\newcommand{\defeq}{\vcentcolon=}
\newcommand{\eqdef}{=\vcentcolon}
\newcommand{\expec}{\ensuremath{\mathbb{E}}}
\newcommand{\htanh}{\ensuremath{\text{htanh}}}
\newcommand{\cN}{\ensuremath{\mathcal{N}}}
\newcommand{\cB}{\ensuremath{\mathcal{B}}}
\newcommand{\cF}{\ensuremath{\mathcal{F}}}
\newcommand{\cO}{\ensuremath{\mathcal{O}}}
\newcommand{\cD}{\ensuremath{\mathcal{D}}}
\newcommand{\cL}{\ensuremath{\mathcal{L}}}
\newcommand{\vz}{\mathbf z}
\newcommand{\vx}{\mathbf x}
\newcommand{\vb}{\mathbf b}
\newcommand{\vh}{\mathbf h}
\newcommand{\mW}{\ensuremath{\mathbf{W}}}
\newtheorem{theorem}{Theorem}[section]
\newtheorem{corollary}{Corollary}[theorem]
\newtheorem{lemma}[theorem]{Lemma}
\newtheorem{definition}{Definition}[section]
\title{Activation function design for deep networks: linearity and effective initialisation}
\author{
 $\dagger \ddag$ M.~Murray, $\dagger$ V.~Abrol, $\dagger \ddag$ J.~Tanner \\
  $\dagger$ Mathematical Institute, University of Oxford, UK\\
  $\ddag$ The Alan Turing Institute, London, UK\\
  \texttt{[murray,abrol,tanner]@maths.ox.ac.uk} \\
   }
\begin{document}
\maketitle
\begin{abstract}
The activation function deployed in a deep neural network has great influence on the performance of the network at initialisation, which in turn has implications for training. In this paper we study how to avoid two problems at initialisation identified in prior works: rapid convergence of pairwise input correlations, and vanishing and exploding gradients. We prove that both these problems can be avoided by choosing an activation function possessing a sufficiently large linear region around the origin, relative to the bias variance $\sigma_b^2$ of the network's random initialisation. We demonstrate empirically that using such activation functions leads to tangible benefits in practice, both in terms test and training accuracy as well as training time. Furthermore, we observe that the shape of the nonlinear activation outside the linear region appears to have a relatively limited impact on training. Finally, our results also allow us to train networks in a new hyperparameter regime, with a much larger bias variance than has previously been possible. 
\end{abstract}
\keywords{Activation Function Design, Deep Learning, Initialisation, Random Networks.}

\section{Introduction}\label{sec:intro}
The preactivations $\vh^{(l)} \in \mathbb{R}^{N_l}$ and activations $\vz^{(l)}(\vx) \in \reals^{N_l}$ of an input vector $\vx\in \reals^{N_0}$ at each layer $l\in [L]$ of a fully-connected, deep, feedforward neural network, with weight matrices $\mathbf{W}^{(l)} \in \mathbb{R}^{N_l\times N_{l-1}}$ and bias vectors $\mathbf{b}^{(l)} \in \mathbb{R}^{N_l}$, are computed by the following pair of recurrence relations,
\begin{equation}
    \vz^{(l)}(\vx)=\mathbf{W}^{(l)} \mathbf{h}^{(l-1)} + \mathbf{b}^{(l-1)}, \; \mathbf{h}^{(l)}(\vx)=\phi(\vz^{(l)}(\vx)),
    \label{eq:network}
\end{equation}
where $\phi: \mathbb{R\rightarrow R}$ is the pointwise nonlinearity, referred to as the activation function, deployed at each layer. The impact of the choice of activation function $\phi$ on a wide range of a properties, including for instance the trainability and generality of the network, is an active area of research \cite{glorot10a, searching_activations, parametric_elu, tanhexp, Selu, ELU}. In this paper we study specifically how the choice of activation function impacts the performance of the network at initialisation, and the subsequent implications of this on training. We build on prior work \cite{Poole2016,samuel2017,pennington18a,hayou19a} concerning randomly initialised deep neural networks (DNNs), focusing on two objectives: first, ensuring deeper information propagation by slowing the rate of convergence of pairwise input correlations, and second, avoiding vanishing and exploding gradients \cite{Hochreiter98} by at least approximately achieving dynamical isometry. In \cite{Poole2016} estimates for how a) the length of the preactivations of an input and b) the correlation between the preactivations of two distinct inputs evolve with depth were derived. In both \cite{Poole2016, samuel2017} the authors investigated how the choice of the activation function $\phi$ and the variance of the random weights $\sigma_w^2$ and biases $\sigma_b^2$ effect the lengths of and correlations between preactivations. A key finding of \cite{samuel2017} was the identification of a condition on $(\sigma_w^2,\sigma_b^2)$, termed initialisation on the edge of chaos (EOC), which is necessary in order to avoid asymptotic exponentially fast convergence of both the lengths and correlations. Furthermore, \cite{hayou19a} identified for a broad range of activation function that by letting $\sigma_b^2 \rightarrow 0$ then the rate of convergence in correlation can be reduced away from the fixed point, i.e., non-asymptotically. We review this line of research in detail in Section \ref{subsec:MF_forward}. A related line of work \cite{pennington18a, Saxe13, pennington17a} proposed that the problem of vanishing and exploding gradients can be mitigated by ensuring that the spectrum of the input-output Jacobian of the network is concentrated around one, a property termed dynamical isometry. In addition, it was shown that initialisation on the EOC implies that the mean of the spectrum of the input-output Jacobian is one. Therefore, dynamical isometry can at least approximately be achieved by making, through careful selection of $(\phi,\sigma_w^2,\sigma_b^2)$, the variance $\sigma_{JJ^T}^2$ of the spectrum close to zero. More specifically, the authors highlighted how this can be achieved for particular activation functions by using an orthogonal initialisation scheme for the weights and again by letting $\sigma_b^2 \rightarrow 0$. A number of algorithmic and architectural methods have been proposed and deployed in practice to overcome the problem of vanishing and exploding gradients across a variety of contexts, see for instance \cite{glorot10a,Selu,Hochreiter1997,kaiming16, rupesh15,He2015,Ioffe2015,ba2016}. For brevity we do not review these here, but remark that the line of work described above can be viewed as complimentary to these methods, focusing specifically on the network at initialisation.

Letting $\sigma_b^2 \rightarrow 0$ is unsatisfactory for a number of reasons, first, $\sigma_b^2 \rightarrow 0$ typically results in the length of the preactivations converging towards zero with depth. Second, and as demonstrated by our experiments in Section \ref{sec:experiments}, more affine initialisations with a small but not overly small $\sigma_b^2$ appear to provide the best training outcomes in practice. Third and finally, this strategy does not leverage our freedom in designing the activation function deployed in order to achieve these two goals. In this paper we introduce and study a novel set of scaled, bounded activation functions which are linear in some neighbourhood of the origin. For functions in this set we derive upper bounds controlling both the rate of convergence of correlations between preactivations, as well as the variance of the spectrum of the input-output Jacobian. Notably these bounds depend on the ratio $\sigma_b^2/a^2$, where $a$ denotes the size of the linear region of the activation function, and approach zero as $\sigma_b^2/a^2 \rightarrow 0$. The theory presented in this paper therefore provides a rigorous explanation for the observation that networks equipped with an activation function which approximates the identity at the origin often train well (see for example \cite{hamed17}). Furthermore, our experiments demonstrate the benefits of deploying these scaled, bounded activations in practice, in terms of train and test accuracy, training time and allowing us to train networks with a much larger bias variance than was previously possible. 

The structure of this paper is as follows, in Section \ref{sec:background} we review the foundational results of prior works and summarise the contributions of this paper, in particular Theorem \ref{theorem:main}. In Section \ref{sec:theory} we derive Theorem \ref{theorem:main}, then in Section \ref{sec:experiments} we investigate the implications of our theory for training deep networks in practice.
\section{Principles for initialising deep networks}\label{sec:background}
As referenced to in Section \ref{sec:intro}, the prior works on which our results are based can be divided into two distinct themes, one concerned with the dynamics of the preactivation correlations in the forward pass \cite{Poole2016,samuel2017,hayou19a}, and the other dynamical isometry \cite{pennington18a, Saxe13, pennington17a}. We review these two themes in detail in Sections \ref{subsec:MF_forward} and \ref{subsec:MF_backward} respectively, then in Section \ref{subsec:paper_contributions} we introduce and present the key contribution of this paper, Theorem \ref{theorem:main}. Note that in what follows $Z,Z_1, Z_2 \sim \cN(0,1)$ are considered to be independent and identically distributed standard Gaussian random variables. Furthermore, the standard Gaussian measures in one and two dimensions are be denoted by $\gamma$ and $\gamma^{(2)}$ respectively.

\subsection{Dynamics of the preactivation correlations in the forward pass} \label{subsec:MF_forward}
In \cite{radford_book} it was established that the outputs of certain random, feedforward, single hidden layer neural networks converge in distribution to centred Gaussian processes as the width of the hidden layer goes to infinity. More recently in \cite{matthews2018gaussian, LeeBNSPS18}, this result was extended to random multilayer neural networks, with the preactivations at each layer being found to converge in distribution to centred Gaussian processes in the large width limit. The behaviour of a centred Gaussian process is fully described by its covariance matrix. In particular, for networks whose forward pass is described by \eqref{eq:network}, with the weights and biases at each layer $l \in [L]$ being mutually independent, centred Gaussian random variables with variances $\sigma_w^2/N_{l-1}$ and $\sigma_b^2$ respectively, and assuming that the same activation function $\phi$ is deployed at each neuron, then the kernel used to compute the entries of the covariance matrix of the Gaussian process at the $l$th layer is defined by the following recurrence relation, 
\begin{equation}\label{equation:GP_kernel}
\kappa^{(l)}(\vx^{(\alpha)}, \vx^{(\beta)}) = \sigma_w^2 \expec[\phi (z_i^{(l)}(\vx^{(\alpha)}))\phi(z_i^{(l)}(\vx^{(\beta)}))] + \sigma_b^2.
\end{equation}
Note here that $i \in [N_l]$ can be any entry of the preactivation. This recurrence relation has been highlighted a number of times in a variety of contexts, the one most relevant for this paper being a mean field approximation used to study signal propagation in the forward pass \cite{Poole2016, samuel2017,hayou19a}. In this work we do not focus on the correspondence between wide, random neural networks and Gaussian processes as in \cite{matthews2018gaussian,LeeBNSPS18}. Rather, as in \cite{Poole2016, samuel2017,hayou19a}, we adopt the Gaussian process or mean field approximation in order to better understand and analyse key statistics of the network at initialisation. In particular, and as highlighted in \cite{Poole2016}, in the infinite width limit one can interpret the variance of the preactivations of an input at a given layer, corresponding to \eqref{equation:GP_kernel} with $\alpha \neq \beta$, as the expected euclidean length of said input at said layer. Adopting the notation of \cite{Poole2016}, then the sequence of variances, or expected lengths, $( q_{\alpha}^{(l)})_{l=1}^L$, associated with an arbitrary input $\vx^{(\alpha)}$, are generated by the following recurrence relation,
\begin{equation} \label{eq:var_map}
\begin{aligned}
    & q_{\alpha}^{(l)} = V_{\phi}(q_{\alpha}^{(l-1)}):= \sigma_w^2\int_{\reals}\phi \left(\sqrt{q_{\alpha}^{(l-1)}}z \right)^2 d\gamma(z)+\sigma_b^2 
    =  \sigma_w^2 \expec \left[\phi \left(\sqrt{q_{\alpha}^{(l-1)}}Z \right)^2\right] + \sigma_b^2
\end{aligned}
\end{equation}
for $l \in [L]L$ with $q_{\alpha}^{(0)} \defeq ||\vx^{(\alpha)} ||_2^2$. As demonstrated in Figure \ref{fig:var_corr_funcs_common}(a), analysis of the variance function $V_{\phi}: \reals_{\geq 0} \rightarrow \reals_{\geq 0}$ provides insight into the impact of $(\phi,\sigma_w^2,\sigma_b^2)$ on the dynamics of the expected length of the preactivations with depth. Indeed, unless $(\phi,\sigma_w^2,\sigma_b^2)$ is chosen appropriately then $q_{\alpha}^{(l)}$ can either rapidly converge towards zero or diverge. This problem of vanishing or exploding activation lengths can be mitigated by choosing $(\phi,\sigma_w^2,\sigma_b^2)$ in order that there exists a stable fixed point $q^* = V_{\phi}(q^*)$ for which $q_{\alpha}^{(l)} \rightarrow q^*$ for all inputs $\vx^{(\alpha)}$, see for example Tanh with $\sigma_b^2=0.1$ in Figure \ref{fig:var_corr_funcs_common}(a). We remark that this problem was also studied in \cite{boris19} in the context of finite width deep ReLU networks.

\begin{figure}[ht]
	\centering
	\subfloat[]{\includegraphics[scale=.45]{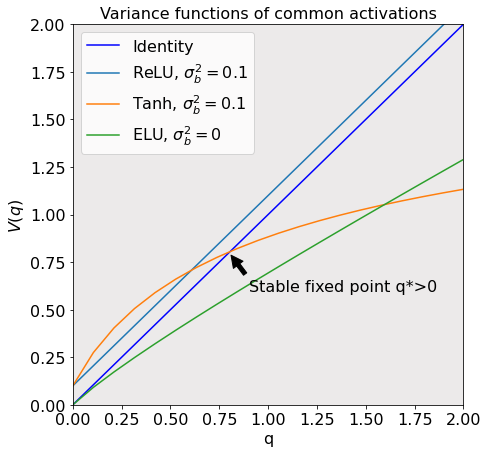}}
	\subfloat[]{\includegraphics[scale=.45]{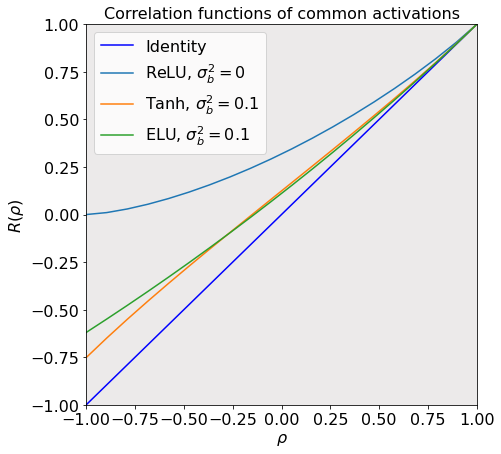}}
	\caption{\small 
	(a) Variance functions for ReLU, Tanh and ELU with $\sigma_b^2 \in \{0, 0.1\}$. With $\sigma_b^2 =0.1$ the variance function associated with ReLU has no fixed point and $q^{(l)}\rightarrow \infty$ from any initial variance $q^{(0)} \in \reals_{\geq 0}$. With $\sigma_b^2 =0$ the variance function associated with ELU has a unique and marginally stable point $q^* =0$, therefore $q^{(l)}\rightarrow 0$ from any initial variance $q^{(0)} \in \reals_{\geq 0}$. Finally, with $\sigma_b^2 =0.1$, the variance function associated with Tanh has a unique and stable fixed point $q^*>0$, therefore $q^{(l)}\rightarrow q^*$ from any initial variance $q^{(0)} \in \reals_{\geq 0}$. (b) Correlation functions associated with Relu, Tanh and ELU with $\sigma_b^2 \in \{0, 0.1\}$. For ReLU, and as illustrated by (a), for there to exist a fixed point of the variance function it is necessary that $\sigma_b^2=0$. Note that without a fixed point $q^*$ it is not possible to define a correlation function which is fixed with depth. Relative to the other activation functions considered, the correlation function of ReLU is further from the identity function, and as a result correlations converge faster with the depth. This can also be observed in Figure~\ref{fig:conv_of_corrs} in Appendix~\ref{appendix:conv_of_corrs}. Note that in both (a) and (b) $\sigma_w^2$ is chosen so that $\chi_1 =1$, see \eqref{eq:chi}.
	}
	\label{fig:var_corr_funcs_common}
\end{figure}

In \cite{Poole2016} the covariance and correlation between preactivations of different inputs was also analysed and a recurrence relation for the covariance derived, which is equivalent to \eqref{equation:GP_kernel} with $\alpha \neq \beta$. In the infinite width limit the covariance can be interpreted as the inner product between the preactivations of two inputs, and the correlation as the angle. The evolution of these quantities with depth has important implications for the performance of the network at initialisation as well as subsequent training. Denoting the covariance between the preactivations of $\vx^{(\alpha)}$ and $\vx^{(\beta)}$ at the $l$th layer as $q_{\alpha \beta}^{(l)}$, with respective variances $q_{\alpha}^{(l)}$ and $q_{\beta}^{(l)}$, then it was shown in \cite{Poole2016} that
\begin{equation} \label{eq:covariance_function}
\begin{aligned}
    & q_{\alpha \beta}^{(l)}=  \sigma_w^2 \int_{\reals^2} \phi(u_1)\phi(u_2)d \gamma^{(2)}(z_1, z_2) + \sigma_b^2, \\ 
    & u_1 := \sqrt{q_{\alpha}^{(l-1)}} z_1,\; u_2 := \sqrt{q_{\beta}^{(l-1)}}(\rho_{\alpha \beta}^{(l-1)}z_1 + \sqrt{1-(\rho_{\alpha \beta}^{(l-1)})^2} z_2),
\end{aligned}
\end{equation}
where $\rho_{\alpha \beta}^{(l-1)} = q_{\alpha \beta}^{(l-1)}/(q_{\alpha}^{(l-1)}q_{\beta}^{(l-1)})$ is the correlation at the previous layer. If the variance function defined in \eqref{eq:var_map} has a fixed point $q^*>0$ then an additional key benefit beyond that already discussed is that the analysis of the evolution of the covariance and correlation with depth can be simplified. In particular, assuming that the inputs are normalised so that $q_{\alpha}^{(0)} = q_{\beta}^{(0)} = q^*>0$, then $q_{\alpha}^{(l)} = q_{\beta}^{(l)} = q^*$ for all $l \in [L]$. In this setting the sequence of correlations $(\rho_{\alpha \beta}^{(l)})_{l=1}^L$ are generated by the following recurrence relation,
\begin{equation} \label{eq:corr_map}
\begin{aligned}
    & \rho_{\alpha \beta}^{(l)} = R_{\phi,q^*}(\rho_{\alpha\beta}^{(l-1)}):=
     \frac{q_{\alpha \beta}^{(l)}}{q^*}=  \frac{\sigma_w^2}{q^*} \int_{\reals}\phi(u_1)\phi(u_2)d \gamma^{(2)}(z_1, z_2) + \frac{\sigma_b^2}{q^*} = \frac{\sigma_w^2}{q^*}\expec[\phi(U_1) \phi(U_2)] + \frac{\sigma_b^2}{q^*}.
\end{aligned}
\end{equation}
Here we refer to $R_{\phi, q^*}: [-1,1] \rightarrow [-1,1]$ as the correlation function, and $U_1 := \sqrt{q^*} Z_1$, $U_2 := \sqrt{q^*}(\rho Z_1 + \sqrt{1- \rho^2}Z_2)$, where $\rho$ is the input argument  $R_{\phi, q^*}(\rho)$, are dependent Gaussian random variables with $U_1, U_2  \sim \cN(0, q^*)$.

Analysing the univariate recurrence relation $\rho^{(l)} = R_{\phi, q^*}(\rho^{(l-1)})$ allows for the identification of both depth limits, beyond which information cannot propagate, as well as issues around stability, in a manner analogous to that of the dynamical systems perspective given in \cite{Haber_2017}. In order to appreciate these points, observe, as in \cite{Poole2016}, that by construction as long as the correlation function is well defined on $[-1,1]$ then one is always a fixed point, 
\[
\begin{aligned}
R_{\phi,q^*}(1) = \frac{\sigma_w^2}{q^*}\expec[\phi(U_1)^2] + \frac{\sigma_b^2}{q^*}
 = \frac{V(q^*)}{q^*}
=1.
\end{aligned}
\]
The general shape of the correlation function, as well as the stability of the fixed point at one and the existence of other fixed points, is determined by the choice of $(\phi,\sigma_w^2,\sigma_b^2)$. This can be observed in Figure \ref{fig:var_corr_funcs_common}(b), in which the correlation functions of three commonly used activation functions are plotted side by side for comparison. Figure \ref{fig:eoc_curves}(a) shows the correlation function for three different values of $\sigma_w^2$ with $\sigma_b^2=0.1$ and $\phi(z) = \text{htanh}(z)$ fixed. For $\sigma_w^2=0.63$ and $\sigma_w^2=1.26$ it is clear from Figure \ref{fig:eoc_curves}(a) that $\rho^*=1$ is the only positive fixed point. Furthermore, for these two choices of $\sigma_w^2$ it is also clear, by inspection, that $\rho^* = 1$ is a stable and marginally stable fixed point respectively, and that from any initial correlation the sequence of correlations $\rho^{(l)} \rightarrow 1$ with depth. As a result, in the infinite width limit any pair of inputs, no matter how large the angle between them is, are mapped to the same point by the network asymptotically with depth. However, for $\sigma_w^2 =0.63$ a new, stable fixed point $\rho^*<1$ is introduced. In this case, then for any initial correlation the sequence $\rho^{(l)}$ converges to the limit $\rho^*<1$. Therefore, in the infinite width limit any pair of inputs, no matter how small the initial angle between them is, are mapped to different points by the network asymptotically with depth. Figure \ref{fig:eoc_curves}(a) therefore illustrates the existence of two very distinct regimes for a given choice of activation function, one ordered, in which all inputs are mapped asymptotically to the same output, and the other chaotic, in which even arbitrarily small perturbations of an input lead to a different output. Figure \ref{fig:eoc_curves}(b) illustrates the limiting $\rho^{*}$ throughout the $(\sigma_w,\sigma_b)$ plane for $\phi=\htanh(\cdot)$.

\begin{figure}[ht]
	\centering
	\subfloat[]{\includegraphics[scale=.45]{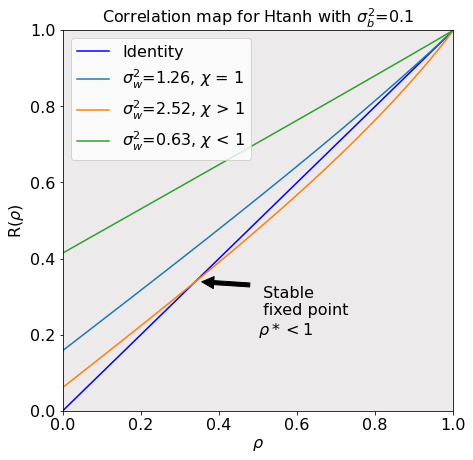}}
	\subfloat[]{\includegraphics[scale=.45]{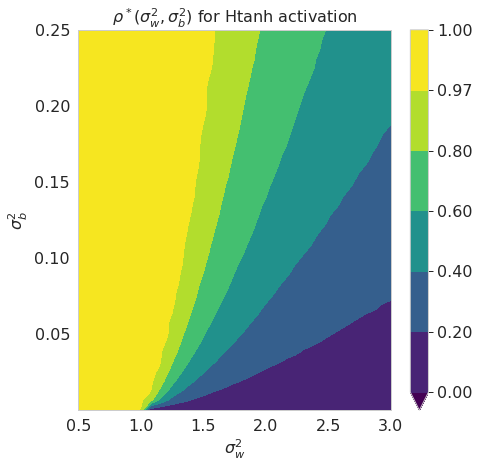}} \\
	\subfloat[]{\includegraphics[scale=.45]{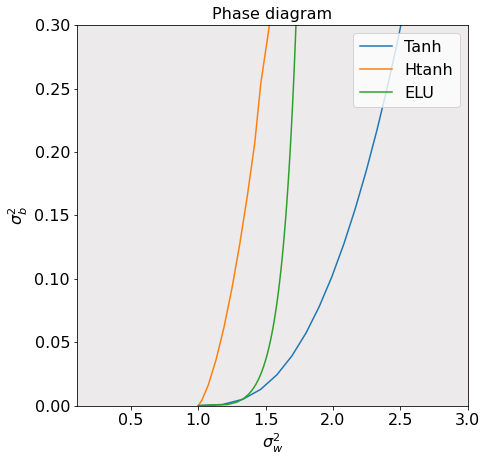}}
	\subfloat[]{\includegraphics[scale=.45]{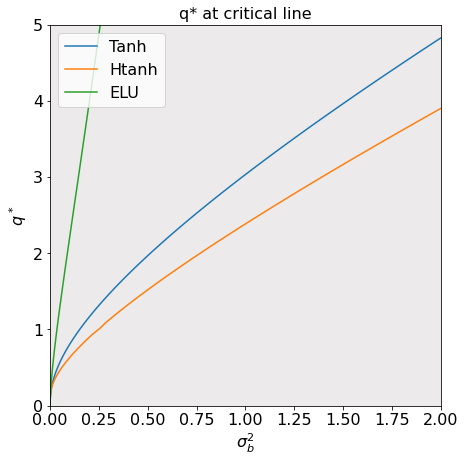}}
	\caption{\small 
	(a) Correlation map for htanh activation function, plotted on the interval $[0,1]$ to better display fixed point behaviour (there are no fixed points in the interval $[-1,0]$). When $\chi \leq 1$ then one is the unique fixed point of the correlation function. Furthermore, for $\chi < 1$ then one is a stable fixed point, and for $\chi = 1$ then one is a marginally stable fixed point. When $\chi > 1$ then although one is still a fixed point it is unstable, and a new stable fixed point less than one is introduced.
	(b) Stable fixed point $\rho^*$ as a function of $(\sigma_w^2,\sigma_b^2)$ for the tanh activation function.
	(c) The EOC critical line $\chi=1$ for different activation functions. 
	(d) $\sigma_w^2$ as a function of $q^*$, as $\sigma_b^2 \rightarrow 0$ then $q^*\rightarrow 0$.
	}
	\label{fig:eoc_curves}
\end{figure}

In order to understand and separate these two regions \cite{Poole2016,samuel2017} studied the slope of $R_{\phi,q^*}(\rho)$ at $\rho=1$: for a sufficiently smooth activation function $\phi$ the derivative is given by
\begin{equation} \label{eq:corr_diff_map}
    R_{\phi,q^*}'(\rho) = \sigma_w^2\expec[\phi'(U_1) \phi'(U_2)],
\end{equation}
the slope at $\rho=1$ is therefore
\begin{equation} \label{eq:chi}
    \chi_1:=R_{\phi,q^*}'(1) = \sigma_w^2\expec[\phi'(U_1)^2].
\end{equation}
The set of pairs $(\sigma_w^2, \sigma_b^2) \in \reals_{\geq0} \times \reals_{\geq 0}$ such that $\chi_1 = 1$ is colloquially referred to in the literature as the edge of chaos (EOC) and in Figure \ref{fig:eoc_curves}(c)  this set, or curve, is plotted for three different activation functions. In \cite{samuel2017} it was proved that if $(\sigma_w^2,\sigma_b^2)$ does not lie on the EOC then the sequences generated by the variance and correlation functions, \eqref{eq:var_map} and \eqref{eq:corr_map} respectively, approach their respective fixed points asymptotically exponentially fast. As a result, the EOC curve can be interpreted as a transition boundary between order, corresponding to $\chi_1 < 1$, in which pairwise correlations converge asymptotically exponentially fast to $1$ with depth, and chaos, corresponding to $\chi_1 > 1$, in which even pairwise input correlations that are arbitrarily close to $1$ diverge asymptotically exponentially fast with depth. The dynamics of the preactivation correlations can also be understood and interpreted in terms of a network's sensitivity to perturbations of the input, with networks initialised in the chaotic regime carrying the risk of being overly sensitive, and those initialised in the ordered regime being insensitive. These observations are also of practical relevance for training, indeed experimental evidence \cite{samuel2017} illustrates that initialising closer to the EOC consistently results in reduced training times.

To recap the discussion presented so far, rapid convergence in correlations to a fixed point appears to result in poor training outcomes, with the network being either highly sensitive or insensitive to perturbations of the input. To avoid asymptotically exponentially fast convergence it is necessary to choose $(\sigma_w^2, \sigma_b^2)$ so that $\chi_1 = 1$. The specific asymptotic rate however depends on the activation function deployed. In \cite{hayou19a}, and under the assumption that initialisation is on the EOC, the asymptotic convergence rate of the sequence of correlations for different families of activation functions was analysed. A key finding of this work is that while ReLU like activation functions have asymptotic convergence $|1-\rho^{(l)}|=\cO(l^{-2})$, a broad class of smooth activation functions, including, tanh, elu and swish, have convergence $|1-\rho^{(l)}|=\cO(l^{-1})$. However, it seems highly reasonable that avoiding fast non-asymptotic, i.e., away for $\rho^*=1$, convergence of correlations is also potentially of value. Figure \ref{fig:var_corr_funcs_common}(b) illustrates, assuming initialisation on the EOC, that certain combinations of $(\phi, \sigma_b^2)$ can result in a correlation function which is closer to the identity function. This in turn implies a slower convergence of the correlation sequence throughout the layers, rather than just at the depth limit. This idea was promoted in \cite[Appendix B.2]{hayou19a}, where it was suggested that one should choose a small $\sigma_b^2$ and an activation function which is approximately linear, for example, one which is the weighted sum of a linear and nonlinear activation function. 


\subsection{Avoiding vanishing and exploding gradients via dynamical isometry} \label{subsec:MF_backward}
A key ingredient behind the success of neural networks is that their modular, compositional structure enables easy computation of the gradients of the network output with respect to the model parameters. The computation of these gradients can efficiently be performed using the backpropagation algorithm \cite{backprop}. Backpropagation, also referred to as the backward pass of the network, generates a sequence of vectors $(\delta^{(l)}(\vx))_{l=1}^{L}$ at each layer with respect to an input $\vx \in \reals^N$, which measure the error associated with minimising the loss $\cL(\theta,x)$. Here $\theta$ denotes the trainable network parameters, $\theta \defeq \bigcup_{l=1}^L \{ \mW^{(l)}\} \cup \{ \vb ^{(l)}\}$.  For a suitably smooth loss function $\cL(\theta,x)$, the backpropagation error vector $\delta^{(l)}(\vx)$, and the gradient of $\cL(\theta,x)$ with respect to biases $b_j^{(l)}$ and weights $w^{(l)}_{j,i}$, at each layer $l \in [L]$ can be computed using the following recurrence equations,
\begin{equation}\label{eq:backprop}
\begin{aligned}
\delta^{(L)}(\vx) &= \mathbf{D}^{(L)}(\vx)\nabla_{\vh^{(L)}} \cL(\theta, \vx),\\
\delta^{(l)}(\vx) &= \left(\mathbf{D}^{(l)}(\vx) \mW^{(l+1)} \right)^T \delta^{(l+1)}(\vx),\\
\frac{\partial \cL(\theta, \vx)}{\partial b_{j}^{(l)}} &= \delta_j^{(l)}(\vx),\\
\frac{\partial \cL(\theta, \vx)}{\partial w_{j,i}^{(l)}} &= h_i^{(l-1)}(\vx)\delta_j^{(l)}(\vx).
\end{aligned}
\end{equation}
Here, for each $l \in [L]$, $\mathbf{D}^{(l)}(\vx)$ is a diagonal matrix with $D_{ii}^{(l)}(\vx)\defeq \phi^{'}(z_i^{(l)}(\vx))$ for all $i \in [N_l]$.  Consequently, the weights and biases of the network can be iteratively updated via gradient descent type algorithms, or some variant thereof, in order that the forward pass improves its performance, as measured typically by averaging $\cL(\theta,\vx)$ over batches of the training data, at executing the task in hand.  We emphasise that in practice, instead of updating the parameters with respect to a single data point, gradients are calculated and averaged over batches of the training data. It is important therefore not to confuse the input-output Jacobian of the network, which we will discuss presently, with the Jacobian of the loss surface over the training data, or batches of the training data.

We now turn our attention to reviewing the findings of the literature concerning the propagation of $\delta^{(l)}$ in the backwards pass. Analogous to the recurrence relation for the variance function \eqref{eq:var_map} in the forward pass, in \cite{samuel2017} a mean field approximation for $\delta^{(l)}$ was developed. Specifically, the recurrence relation for the variance, $\tilde{q}^l\defeq \mathbb{E}[(\delta_i^l)^2]$, of entries of the error vectors $\delta_i^l$ was shown to be
\begin{equation} \label{eq:var_error_signals}
    \tilde{q}^l=\frac{N^{l+1}}{N^l} \tilde{q}^{l+1} \chi_1
\end{equation}
with $\chi_1$ defined as in \eqref{eq:chi}.  Although certain assumptions needed for this approximation are undesirable, notably the weights used during forward propagation are drawn independently from the weights used in backpropagation, empirical evidence indicates that it is nonetheless a useful model \cite{samuel2017}. In particular, unlike the analysis of the gradients used to update the network parameters. The reappearance of $\chi_1$ in \eqref{eq:var_error_signals} shows the benefit of selecting $(\phi,\sigma_w^2,\sigma_b^2)$ such that  $\chi=1$, as this ensures, at least in expectation, that the magnitudes of the error vectors are stable during the backward pass, i.e., they neither converge to zero or diverge.

Unfortunately, initialising on the EOC suffices only to ensure that vanishing and exploding gradients are avoided on average, and in practice does not guarantee good training performance. In particular, if, as it propagates backwards, an error vector is projected onto a smaller and smaller subspace then the directions in which the parameters in the lower layers can be updated becomes limited, thereby resulting in model degeneracy \cite{Saxe13,pennington17a}. In order to improve training, in \cite{Saxe13,pennington17a} the authors proposed that the product of Jacobians at each layer should act as an isometry on as large a subspace as possible, implying that the singular values of the input-output Jacobian at any layer should concentrate around one. As in \cite{pennington18a,pennington17a}, we denote the input-output Jacobian of the network as $\mathbf{J}: \reals^{N_0} \rightarrow \in\mathbb{R}^{N_0\times N_L}$,
\begin{equation} \label{eq:jacobian}
    \mathbf{J}(\vx) = \prod_{l=1}^L \mathbf{D}^{(l)}(\vx)\mathbf{W}^{(l)}.
\end{equation}
Comparing \eqref{eq:jacobian} with \eqref{eq:backprop}, then the Jacobian of $\vx$ can be viewed as the transpose of the product of linear backpropagation operators used to compute the error vectors associated with $\vx$ at each layer. In \cite{pennington18a,pennington17a} the authors considered two initialisation schemes, in both the biases $\mathbf{b}^{(l)}_i$ at each layer are mutually independent and identically distributed Gaussians with mean $0$ and variance $\sigma_b^2$. The weight matrices at each layer however either have mutually independent identically distributed Gaussian entries with mean 0 and variance $\sigma_w^2/N_{l-1}$, or are drawn from a uniform distribution over scaled orthogonal matrices such that $(\mathbf{W}^{(l)})^T\mathbf{W}^{(l)} = \sigma_w^2 \mathbf{I}$. To analyse the Jacobian of the network at an arbitrary point $\vx$, the authors considered again the large width setting, in which, as per the discussion in Section \ref{subsec:MF_forward}, the preactivations at layer $l$ layer can be modelled as mutually independent, identically distributed, centred Gaussian random variables with variance $q^{(l)}$ \footnote{Note that \eqref{eq:var_map} was derived in the context of Gaussian initialisation, for now we also assume it holds true also for the Orthogonal case.}. For both of the initialisation schemes described, the limiting spectral density of $\mathbf{J}(\vx)$, in terms of its moment generating transform
\begin{equation}
     M_{JJ^T}(z)=\sum_{k=1}^{\infty} \frac{m_k}{z^k},
    \label{eq6}
\end{equation}
was computed and analysed using tools from free probability \cite{pennington18a, pennington17a,pennington17}. We refer the reader to Appendix 6 of \cite{pennington18a} for further details. In the context of ensuring dynamical isometry, then of particular interest are the first and second moments, which can be expressed in terms of the moments $\mu_k$ of $\mathbf{D}^2$ and $s_k$ of $(\mathbf{W}^T\mathbf{W})$. Here we drop the dependence on the layer index $l$ by assuming that the condition $q^{(0)}=q^*$ holds true for all inputs $\vx$. Adopting again the notation used in \cite{pennington18a},
\begin{equation} \label{eq:moments}
\begin{aligned}
     \mu_k &:= \int \phi'(\sqrt{q^*z})^{2k} d \gamma(z)= \expec[ \phi'(\sqrt{q^*})Z)^{2k}],\\
     m_1 &:=(\sigma_w^2\mu_1)^L=(\chi_1)^L,\\
     m_2 &:=(\chi_1)^{2L}L\Big(\frac{\mu_1}{\mu_2} +\frac{1}{L} -1-s_1\Big).
\end{aligned} 
\end{equation}
It is evident from  \eqref{eq:moments} that the mean squared singular value $m_1$ of the Jacobian either exponentially explodes or vanishes unless the network is initialised on the EOC. However, although initialisation on the EOC makes $m_1$ independent of depth, the variance \begin{equation} \label{eq:jacob_var}
     \sigma_{{JJ}^T}^2=m_2-m_1^2=L\Big(\frac{\mu_2}{\mu_1^2} -1-s_1\Big)
\end{equation}
grows linearly with depth. As a result, under the limiting width assumption then for an arbitrary, normalised input vector initialisation on the EOC may still result in an increasingly ill-conditioned Jacobian with depth $L$.  Indeed, we require that $m_1=1$ and $\sigma_{{JJ}^T}^2\approx 0$ so as to at least approximately achieve dynamical isometry. Equation (\ref{eq:jacob_var}) shows that using a Gaussian initialisation scheme, corresponding to $s_1 = -1$, results in a linear growth in $\sigma_{JJ^T}^2$ with depth regardless of the activation function used. As a result, deep, dense feed-forward networks cannot achieve dynamical isometry using Gaussian initialisation. In contrast, orthogonal initialisation, corresponding to $s_1 = 0$, can in theory be used to ensure that $\sigma_{JJ^T}^2$ is arbitrarily close to one by controlling the moment ratio $\mu_2 / \mu_1^2$. However, the ability to control the moment ratio depends on the activation function deployed: in the case of ReLU for instance, whose EOC is a singleton, the moment ratio is a constant and hence once again dynamical isometry cannot be achieved. As highlighted in \cite{pennington18a}, for certain activation functions, e.g., erf and tanh, which have a non-singleton EOC as illustrated by Figure \ref{fig:eoc_curves}(c), then the moment ratio can be reduced by shrinking $q^*$, which in turn can be achieved by reducing $\sigma_b^2$, see Figure \ref{fig:eoc_curves}
(d). Finally we remark that similar analyses were conducted in \cite{NEURIPS2018_13f9896d} in the context of finite width deep ReLU networks.

\subsection{Contribution: activations which approximately preserve correlations and achieve dynamical isometry} \label{subsec:paper_contributions} 
To recap, in Section \ref{subsec:MF_forward} we discussed how, in order to achieve deep information propagation and avoid the network being either highly sensitive or insensitive to perturbations of the input, it is important to avoid a rapid rate of convergence of the correlation. To this end it is necessary to choose $(\phi, \sigma_w^2, \sigma_b^2)$ so that $\chi_1 =1$ and the associated correlation function is close to identity. In Section \ref{subsec:MF_backward} we discussed the problem of vanishing and exploding gradients. Prior works suggest that this problem can be avoided if the singular values of the input-output Jacobian at each layer concentrate around 1. The condition $\chi_1=1$ is equivalent to ensuring that the mean of the input-output Jacobian's spectrum at any layer is one. Therefore, if the variance $\sigma_{JJ^T}^2$ of the spectrum, defined in (\ref{eq:jacob_var}), is close to zero, then the spectrum of the input-output Jacobian is guaranteed to concentrate around one. Furthermore, with orthogonal initialisation on the EOC then so long as $\mu_2/\mu_1^2 \approx 1$ then $\sigma_{JJ^T}^2 \approx 0$.

In this paper we present principles for choosing the activation function $\phi$, see Definition \ref{def:Omega}, and an additional linear scaling parameter which simultaneously enables uniform convergence of the correlation function $R_{\phi,q^*}(\rho)$ to the identity map, and convergence of the variance $\sigma_{JJ^T}^2$ of the input-output Jacobian's spectrum to zero, without requiring $\sigma_b^2$ to shrink towards zero. For further details we refer the reader to Theorem \ref{theorem:main}. From prior work it was unclear how one could achieve this. For example, initialising on the EOC, using orthogonal initialisation and deploying the modulus activation function achieves perfect dynamical isometry, but results in rapid convergence of correlations in the forward pass, leading to poor training outcomes. Treated separately, a key theme that emerges in the prior works towards achieving both goals is that of reducing $\sigma_b^2$ in order to make $q^*$ small. As mentioned briefly in section \ref{subsec:MF_forward}, in \cite{hayou19a} the authors highlight that, for a smooth class of activation functions, taking this action can reduce the gap between the correlation map and the identity. The authors also provide a rule for selecting $\sigma_b^2$ just small enough so that pairwise correlations avoid being within some $\epsilon>0$ of one at the output layer. Furthermore, experiments conducted in \cite{hayou19a} indicate that selecting the variance hyperparameters in this manner accelerates training in practice. However, the authors also highlight that as the depth of the network increases this rule still requires $\sigma_b^2 \rightarrow 0$. Likewise, and as mentioned in section \ref{subsec:MF_backward}, in \cite{pennington18a} the authors show, for the htanh and erf activation functions, that reducing $q^*$ by shrinking $\sigma_b^2$ results in the moment ratio converging towards one. However, this solution is not entirely satisfactory, first if the expected length of the activations goes to zero then asymptotically the network will on average annihilate inputs and fail to pass information to the output. Second, experimental evidence indicates that a small, but not overly small $\sigma_b^2$, gives the best results in practice, which we demonstrate in Section \ref{sec:experiments}. Third, this solution relies only on the choice of $\sigma_b^2$ and ignores the role and design of $\phi$. Our work seeks to address this issue by achieving both goals through the design of $\phi$, even when $\sigma_b^2$ and $q^*$ are fixed away from zero.

In light of the above, and assuming $\sigma_b^2$ is fixed and that $\sigma_w^2$ is chosen in order that $\chi_1 = 1$,  or goal is to design or choose $\phi$ in order that both $\max_{\rho \in [-1,1]}|R_{\phi, q^*}(\rho)-\rho|\approx 0$ and $|\mu_2/\mu_1^2-1| \approx 0$. As highlighted in \cite{hayou19a}, linear activation functions are advantageous from the perspective of slowing the convergence of the pairwise correlations in the forward pass. Likewise, \cite{pennington17, Saxe13} highlight that linear networks with orthogonal initialisation achieve perfect dynamical isometry. However, linear activations are not a suitable option as a linear network can only represent linear functions. To preserve the rich approximation capabilities of nonlinear networks we analyse a set of activation functions characterised by being odd, bounded, Lipschitz continuous and linear around the origin. We refer to activation functions of this type, defined in Definition \ref{def:Omega}, as \textit{scaled-bounded activations}.
\begin{definition}[\textbf{scaled-bounded activations}]\label{def:Omega}
    We refer to the set of activation functions $\phi: \reals \rightarrow \reals$ which satisfy the following properties as scaled-bounded activations.
    \begin{enumerate}
        \item Continuous.
        \item Odd, meaning that $\phi(z) = -\phi(-z)$ for all $z \in \reals$.
        \item Linear around the origin and bounded: in particular there exists $a,k \in \reals_{>0}$ such that $\phi(z) = kz$ for all $z \in [-a,a]$ and  $\phi(z) \leq ak$ for all $z \in \reals$.
        \item Twice differentiable at all points $z \in \reals \backslash \cD$, where $\cD \subset \reals$ is a finite set. Furthermore $|\phi'(z)| \leq k$ for all $z \in \reals \backslash \cD$.
    \end{enumerate}
\end{definition}    
As illustrated in Figure \ref{fig:omega_examples}, the structure of scaled-bounded activation functions outside of their linear region $[-a,a]$ can vary substantially. We note that for any scaled-bounded activation $\phi$ there exists $a,k \in \reals_{>0}$ such that $|\phi(z)| \leq |\text{Shtanh}_{a,k}(z)|$, where
\begin{equation} \label{eq:shtanh}
\text{Shtanh}_{a,k}(z)= 
\begin{cases}
zk, \;\; |z|<a\\
ak, \;\; |z|\geq a.
\end{cases}
\end{equation}
Here Shtanh stands for scaled-bounded hard tanh, and we further note that $\text{Shtanh}_{a,k}(\cdot)$ is a scaled-bounded activation for any $a,k \in \reals_{>0}$. Our main contribution is Theorem \ref{theorem:main}, which asserts, for scaled-bounded activations, that the correlation function $R_{\phi,q^*}$ can be made arbitrarily close to the identity on $[0,1]$, and the moment ratio $\mu_2/\mu_1^2$ arbitrarily close to one, by choosing $\sigma_b^2/a^2$ to be small. 

\begin{figure}[!ht]
   \begin{minipage}{\textwidth}
     \centering
     \includegraphics[scale=0.45]{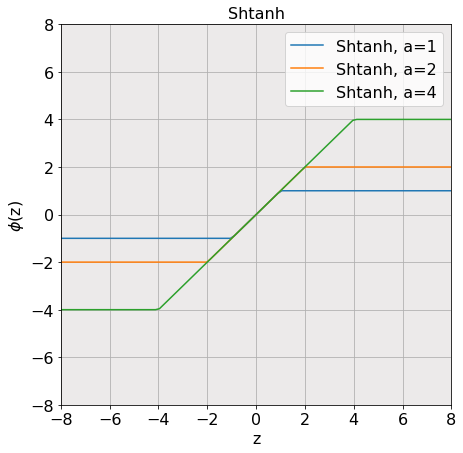}
     \includegraphics[scale=0.45]{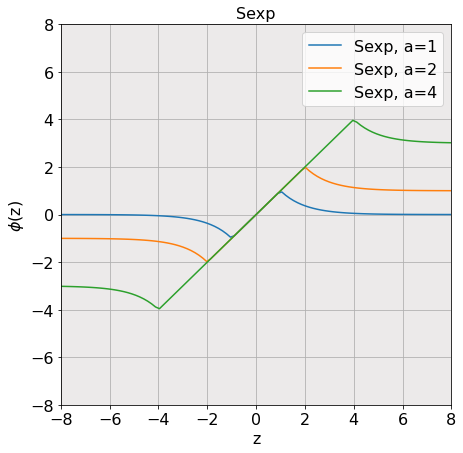}
     \includegraphics[scale=0.45]{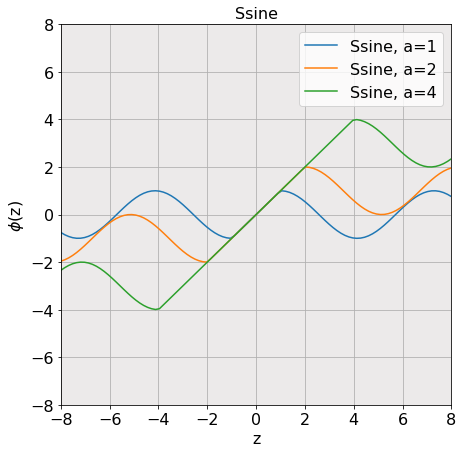}
     \includegraphics[scale=0.45]{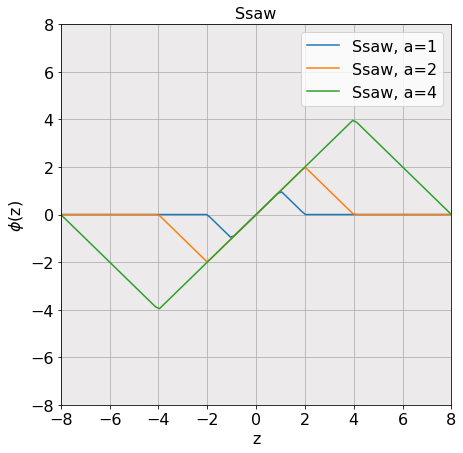}
   \end{minipage}\\[1em]
   \caption{examples of scaled-bounded activation functions with $k=1$ and $a \in \{1,2,4\}$. The prefix S refers to the scaling of the linear region via the parameter $a$.}\label{fig:omega_examples}
\end{figure}

\begin{theorem} \label{theorem:main}
Let $\phi$ be a scaled-bounded activation, see Definition \ref{def:Omega}, $\sigma_b^2>0$ and suppose that
    \[
        \chi_1 \defeq \sigma_w^2\expec[\phi'(\sqrt{q^*}Z)^2] = 1,
    \]
where $q^*>0$ is a fixed point of the associated variance function $V_{\phi}$. In addition, assume that all inputs $\vx$ are normalised so that $||\vx||_2^2 = q^*$. With $y \defeq \frac{\sigma_b^2}{a^2}$ and $\Lambda$ defined as in Lemma \ref{lemma:a_q_ratio_bound}, then
\begin{equation} \label{eq_class_corr_bound}
      \max_{\rho \in [0,1]}|R_{\phi}(\rho) - \rho | < \left(\frac{8}{\pi}\right)^{1/3} y^{1/3}
\end{equation}
and
\begin{equation} \label{eq_class_DI_bound}
\left| \frac{\mu_2}{\mu_1^2} - 1 \right| \leq \text{erf}\left( \frac{\Lambda(y)}{\sqrt{2}}\right)^{-2} - 1.
\end{equation}
\end{theorem}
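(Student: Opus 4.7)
The plan is first to localize the supremum to a single point and then bound that point in terms of the tail index $t := a/\sqrt{q^*}$. I reduce to $\rho=0$ using Mehler's formula: because $\phi$ is odd, the rescaled activation $\tilde\phi(x) := \phi(\sqrt{q^*}x)$ has Hermite coefficients $\tilde\phi_n$ vanishing for even $n$, so $\expec[\phi(U_1)\phi(U_2)] = \sum_{n \text{ odd}} \tilde\phi_n^2 \rho^n$. Hence $R_{\phi,q^*}'(\rho)$ is a series in the even powers $\rho^{n-1}$ with nonnegative coefficients; it is nondecreasing on $[0,1]$ with value $\chi_1 = 1$ at the endpoint, so $R_{\phi,q^*}(\rho) - \rho$ is nonincreasing on $[0,1]$. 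Its maximum is at $\rho=0$, where the Hermite series vanishes, giving $\max_{\rho \in [0,1]} |R_{\phi,q^*}(\rho) - \rho| = R_{\phi,q^*}(0) = \sigma_b^2/q^*$.

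To bound $\sigma_b^2/q^*$ I exploit the linear region. Splitting $\expec[\phi(\sqrt{q^*}Z)^2]$ over the event $|Z|\leq t$ gives $\expec[\phi(\sqrt{q^*}Z)^2] \geq k^2 q^* T(t)$ with $T(t) := \expec[Z^2 \mathbb{1}_{|Z|\leq t}] = \text{erf}(t/\sqrt{2}) - \sqrt{2/\pi}\, t\, e^{-t^2/2}$; the global bound $|\phi'|\leq k$ combined with $\chi_1=1$ forces $\sigma_w^2 k^2 \geq 1$. Substituting into $q^* = \sigma_w^2 \expec[\phi(\sqrt{q^*}Z)^2] + \sigma_b^2$ gives $q^*(1-T(t)) \geq \sigma_b^2$, i.e. $R_{\phi,q^*}(0) \leq 1 - T(t) = \text{erfc}(t/\sqrt{2}) + \sqrt{2/\pi}\, t\, e^{-t^2/2}$. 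Mills' inequality $\text{erfc}(t/\sqrt{2}) \leq \sqrt{2/\pi}\, e^{-t^2/2}/t$ together with $1/t \leq t$ for $t \geq 1$ then yields the clean estimate $1 - T(t) \leq \sqrt{8/\pi}\, t\, e^{-t^2/2}$.

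The cube-root estimate comes from combining this bound with the algebraic identity $R_{\phi,q^*}(0) = \sigma_b^2/q^* = y t^2$. For $t \geq 1$, $yt^2 \leq \sqrt{8/\pi}\, t\, e^{-t^2/2}$ gives $yt \leq \sqrt{8/\pi}\, e^{-t^2/2}$; squaring and multiplying by $t^4$ produces $y^2 t^6 \leq (8/\pi)\, t^4 e^{-t^2}$, and since $t^4 e^{-t^2} \leq 4/e^2 < 1$, I obtain $y^2 t^6 < 8/\pi$. Hence $R_{\phi,q^*}(0)^3 = y\cdot(y^2 t^6) < (8/\pi)y$, delivering \eqref{eq_class_corr_bound}. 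The remaining regime $t < 1$ yields $R_{\phi,q^*}(0) = yt^2 \leq y$, which is dominated by $(8/\pi)^{1/3} y^{1/3}$ whenever $y \leq \sqrt{8/\pi}$, and the universal estimate $R_{\phi,q^*}(0) \leq 1$ closes the case $y > \sqrt{8/\pi}$ since then $(8/\pi)^{1/3} y^{1/3} > 1$.

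The moment-ratio bound \eqref{eq_class_DI_bound} is comparatively direct: restricting to the linear region gives $\mu_1 \geq k^2 \text{erf}(t/\sqrt{2})$, while $|\phi'|\leq k$ supplies $\mu_2 \leq k^4$, so $\mu_2/\mu_1^2 \leq \text{erf}(t/\sqrt{2})^{-2}$; Cauchy-Schwarz forces $\mu_2/\mu_1^2 \geq 1$, so the absolute value equals the difference. Lemma \ref{lemma:a_q_ratio_bound} provides the lower bound $t \geq \Lambda(y)$, obtained by plugging the complementary upper bound $\expec[\phi(\sqrt{q^*}Z)^2] \leq k^2 q^* T(t) + a^2 k^2 \text{erfc}(t/\sqrt{2})$ together with $\sigma_w^2 k^2 \leq 1/\text{erf}(t/\sqrt{2})$ into the fixed-point equation and rearranging to a Mills-gap inequality that implicitly defines $\Lambda$; monotonicity of $\text{erf}$ then closes the bound. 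The main technical obstacle in the plan is the interpolation in the third paragraph: neither the exponential decay of $1-T(t)$ nor the quadratic relation $R_{\phi,q^*}(0) = yt^2$ is alone sufficient, and the $y^{1/3}$ rate with constant $(8/\pi)^{1/3}$ emerges only after the two are combined through the composite quantity $y^2 t^6$.
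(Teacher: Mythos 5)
Your proposal is correct, and it genuinely differs from the paper's proof in how the correlation bound is obtained. The paper concentrates all the analytic work in two lemmas: Lemma~\ref{lemma:omega_corr_diff} (differentiability of $R_{\phi,q^*}$, proved via a delicate Gaussian integration by parts for piecewise-$C^1$ functions) feeding into Lemma~\ref{lemma:bound_from_identity} ($\max_{\rho\in[0,1]}|R_{\phi,q^*}(\rho)-\rho|=\sigma_b^2/q^*$ via Cauchy--Schwarz and the fundamental theorem of calculus), and Lemma~\ref{lemma:a_q_ratio_bound}, whose upper bound $a/\sqrt{q^*}<(8/\pi)^{1/6}y^{-1/3}$ is simply squared to give \eqref{eq_class_corr_bound}. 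You reach the same identity $\max_{\rho\in[0,1]}|R_{\phi,q^*}(\rho)-\rho|=\sigma_b^2/q^*$ through the Hermite/Mehler expansion (oddness kills $\tilde\phi_0$, nonnegative squared coefficients plus $R_{\phi,q^*}'(1)=\chi_1=1$ give $R'\le 1$ on $[0,1]$); this is a legitimate and shorter route, though note you implicitly use $R_{\phi,q^*}(1)=1$ (the fixed-point property) to keep $R(\rho)-\rho$ nonnegative, and the identity $\sum_n n\tilde\phi_n^2=q^*\expec[\phi'(\sqrt{q^*}Z)^2]$ needs the standard integration-by-parts relation for Lipschitz, merely a.e.-differentiable $\phi$ --- true here, but worth stating. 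You then bypass the upper-bound half of Lemma~\ref{lemma:a_q_ratio_bound} entirely, re-deriving $\sigma_b^2/q^*<(8/\pi)^{1/3}y^{1/3}$ directly from the fixed-point inequality $\sigma_b^2/q^*\le 1-T(t)$, Mills' ratio, and the case split $t\ge 1$ (using $t^4e^{-t^2}\le 4/e^2$) versus $t<1$ (using $yt^2<y$ or the trivial bound $\sigma_b^2/q^*\le 1$); the computations check out, including strictness, and give the same constant. For \eqref{eq_class_DI_bound} your argument matches the paper's ($\mu_1\ge k^2\,\text{erf}(t/\sqrt2)$, $\mu_2\le k^4$, then the lemma's lower bound $t>\Lambda(y)$ and monotonicity of $\text{erf}$), except that you handle the absolute value via $\mu_2\ge\mu_1^2$ (Jensen/Cauchy--Schwarz), which is cleaner than the paper's contradiction argument showing $\text{erf}^{-2}-1\ge 1-\text{erf}$. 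Net effect: the paper's theorem proof is three lines because the lemmas carry the load; your version is more self-contained and arguably more transparent, at the cost of the (standard) $L^2(\gamma)$ justification of the Hermite step.
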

We emphasise that as $y\defeq \sigma_b^2 / a^2 \rightarrow 0$ then both
\[
\max_{\rho \in [0,1]}|R_{\phi,q^*}(\rho) - \rho |, \left|\mu_2/\mu_1^2 - 1 \right| \rightarrow 0.
\]

We first remark that the uniform bound on the correlation provided by Theorem \ref{theorem:main} is only for nonnegative correlations. However, Figure \ref{fig:corr_comp} indicates that letting $\sigma_b^2/a^2 \rightarrow 0$ results $\max_{\rho \in [-1,1]}|R_{\phi,q^*}(\rho) - \rho | \rightarrow 0$. The key takeaway of Theorem \ref{theorem:main} for practitioners is that an improved initialisation can be achieved by using an activation function which has a sufficiently large linear region spanning either side of the origin. The size of this linear region, $a$, needs to be selected on the basis of both the bias variance hyperparameter $\sigma_b^2$ and the depth of the network. The deeper the network or the larger $\sigma_b^2$ is, the larger $a$ needs to be in order to avoid both convergence of the correlations and achieve approximate dynamical isometry. In practice, as per Figure \ref{fig:bound_performance}, modest increases in $a$ typically suffice to capture many of potential benefits at initialisation. We emphasise that there is a tension between the linear region being large enough to achieve a good initialisation, while being small enough so that the expressivity of the network is not reduced. In particular, if the network is initialised so that it acts as linear transform on the input data, and if the optimiser becomes trapped in a local minimum close to the initialisation point, then the network may continue to act as a linear transform throughout training.

\section{Analysis of scaled-bounded activation functions}\label{sec:theory}
\subsection{Derivation of Theorem \ref{theorem:main}}
We start our analysis by revisiting what it means to initialise on the EOC: the condition in \eqref{eq:chi} presupposes the existence of a fixed point $q^*$ of $V_{\phi}$. To resolve this matter we introduce and study the fixed points of a related function, $W$. Before presenting this analysis we introduce the following slight abuse of notation,
\begin{equation} \label{eq:psi}
    \phi'(z):=\frac{1}{2}\left(\frac{d\phi}{dz}(z^-) + \frac{d\phi}{dz}(z^+)\right)
\end{equation}
where $\frac{d\phi}{dz}(z^-)$ and $\frac{d\phi}{dz}(z^+)$ are the left and right derivatives of $\phi$ at $z\in \reals$ respectively. This is convenient for what follows as it allows us to define a notion of a derivative of a scaled-bounded activation $\phi$ over the whole of the domain: we remark, by Definition \ref{def:Omega}, that the true derivative of $\phi$, and $\phi'(\cdot)$, as defined above, are equal almost everywhere.

\begin{lemma} \label{lemma:W}
    Let $\phi$ be a scaled-bounded activation, see Definition \ref{def:Omega}, and $\sigma_b^2>0$. Define
    \begin{equation} \label{eq:W}
        W_{\phi}(q) := \frac{\expec[\phi(\sqrt{q}Z)^2]}{\expec[\phi'(\sqrt{q}Z)^2]} + \sigma_b^2
    \end{equation}
    for all $q \in \reals_{\geq 0}$. Then $W_{\phi}: \reals_{\geq 0} \rightarrow \reals_{\geq 0}$ and $W$ has a fixed point $q^*>0$.
\end{lemma}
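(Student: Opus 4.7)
The plan is to apply the intermediate value theorem to the continuous function $g(q) \defeq W_{\phi}(q) - q$, verifying that $g(0) > 0$ while $g(q) < 0$ for some sufficiently large $q$. The natural ingredients are: (i) the denominator $\expec[\phi'(\sqrt{q}Z)^2]$ is strictly positive on $\reals_{\geq 0}$ so $W_\phi$ is well-defined; (ii) both numerator and denominator depend continuously on $q$; (iii) $W_\phi$ grows sub-linearly in $q$. Points (i) and (ii) come from the linear region together with the boundedness hypotheses in Definition~\ref{def:Omega}, while (iii) is the quantitative heart of the argument.

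To establish well-definedness and nonnegativity, I would observe that because $\phi$ is linear with slope $k$ on $[-a,a]$, one has $\phi'(\sqrt{q}z)^2 = k^2$ whenever $|z| < a/\sqrt{q}$, so
\[
\expec[\phi'(\sqrt{q}Z)^2] \;\geq\; k^2\,\prob(|Z| < a/\sqrt{q}) \;=\; k^2\,\text{erf}\!\left(\frac{a}{\sqrt{2q}}\right) \;>\; 0
\]
for every $q > 0$, and equals $k^2$ at $q = 0$. Combined with $\expec[\phi(\sqrt{q}Z)^2] \geq 0$ and $\sigma_b^2 > 0$, this gives $W_\phi(q) \geq \sigma_b^2 > 0$ for all $q \geq 0$, establishing that $W_\phi : \reals_{\geq 0} \to \reals_{\geq 0}$. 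Continuity on $\reals_{\geq 0}$ follows from dominated convergence: $\phi$ is bounded by $ak$ and $\phi'$ is bounded by $k$ (using the averaged-derivative convention \eqref{eq:psi}), and since the discontinuity set $\cD$ of $\phi'$ is finite, $\phi'(\sqrt{q_n}z)^2 \to \phi'(\sqrt{q}z)^2$ for all $z$ outside a Gaussian-null set whenever $q_n \to q$.

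For the fixed point itself, at $q=0$ oddness of $\phi$ yields $\expec[\phi(0)^2]=0$, so $W_\phi(0) = \sigma_b^2 > 0 = g(0)+0$, i.e.\ $g(0) > 0$. For large $q$, I would use boundedness $|\phi| \leq ak$ in the numerator together with the $\text{erf}$ lower bound above in the denominator to get
\[
W_\phi(q) \;\leq\; \frac{a^2 k^2}{k^2\,\text{erf}(a/\sqrt{2q})} + \sigma_b^2 \;=\; \frac{a^2}{\text{erf}(a/\sqrt{2q})} + \sigma_b^2,
\]
and then use $\text{erf}(x) \sim 2x/\sqrt{\pi}$ as $x \to 0^+$ to conclude $W_\phi(q) = \cO(\sqrt{q})$ as $q \to \infty$. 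Since $\sqrt{q} = o(q)$, there exists $q_0 > 0$ with $W_\phi(q_0) < q_0$, i.e.\ $g(q_0) < 0$. The intermediate value theorem then delivers a fixed point $q^* \in (0, q_0)$ with $W_\phi(q^*) = q^*$, and since $W_\phi \geq \sigma_b^2 > 0$ this fixed point is automatically strictly positive.

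The only step that is not purely routine is item (iii), the sub-linear growth of $W_\phi$: it relies essentially on the scaled-bounded structure, namely that $\phi$ is globally bounded (controlling the numerator by a constant) while the existence of a linear region around the origin forces the denominator to decay no faster than $1/\sqrt{q}$. Without both ingredients of Definition~\ref{def:Omega} the argument would fail, so this is where I would take the most care when writing out the details.
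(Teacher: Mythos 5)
Your proposal is correct and follows essentially the same route as the paper: lower-bounding the denominator by $k^2\,\mathrm{erf}\!\left(a/\sqrt{2q}\right)$ via the linear region, bounding the numerator by $a^2k^2$, establishing continuity by dominated convergence, and then applying the intermediate value theorem to $W_{\phi}(q)-q$ using $W_{\phi}(0)\geq\sigma_b^2>0$ and the upper bound $a^2/\mathrm{erf}\!\left(a/\sqrt{2q}\right)+\sigma_b^2$. The only cosmetic difference is that you deduce sub-linear growth directly from the asymptotic $\mathrm{erf}(x)\sim 2x/\sqrt{\pi}$, giving $W_{\phi}(q)=\cO(\sqrt{q})$, whereas the paper reaches the same conclusion by a contradiction argument comparing the limits of $a^2+\sigma_b^2\,\mathrm{erf}\!\left(a/\sqrt{2q}\right)$ and $q\,\mathrm{erf}\!\left(a/\sqrt{2q}\right)$.
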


A proof of Lemma \ref{lemma:W} is provided in Appendix \ref{appendix:proof_lemma_W}. As a consequence of Lemma \ref{lemma:W}, we are able to make the following claims concerning the existence of fixed points of the variance function $V_{\phi}$ for any scaled-bounded activation $\phi$.

\begin{corollary}\label{cor:V_fixed_point}
    Let $\phi$ be a scaled-bounded activation, see Definition \ref{def:Omega}, $\sigma_b^2>0$ and suppose
    \begin{equation} \label{eq:init_EOC_omega}
        \chi_1 \defeq \sigma_w^2 \expec[\phi'(\sqrt{q^*}Z)^2] =1,
    \end{equation}
    where $q^*>0$ is a fixed point of $W_{\phi}$, defined in (\ref{eq:W}). Then $q^*$ is a fixed point of the associated variance function $V_{\phi}$, defined in (\ref{eq:var_map}).
\end{corollary}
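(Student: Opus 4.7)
The plan is to observe that the corollary is an almost immediate algebraic consequence of the two hypotheses: the EOC condition $\chi_1 = 1$ fixes the value of $\sigma_w^2$, and this fixed value is exactly what converts $V_\phi(q^*)$ into $W_\phi(q^*)$. The main work is a one-line substitution.

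Concretely, I would proceed as follows. First, I would check that the quantity $\expec[\phi'(\sqrt{q^*}Z)^2]$ is strictly positive and finite. Positivity follows because, by Definition \ref{def:Omega}, $\phi'(z) = k > 0$ on the nontrivial interval $[-a,a]$, which has positive Gaussian measure under $\sqrt{q^*}Z$ since $q^* > 0$. Finiteness follows from the uniform bound $|\phi'(z)| \le k$ in Definition \ref{def:Omega}. Consequently the EOC condition \eqref{eq:init_EOC_omega} can be rearranged to give
\[
\sigma_w^2 \;=\; \frac{1}{\expec[\phi'(\sqrt{q^*}Z)^2]}.
\]

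Next I would substitute this expression for $\sigma_w^2$ into the definition \eqref{eq:var_map} of $V_\phi$ evaluated at $q^*$:
\[
V_\phi(q^*) \;=\; \sigma_w^2\,\expec[\phi(\sqrt{q^*}Z)^2] + \sigma_b^2 \;=\; \frac{\expec[\phi(\sqrt{q^*}Z)^2]}{\expec[\phi'(\sqrt{q^*}Z)^2]} + \sigma_b^2 \;=\; W_\phi(q^*).
\]
The right-hand side equals $q^*$ by the hypothesis, inherited from Lemma \ref{lemma:W}, that $q^*$ is a fixed point of $W_\phi$. Hence $V_\phi(q^*) = q^*$, as claimed.

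There is no real obstacle to this argument; the only subtle point is ensuring the denominator in the definition \eqref{eq:W} of $W_\phi$ is non-zero so that the EOC condition genuinely determines $\sigma_w^2$, and this is where the linear-region hypothesis of Definition \ref{def:Omega} plays its role. All else is a direct rewriting.
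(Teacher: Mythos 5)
Your proposal is correct and follows essentially the same route as the paper: the paper's proof also reduces to observing that under the EOC condition $\sigma_w^2 = 1/\expec[\phi'(\sqrt{q^*}Z)^2]$, so $V_\phi(q^*) = W_\phi(q^*) = q^*$, with the positivity of the denominator already guaranteed by Lemma \ref{lemma:W}. Your explicit check of that positivity is a fine (if redundant) addition.
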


\noindent A proof of corollary \ref{cor:V_fixed_point} is included in Appendix \ref{appendix:proof_V_fixed_point} for completeness. The key takeaway of Corollary \ref{cor:V_fixed_point} is that for any scaled-bounded activation there exists a fixed point $q^*$ of $V_{\phi}$ satisfying $q^*>0$. In fact, as $V_{\phi}(0)= \sigma_b^2>0$ then any fixed point of $V_{\phi}$ is greater than 0. We emphasise that such analysis is necessary as in general care is required when making assumptions concerning the existence of fixed points. For example, the variance function for ReLU like functions do not have any fixed points unless $\sigma_b^2 =0$.

To simplify the analysis we will now proceed under the assumption that the input data is normalised to have euclidean length $q^*$ and that $q^{(l)} = q^*$ for all $l \in [L]$. If one where interested in initialising in this manner in practice, it might also be of interest to explore guarantees concerning the ease with which a stable fixed point of $V_{\phi}$ can be computed: in particular if the fixed point is not attractive then numerical precision issues might mean that the sequence $q^{(l)}$ diverges from $q^*$. We do not pursue this line of inquiry and instead leave it as potential future work. Now that we have identified the existence of fixed points of the variance functions of scaled-bounded activations, it is possible to study the associated correlation functions.

\begin{lemma} \label{lemma:omega_corr_diff}
    Let $\phi$ be a scaled-bounded activation, see Definition \ref{def:Omega}, $\sigma_b^2>0$ and suppose
    \begin{equation}
        \chi_1 \defeq \sigma_w^2 \expec[\phi'(\sqrt{q^*}Z)^2] =1,
    \end{equation}
    where $q^*>0$ is a fixed point of $W_{\phi}$. In addition, assume that all inputs $\vx$ are normalised so that $||\vx||_2^2 = q^*$. Then the associated correlation map $R_{\phi, q*}$, defined in \eqref{eq:corr_map}, is fixed at each layer $l \in [L]$, satisfies $R_{\phi, q*}:[-1,1] \rightarrow [-1,1]$ and is differentiable with
    \begin{equation} \label{eq:omega_corr_map}
        R_{\phi, q^*}'(\rho) = \sigma_w^2 \expec[\phi'(U_1)\phi'(U_2)]
    \end{equation}
    for all input correlations $\rho \in (-1,1)$.
\end{lemma}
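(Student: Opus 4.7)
The lemma comprises three assertions; the plan is to address them in sequence, with the third being the substantive one. For layer-independence of $R_{\phi,q^*}$, I would argue by induction on $l$ that $q_\alpha^{(l)} = q^*$ for every input $\vx^{(\alpha)}$ with $\|\vx^{(\alpha)}\|_2^2 = q^*$. The base case is the normalisation assumption and the inductive step follows from $V_\phi(q^*) = q^*$, which is Corollary \ref{cor:V_fixed_point}. With $q_\alpha^{(l)} = q_\beta^{(l)} = q^*$ at every layer, the covariance recurrence \eqref{eq:covariance_function} collapses to the single-variable recurrence \eqref{eq:corr_map}, whose right-hand side depends only on $\rho$, not on $l$. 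For the range claim, I would apply Cauchy--Schwarz to the dependent Gaussians $U_1, U_2$: both have variance $q^*$, so $|\expec[\phi(U_1)\phi(U_2)]| \leq \expec[\phi(U_1)^2]$, and combining with \eqref{eq:corr_map} and $V_\phi(q^*) = q^*$ yields $|R_{\phi,q^*}(\rho)| \leq V_\phi(q^*)/q^* = 1$ (the lower bound $\geq -1$ drops out of the same estimate after absorbing the $\sigma_b^2/q^*$ term).

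For the derivative formula I would invoke Price's theorem for the bivariate Gaussian: if $(X,Y)$ is jointly Gaussian with zero means, common variance $\sigma^2$ and correlation $\rho$, then $\frac{d}{d\rho}\expec[f(X)g(Y)] = \sigma^2 \expec[f'(X) g'(Y)]$ for sufficiently regular $f,g$. Applied with $\sigma^2 = q^*$ and $f = g = \phi$, the factor $q^*$ cancels the $1/q^*$ prefactor in \eqref{eq:corr_map} and leaves precisely $\sigma_w^2 \expec[\phi'(U_1)\phi'(U_2)]$, which is the required identity.

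The main obstacle is that $\phi$ is not $C^1$: by Definition \ref{def:Omega} it is differentiable only off a finite exceptional set $\cD$, so textbook Price's theorem does not apply directly. I would address this by mollifying: take a standard smooth approximation $\phi_\epsilon \to \phi$ with $\|\phi_\epsilon'\|_\infty \leq k$ (inherited from the Lipschitz bound implied by $|\phi'| \leq k$) and $\phi_\epsilon' \to \phi'$ pointwise off $\cD$. Price's theorem applies to $\phi_\epsilon$, so the identity holds for each approximant; I would then pass to $\epsilon \to 0$ on both sides via dominated convergence, using the uniform bounds $|\phi_\epsilon| \leq ak$ and $|\phi_\epsilon'| \leq k$ together with Gaussian tails as an integrable majorant. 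For $\rho \in (-1,1)$ the joint density of $(U_1, U_2)$ is nondegenerate, so the finite set $\cD$ is null under both marginals and hence the event $\{U_1 \in \cD\} \cup \{U_2 \in \cD\}$ has probability zero; the limiting identity therefore holds with the convention \eqref{eq:psi} for $\phi'$ at the exceptional points.
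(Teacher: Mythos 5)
Your handling of the first two claims coincides with the paper's: layer-fixedness follows from Corollary \ref{cor:V_fixed_point} exactly as you say, and your Cauchy--Schwarz bound for the range (with the explicit lower bound $-1+2\sigma_b^2/q^*$) is if anything slightly more detailed than the paper's remark that finiteness suffices "by construction". For the derivative formula, however, your route is genuinely different. The paper differentiates under the integral sign (its Lemma \ref{lemma:diff_DCT}), which produces terms containing $z_1\phi'(u_2)$ and $z_2\phi'(u_2)$, and then removes these by a piecewise integration by parts (Lemma \ref{lemma:piecewise_integration_by_parts}) in which boundary terms located at the discontinuities of $\phi'$ appear explicitly and are shown to integrate to zero using the oddness of $\phi$. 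You instead apply Price's theorem to smooth mollifications $\phi_\epsilon$ (with $|\phi_\epsilon|\leq ak$, $|\phi_\epsilon'|\leq k$, $\phi_\epsilon'\to\phi'$ off $\cD$) and pass to the limit, using that for $\rho\in(-1,1)$ the nondegenerate law of $(U_1,U_2)$ gives no mass to the exceptional set. This is shorter, and it is notable that it makes no use of oddness at this stage, whereas the paper's computation leans on oddness to cancel its boundary terms; the price is that you import Price's theorem rather than deriving everything from the two dominated-convergence lemmas the paper sets up.

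There is one genuine gap in the limiting step: "pass to $\epsilon\to 0$ on both sides via dominated convergence" does not, as stated, prove the claim. Dominated convergence gives $H_\epsilon(\rho)\to H(\rho)$ and $q^*\expec[\phi_\epsilon'(U_1)\phi_\epsilon'(U_2)]\to q^*\expec[\phi'(U_1)\phi'(U_2)]$, but it does not identify $\lim_\epsilon H_\epsilon'(\rho)$ with $H'(\rho)$: pointwise convergence of functions together with pointwise convergence of their derivatives does not imply that the limit of the derivatives is the derivative of the limit, so differentiability of $H$ itself is not yet established. The repair is standard and uses exactly the uniform bounds you already have: integrate Price's identity to get $H_\epsilon(\rho)-H_\epsilon(0)=q^*\int_0^\rho \expec\bigl[\phi_\epsilon'(U_1^{(t)})\phi_\epsilon'(U_2^{(t)})\bigr]\,dt$, where $(U_1^{(t)},U_2^{(t)})$ denotes the pair with correlation $t$; since the integrand is bounded by $k^2$ and converges for each $t\in(-1,1)$ (the exceptional event $\{U_1^{(t)}\in\cD\}\cup\{U_2^{(t)}\in\cD\}$ being null), dominated convergence in $t$ yields $H(\rho)-H(0)=q^*\int_0^\rho \expec\bigl[\phi'(U_1^{(t)})\phi'(U_2^{(t)})\bigr]\,dt$. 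Then check that $t\mapsto \expec[\phi'(U_1^{(t)})\phi'(U_2^{(t)})]$ is continuous on $(-1,1)$ (dominated convergence again, using that $\phi'$ is continuous off the finite set $\cD$), and conclude by the fundamental theorem of calculus that $H$ is differentiable with the stated derivative, whence $R_{\phi,q^*}'(\rho)=\sigma_w^2\expec[\phi'(U_1)\phi'(U_2)]$. With that addition your argument is complete and valid.
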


A proof of Lemma \ref{lemma:omega_corr_diff} is given in Appendix \ref{appendix:proof_omega_corr_diff}. This expression for the correlation is equivalent to that given in \cite{Poole2016}, however we emphasise, due to the fact that $\phi$ is not necessarily continuously differentiable, that significantly more care is required to derive it. Indeed, if $\phi$ were not odd then this equivalence would not hold and additional terms would appear, potentially complicating the analysis downstream. We are now ready to present a key lemma, which provides a tight uniform bound on the interval $[0,1]$ between the correlation functions associated with scaled-bounded activations and the identity function.

\begin{lemma} \label{lemma:bound_from_identity}
	Under the same conditions and assumptions as in Lemma \ref{lemma:omega_corr_diff}, it holds that
	\begin{equation} \label{eq:bound_from_identity}
    \max_{\rho \in [0,1]}|R_{\phi, q^*}(\rho) - \rho| = \frac{\sigma_b^2}{q^*}.
    \end{equation}
\end{lemma}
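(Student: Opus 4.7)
The plan is to show that $f(\rho) \defeq R_{\phi,q^*}(\rho) - \rho$ is nonnegative and monotonically decreasing on $[0,1]$, with $f(0) = \sigma_b^2/q^*$ and $f(1) = 0$, which immediately yields the claimed equality.

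First I would evaluate the endpoints. The endpoint $\rho = 1$ is trivial: the calculation preceding \eqref{eq:chi} already gives $R_{\phi,q^*}(1) = V_\phi(q^*)/q^* = 1$, hence $f(1) = 0$. For the endpoint $\rho = 0$, when $\rho = 0$ the random variables $U_1$ and $U_2$ in \eqref{eq:corr_map} are independent and each marginally $\cN(0,q^*)$. Since $\phi$ is odd (property 2 of Definition \ref{def:Omega}) and the law of $U_i$ is symmetric, $\expec[\phi(U_i)] = 0$, so $\expec[\phi(U_1)\phi(U_2)] = \expec[\phi(U_1)]\expec[\phi(U_2)] = 0$, and therefore $R_{\phi,q^*}(0) = \sigma_b^2/q^*$, giving $f(0) = \sigma_b^2/q^*$.

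Next I would bound $f'(\rho) = R'_{\phi,q^*}(\rho) - 1$ on $(0,1)$ using the derivative formula \eqref{eq:omega_corr_map} from Lemma \ref{lemma:omega_corr_diff}. The key observation is that $U_1$ and $U_2$ are jointly Gaussian with identical marginal distribution $\cN(0,q^*)$, so by the Cauchy--Schwarz inequality,
\[
\expec[\phi'(U_1)\phi'(U_2)] \leq \sqrt{\expec[\phi'(U_1)^2]\,\expec[\phi'(U_2)^2]} = \expec[\phi'(\sqrt{q^*}Z)^2].
\]
Multiplying by $\sigma_w^2$ and invoking the EOC normalisation $\chi_1 = \sigma_w^2 \expec[\phi'(\sqrt{q^*}Z)^2] = 1$ yields $R'_{\phi,q^*}(\rho) \leq 1$, hence $f'(\rho) \leq 0$ on $(0,1)$.

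To conclude, $f$ is continuous on $[0,1]$ (by Lemma \ref{lemma:omega_corr_diff}, $R_{\phi,q^*}$ is differentiable on the interior and continuous on $[-1,1]$) and nonincreasing on $[0,1]$, so $f$ takes its maximum at $\rho = 0$ and its minimum at $\rho = 1$; in particular $0 = f(1) \leq f(\rho) \leq f(0) = \sigma_b^2/q^*$, which gives $\max_{\rho \in [0,1]} |R_{\phi,q^*}(\rho) - \rho| = \sigma_b^2/q^*$.

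I do not anticipate any significant obstacle; the only step requiring a little care is ensuring the Cauchy--Schwarz step is valid for the slight abuse of notation in \eqref{eq:psi} --- this is fine because $\phi'$ is bounded by $k$ almost everywhere (property 4), so the expectations defining $R'_{\phi,q^*}$ are finite and the standard Cauchy--Schwarz bound applies directly to the jointly Gaussian pair $(U_1,U_2)$.
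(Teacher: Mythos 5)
Your proof is correct and follows essentially the same route as the paper: evaluate $R_{\phi,q^*}(0)=\sigma_b^2/q^*$ via oddness of $\phi$, note $R_{\phi,q^*}(1)=1$, bound $R_{\phi,q^*}'(\rho)\leq 1$ by Cauchy--Schwarz together with the EOC condition $\chi_1=1$, and conclude by monotonicity of $R_{\phi,q^*}(\rho)-\rho$. The paper additionally analyses the equality case of Cauchy--Schwarz to get the strict bound $R_{\phi,q^*}'(\rho)<1$ on $(0,1)$, but your non-strict version already suffices for the stated equality since the maximum is attained at $\rho=0$.
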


A proof of Lemma \ref{lemma:bound_from_identity} is provided in Appendix \ref{appendix:proof_bound_from_identity}. The final lemma we present before proving Theorem \ref{theorem:main} concerns the relationship between $a$, the size of the linear region, and $q^*$, the fixed point of the variance function. This lemma provides lower and upper bounds on the ratio $\frac{a}{\sqrt{q^*}}$ for all scaled-bounded activations as a function of $a$ and $\sigma_b^2$. The fact that this is possible indicates that the particular shape of the tails of a scaled-bounded activation do not play a key role in achieving a good initialisation. In Figure \ref{fig:length_ratio_bounds} we plot these bounds, as well as example $a/\sqrt{q^*}$ ratios, for a number of activation functions as a function of $a$.

\begin{figure}[!ht]
   \begin{minipage}{\textwidth}
     \centering
     \includegraphics[scale=0.5]{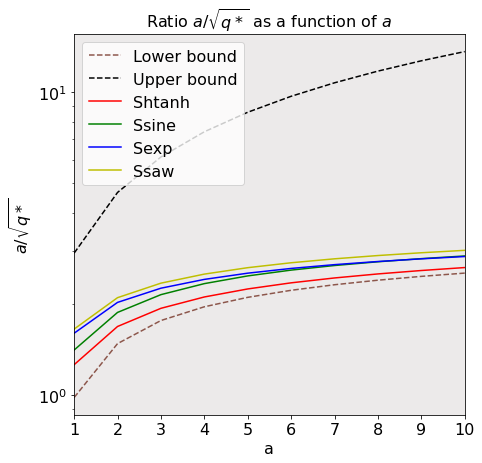}
   \end{minipage}\\[1em]
   \caption{Bounds on $\frac{a}{\sqrt{q^*}}$ given in (\ref{eq:length_ratio_bound}) vs. $\frac{a}{\sqrt{q^*}}$, computed numerically for the same scaled-bounded activations visualised in Figure \ref{fig:omega_examples}.}\label{fig:length_ratio_bounds}
\end{figure}

\begin{lemma}\label{lemma:a_q_ratio_bound}
    Under the same conditions and assumptions as in Lemma \ref{lemma:omega_corr_diff}, and defining $y:=\frac{\sigma_b^2}{a^2}$, then
    \begin{equation} \label{eq:length_ratio_bound}
        \Lambda(y) < \frac{a}{\sqrt{q^*}} < \left(\frac{8}{\pi}\right)^{1/6} y^{-1/3},
    \end{equation}
    where $\Lambda(y)$ is defined as
    \[
    \left(W_0 \left(\frac{2}{\pi}\left(\sqrt{\frac{8}{\pi}}\exp \left(- \frac{\left(\sqrt{W_0 \left(\frac{2}{\pi}y^{-2}\right)}\right)^2}{2}\right) \left( \frac{1}{\left(\sqrt{W_0 \left(\frac{2}{\pi}y^{-2}\right)}\right)} + \left(\sqrt{W_0 \left(\frac{2}{\pi}y^{-2}\right)}\right)\right) \right)^{-2}\right)\right)^{1/2}
    \]
    and $W_0$ denotes the principal branch of the Lambert W function.
\end{lemma}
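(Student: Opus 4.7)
The plan is to translate the EOC condition $\sigma_w^2\mathbb{E}[\phi'(U)^2] = 1$ and the fixed-point identity $q^* = \sigma_w^2\mathbb{E}[\phi(U)^2]+\sigma_b^2$ (with $U \defeq \sqrt{q^*}Z$) into a sandwich inequality for $y \defeq \sigma_b^2/a^2$ in terms of $s \defeq a/\sqrt{q^*}$, and then invert both sides. Writing $\varphi(t) \defeq e^{-t^2/2}/\sqrt{2\pi}$, $Q(t) \defeq 1-\Phi(t)$ and $p_s \defeq \mathrm{erf}(s/\sqrt 2)$, eliminating $\sigma_w^2$ gives $\sigma_b^2 = q^* - \mathbb{E}[\phi(U)^2]/\mathbb{E}[\phi'(U)^2]$. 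Splitting both expectations into the linear region $|U|\leq a$ (where $\phi(U) = kU$ and $\phi'(U)=k$) and the tail $|U|>a$ (where $|\phi(U)|\leq ak$ and $|\phi'(U)|\leq k$ by Definition \ref{def:Omega}), combined with the integration-by-parts identity $\mathbb{E}[Z^2\mathbf{1}_{|Z|\leq s}] = p_s - 2s\varphi(s)$, yields after routine algebra the sandwich
\[
g(s) \,\defeq\, 2\!\left(\tfrac{\varphi(s)}{s}-Q(s)\right) \;\leq\; y \;\leq\; \tfrac{2Q(s)+2s\varphi(s)}{s^2}.
\]

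For the upper bound $s < (8/\pi)^{1/6}y^{-1/3}$, multiply the right-hand inequality by $s$ and invoke the classical Mills bound $Q(s)\leq \varphi(s)/s$ to get $ys^3 \leq 2(1+s^2)\varphi(s) = \sqrt{2/\pi}\,(1+s^2)e^{-s^2/2}$. The function $(1+s^2)e^{-s^2/2}$ attains its global maximum $2e^{-1/2}$ at $s=1$, which is strictly less than $2$, so $ys^3 < 2\sqrt{2/\pi} = \sqrt{8/\pi}$ as required.

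For the lower bound $\Lambda(y) < s$, I would show $g(\Lambda(y)) > y$; combined with $g(s)\leq y$ from the sandwich and the strict monotonicity of $g$ on $(0,\infty)$ (a one-line check shows $g'(s) = -2\varphi(s)/s^2<0$), this forces $\Lambda(y) < s$. Let $r \defeq \sqrt{W_0(2/(\pi y^2))}$, the positive root of $y = 2\varphi(r)/r$. The definition of $\Lambda$ rearranges to $\Lambda^2 e^{\Lambda^2} = r^2 e^{r^2}/[4(r^2+1)^2]$, equivalently $y = \varphi(\Lambda)/[\Lambda(r^2+1)]$; since the right-hand side of the first form is strictly less than $r^2 e^{r^2}$ and $t\mapsto te^t$ is strictly increasing, this also gives the unconditional inequality $\Lambda < r$. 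Substituting the equivalent form of $y$ into $g(\Lambda) > y$ reduces the target to
\[
Q(\Lambda) \;<\; \frac{(2r^2+1)\,\varphi(\Lambda)}{2\Lambda(r^2+1)},
\]
which I would prove by chaining the sharpened Mills inequality $Q(t) \leq (2t^2+1)\varphi(t)/[2t(t^2+1)]$ at $t=\Lambda$ with $\Lambda \leq r$ and the strict monotonicity of $u\mapsto (2u+1)/(u+1)$ on $u\geq 0$.

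The main obstacle is the sharpened Mills inequality $Q(t) \leq (2t^2+1)\varphi(t)/[2t(t^2+1)]$ for all $t>0$. Setting $f(t) \defeq (2t^2+1)\varphi(t) - 2t(t^2+1)Q(t)$, one has $f(0) = 1/\sqrt{2\pi}>0$ and $\lim_{t\to\infty}f(t)=0$, so it suffices to show $f'(t) = 5t\varphi(t) - 2(3t^2+1)Q(t) \leq 0$ throughout. Via the classical Mills lower bound $Q(t) \geq t\varphi(t)/(t^2+1)$ this reduces to $2(3t^2+1)/(t^2+1) \geq 5$, i.e.\ $t^2 \geq 3$; on $(0,\sqrt 3)$ the inequality can be closed using a sharper Birnbaum-type bound $Q(t) \geq \varphi(t)/(t+\sqrt{t^2+2})$ or by direct series expansion of $f$ around the origin.
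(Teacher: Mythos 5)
Your sandwich is exactly the paper's pair of intermediate inequalities: your upper piece $y\le (2Q(s)+2s\varphi(s))/s^2$ is the paper's first rearranged bound, and $g(s)\le y$ is precisely the paper's key inequality \eqref{eq:key_ratio_proof_inequality} written with $\varphi$ and $Q$ in place of $\exp$ and $\mathrm{erfc}$. Your derivation of the upper bound via the maximum of $(1+s^2)e^{-s^2/2}$ being $2e^{-1/2}<2$ is correct and in fact a little cleaner than the paper's use of $e^{-x^2/2}<x^{-2}$. For the lower bound you take a genuinely different route: you unpack the closed form of $\Lambda$ into the identities $y=2\varphi(r)/r$, $\Lambda^2e^{\Lambda^2}=r^2e^{r^2}/[4(r^2+1)^2]$, hence $y=\varphi(\Lambda)/[\Lambda(r^2+1)]$ and $\Lambda<r$, and reduce $g(\Lambda)>y$ to a single Mills-ratio inequality for the Gaussian tail; all of this algebra, and the monotonicity argument ($g'(s)=-2\varphi(s)/s^2<0$, $u\mapsto(2u+1)/(u+1)$ increasing), checks out. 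The paper instead verifies $g(\Lambda)>2Q(\Lambda)+y$ by its tangent-line construction and a comparison of derivatives of the composite maps $y\mapsto g(\Lambda(y))$ and $y\mapsto h(\Lambda(y))$, never needing any refinement of Mills' inequality.

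The gap is in the one place where your argument needs new analytic input: the sharpened Mills bound $Q(t)\le (2t^2+1)\varphi(t)/[2t(t^2+1)]$, which you must have for every $t>0$ since $\Lambda$ sweeps out $(0,\infty)$ as $y$ varies. Your proof of it is incomplete. Reducing $f'(t)=5t\varphi(t)-2(3t^2+1)Q(t)\le 0$ via the classical Mills lower bound covers only $t\ge\sqrt3$, as you say; but on $(0,\sqrt3)$ the bound you cite, $Q(t)\ge\varphi(t)/(t+\sqrt{t^2+2})$, yields $f'(t)\le 0$ only when $2(3t^2+1)\ge 5t(t+\sqrt{t^2+2})$, which simplifies to $\sqrt{t^2+2}\ge 5t$, i.e.\ $t\le 1/(2\sqrt3)\approx 0.29$ — nowhere near $\sqrt3$. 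Even the genuine Birnbaum--Komatsu bound $Q(t)\ge 2\varphi(t)/(t+\sqrt{t^2+4})$ gives the condition $24t^4-44t^2+16\ge 0$, which fails for $t\in(1/\sqrt2,\,2/\sqrt3)$, so that interval is still uncovered; and a ``direct series expansion of $f$ around the origin'' cannot certify an inequality on an interval reaching $\sqrt3$ without explicit remainder control. Numerically the missing inequality does hold there (e.g.\ at $t=1$, $5\varphi(1)\approx 1.210 < 8Q(1)\approx 1.269$), so your route is almost certainly completable with a sharper tail bound or a careful monotonicity argument on the middle range, but as written the decisive auxiliary inequality is asserted rather than proved, and with it the whole lower bound $\Lambda(y)<a/\sqrt{q^*}$.
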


A proof of Lemma \ref{lemma:a_q_ratio_bound} is given in Appendix \ref{appendix:proof_a_q_ratio_bound}. We note that while the upper bound in Lemma \ref{lemma:a_q_ratio_bound} is easy to interpret, the lower bound is not immediately interpretable. However, this lower bound still allows us to compute a numerical lower bound for any scaled-bounded activation as per Figure \ref{fig:length_ratio_bounds}. In terms of asymptotic behaviour it is easy to check that as $y \rightarrow 0$ then $y^{-1/3}, \Lambda(y) \rightarrow \infty$, and as $y \rightarrow \infty$ then $y^{-1/3}, \Lambda(y) \rightarrow 0$. We also observe from Figure \ref{fig:length_ratio_bounds} that, at least empirically, the lower bound seems to be far tighter: we leave it as potential future work to investigate deriving a tighter upper bound. We are now ready to prove Theorem \ref{theorem:main}.

\begin{proof}
To derive the inequality concerning the correlation function given (\ref{eq_class_corr_bound}) we use the upper bound
\[
\frac{a}{\sqrt{q^*}} < \left(\frac{8}{\pi}\right)^{1/6} \frac{a^{2/3}}{\sigma_b^{2/3}}
\]
derived in Lemma \ref{lemma:a_q_ratio_bound}. Squaring both sides, dividing by $a^2$ and multiplying by $\sigma_b^2$ then
\[
\frac{\sigma_b^2}{q^*} < \left(\frac{8}{\pi}\right)^{1/3} \frac{\sigma_b^{2/3}}{a^{2/3}}.
\]
 To conclude we apply Lemma \ref{lemma:bound_from_identity}. We now turn our attention to proving the inequality concerning the moment ratio provided in equation \ref{eq_class_DI_bound}. Analysing the $l$th moment,
 \[
 \begin{aligned}
 \mu_l &:= \expec [ \phi'(\sqrt{q^*}Z)^{2l}] \\
 &=2 \left( k^{2l} \int_{0}^{a/\sqrt{q^*}} d \gamma(z) + \int_{a/\sqrt{q^*}}^{\infty} \phi'(\sqrt{q^*}z)^{2l}d \gamma(z) \right).
 \end{aligned}
 \]
 By assumption $|\phi'(z)| \leq k$, hence we can bound this quantity as
 \[
 k^{2l} \text{erf}\left(\frac{a}{\sqrt{2q^*}}\right) \leq \mu_l \leq  k^{2l},
 \]
 where the lower bound arises simply by zeroing the second integral term. It therefore follows that
 \[
 \begin{aligned}
\frac{k^4\text{erf}\left(\frac{a}{\sqrt{2q^*}}\right)}{k^4} &\leq \frac{\mu_2}{\mu_1^2} &\leq \frac{k^4}{k^4 \text{erf}\left(\frac{a}{\sqrt{2q^*}}\right)^2},\\
\text{erf}\left(\frac{a}{\sqrt{2q^*}}\right) &\leq \frac{\mu_2}{\mu_1^2} &\leq  \text{erf}\left(\frac{a}{\sqrt{2q^*}}\right)^{-2}\\
\end{aligned}
 \]
Observe that $\mu_2/\mu_1^2 - 1  \leq \text{erf}\left(\frac{a}{\sqrt{2q^*}}\right)^{-2} - 1$ and $ 1 - \mu_2/\mu_1^2 \leq 1 - \text{erf}\left(\frac{a}{\sqrt{2q^*}}\right)$. We now prove, by contradiction, that $\text{erf}\left(\frac{a}{\sqrt{2q^*}}\right)^{-2} - 1 \geq 1 - \text{erf}\left(\frac{a}{\sqrt{2q^*}}\right)$. Indeed, assume that $\text{erf}\left(\frac{a}{\sqrt{2q^*}}\right)^{-2} - 1 < 1 - \text{erf}\left(\frac{a}{\sqrt{2q^*}}\right)$, then
\[
\begin{aligned}
1 - \text{erf}\left(\frac{a}{\sqrt{2q^*}}\right)^{2} &< \text{erf}\left(\frac{a}{\sqrt{2q^*}}\right) \left(\text{erf}\left(\frac{a}{\sqrt{2q^*}}\right) - \text{erf}\left(\frac{a}{\sqrt{2q^*}}\right)^2 \right)\\ &\leq \text{erf}\left(\frac{a}{\sqrt{2q^*}}\right) \left(1 - \text{erf}\left(\frac{a}{\sqrt{2q^*}}\right)^2 \right).
\end{aligned}
\]
As $\text{erf}\left(\frac{a}{\sqrt{2q^*}}\right) <1$ for $\frac{a}{\sqrt{2q^*}} < \infty$ then this is a contradiction. Therefore
\[
\left| \frac{\mu_2}{\mu_1^2} - 1 \right| \leq \text{erf}\left(\frac{a}{\sqrt{2q^*}}\right)^{-2} - 1.
\]
As $\text{erf}$ is monotonically increasing it suffices to lower bound the quantity $\frac{a}{\sqrt{q^*}}$. The result claimed is then recovered by applying the lower bound on $\frac{a}{\sqrt{q^*}}$ derived in Lemma \ref{lemma:a_q_ratio_bound}.
\end{proof}

\begin{figure}[!ht]
  \begin{minipage}{\textwidth}
     \centering
     \includegraphics[scale=.45]{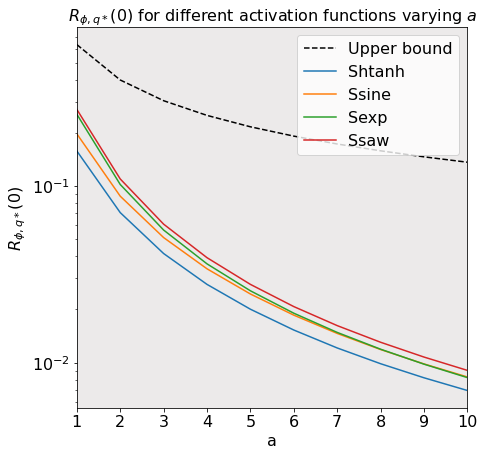}
     \includegraphics[scale=.45]{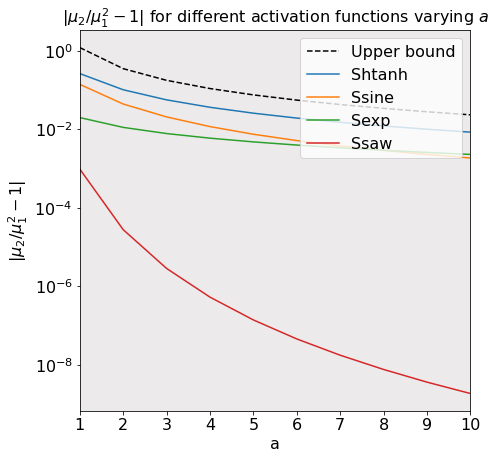}
  \end{minipage}\\[1em]
  \caption{The left and right plots display the correlation and moment ratio bounds, given in Equations (\ref{eq_class_corr_bound}) and (\ref{eq_class_DI_bound}) respectively, vs. the equivalent numerically computed quantities for a variety of different activation functions $\phi \in \Omega$, as shown in Figure \ref{fig:omega_examples}. We note that the right hand curve for hard saw is accurate only up to an $a$ of 6 due to numerical precision issues arising from numerical integration steps.}\label{fig:bound_performance}
\end{figure}

\begin{figure}[!ht]
\centering
     \includegraphics[scale=0.45]{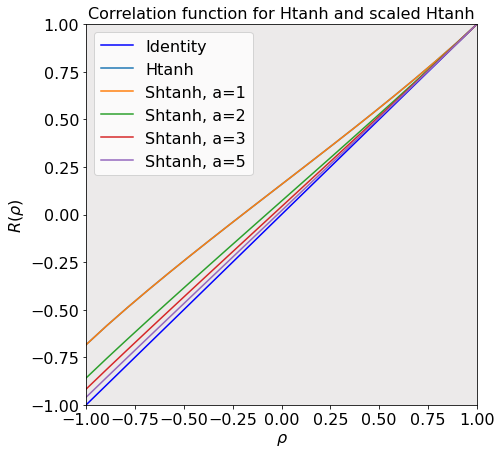}
     \includegraphics[scale=0.45]{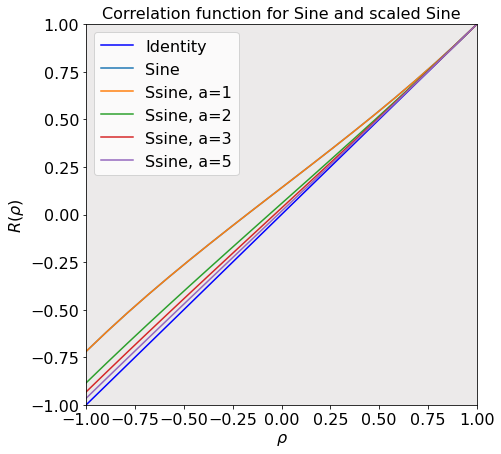}
\caption{impact of scale parameter $a$ on correlation function of scaled (or adapted) htanh (left-hand plot) and sinusoid (right-hand plot).}\label{fig:corr_comp}
\end{figure}

\subsection{Discussion and practical takeaways} \label{subsec:theory_takeaway}
Equation (\ref{eq_class_corr_bound}) implies, by choosing $a$ sufficiently large relative to $\sigma_b^2$, that problems arising as a result of limits on the depth of information propagation as well as network sensitivity can be mitigated without the need for $q^*, \sigma_b^2 \rightarrow 0$. Equation (\ref{eq_class_DI_bound}) likewise implies, as long as orthogonal initialisation is used, that increasing $a$ also mitigates the problem of model degeneracy, again without the need for $q^*, \sigma_b^2 \rightarrow 0$. Our numerics support these conclusions: in Figure \ref{fig:bound_performance} the bounds in (\ref{eq_class_corr_bound}) and (\ref{eq_class_DI_bound}) are plotted numerically with $\sigma_b^2$ fixed, and converge to $0$ as $a$ increases. Likewise, Figure \ref{fig:corr_comp} shows that increasing $a$ moves $R_{\phi, q^*}$ closer to the identity. We remark that (\ref{eq:var_map}) and (\ref{eq:corr_map}) where derived in the context of Gaussian initialisation. As a result one might question whether (\ref{eq_class_corr_bound}) is immediately applicable to orthogonally initialised networks. Indeed, although referenced to in \cite{pennington18a, pennington17a}, the correspondence between orthogonally initialised networks and Gaussian processes is to our knowledge yet to be rigorously established. However, it seems highly likely that the same correspondence holds, an assertion supported both by empirical observation and the fact that large random orthogonal matrices are well approximated by Gaussian matrices (see e.g., \cite{meckes_2019}). We defer a detailed study of this correspondence to later work. We also note that the uniform bound we provide holds for non-negative correlations only, this is due to the fact that the upper bound on this interval, $\sigma_b^2/q^*$ as per Lemma \ref{lemma:bound_from_identity}, is relatively easy to analyse. However, we suspect for scaled-bounded activations that letting $\sigma_b^2/a^2 \rightarrow 0$ results in a uniform convergence of the correlation function to the identity for $\rho \in [-1,1]$. This hypothesis is supported by Figure \ref{fig:corr_comp}.

We conclude this section by considering the relevance and importance of each of the four properties of scaled-bounded activation functions in regard to achieving a good initialisation in practice. Property 1), $\phi$ being continuous, does not seem controversial, indeed most of the commonly used activation functions are continuous. Furthermore, it seems reasonable that continuous activation functions result in a loss landscape that is easier for an optimiser to navigate, compared with that induced by non-continuous ones. Property 2), $\phi$ being odd, is beneficial in theory as it removes additional constant terms from (\ref{eq:bound_from_identity}) in Lemma \ref{lemma:bound_from_identity}. However, as per the proof of Lemma \ref{lemma:bound_from_identity}, it can be observed that these terms will decay exponentially fast as $a$ grows due to the fact that the locations of the points in $\cD$ all have magnitude at least $a$. As for property 3), while the existence of a linear region around the origin is critical to our results, we suspect that boundedness is more an artefact of our proof than a necessity in practice. Assuming that $\phi$ is bounded simplifies certain pieces of analysis, and crucially allows us to formulate upper and lower bounds on $V_{\phi}(q)$, needed for the proof of Lemma \ref{lemma:a_q_ratio_bound}. We hypothesise that this condition could be relaxed to $|\phi(z)| < |z|$. In regard to property 4) and contrary to the conclusion one might be inclined to draw from \cite{hayou19a}, $\phi$ being non-differentiable demonstrates that smoothness is not necessary for a good initialisation. Finally, the bound on the derivative of $\phi$ was introduced to allow us to to derive bounds on $V_{\phi}$.

\section{Experiments} \label{sec:experiments}
\subsection{Experimental setup} \label{subsec:exp_setup}
Across all experiments we train networks with depths $L \in \{20,50,100, 200\}$, with each layer having a fixed width $N=400$, for $100$ epochs on CIFAR-10. The results are averaged over 10 trials. Variance hyperparameters $(\sigma_w^2, \sigma_b^2)$ are selected to lie on the EOC, with $\sigma_b^2\in \{1, 10^{-1}, 10^{-2}, 10^{-3}, 10^{-4}\}$ and $\sigma_w^2$ computed in in order that $\chi_1 =1$, see \eqref{eq:chi}. The parameters of the network are initialised according to the following two schemes.
\begin{itemize}
    \item \textbf{Gaussian initialisation:} all parameters are drawn mutually independent of one another. The weights in each layer are identically distributed with $w_{i,j}^{(l)} \sim \cN(0, \sigma_w^2/N)$. The biases are all identically distributed with $b_{i,j}^{(l)} \sim \cN(0, \sigma_b^2)$.
    \item \textbf{Orthogonal initialisation:} all weight matrices and bias parameters are drawn mutually independent of one another. The weight matrix at each layer is drawn according to the Haar measure, i.e., uniformly, over the orthogonal group of $N\times N$ matrices such that $(\mW^{(l)})^T\mW^{(l)} = \sigma_w^2 \textbf{I}_N$. The biases are all identically distributed with $b_{i,j}^{(l)} \sim \cN(0, \sigma_b^2)$.
\end{itemize}
Finally, optimisation is performed using SGD with a batch size of 64. A learning rate of 10\textsuperscript{-4} was found to be provide good results across all experiments.

\subsection{Experimental validation of the results of Section \ref{sec:theory}}
In order to test and validate the results of Section \ref{sec:theory}, we deploy the scaled Shtanh activation function, defined in \eqref{eq:shtanh},
with $a \in \{1,2,5,10\}$ and $k=1$, and measure the test accuracy of the networks described in Section \ref{subsec:exp_setup} at various stages of training. The results of these experiments are summarised in the heatmaps provided in Figure \ref{fig:cifar_S_htanh}. As per the implications of Theorem \ref{theorem:main}, it is clear from Figure \ref{fig:cifar_S_htanh} that as $\sigma_b^2$ increases a larger but not necessarily maximal value of $a$ gives the best results. This is illustrated by the ridges, indicating the highest test accuracy, running from top left to bottom right in the heatmap subplots. This therefore highlights the trade-off between activation functions which are linear enough to allow for a good initialisation, while not being overly linear that the network loses approximation power. We emphasise that these results are not unique to Shtanh and that the same conclusions can be drawn for other scaled-bounded activation functions \footnote{Implementation and additional results can be found at  \url{https://github.com/Cross-Caps/AFLI}}. As established in prior works, we also observe an advantage in using orthogonal over Gaussian initialisation. However, we also note that this performance gap is relatively small. We leave it to future work to investigate the impact of optimising over, or regularising with respect to, the set of orthogonal weight matrices so that orthogonality is preserved, at least approximately, throughout training.

\begin{figure}[ht!]
\setcounter{figure}{7}
\centering
  \subfloat[Shtanh with Gaussian initialisation]{\includegraphics[trim={0 0 0 0},clip,scale=.5]{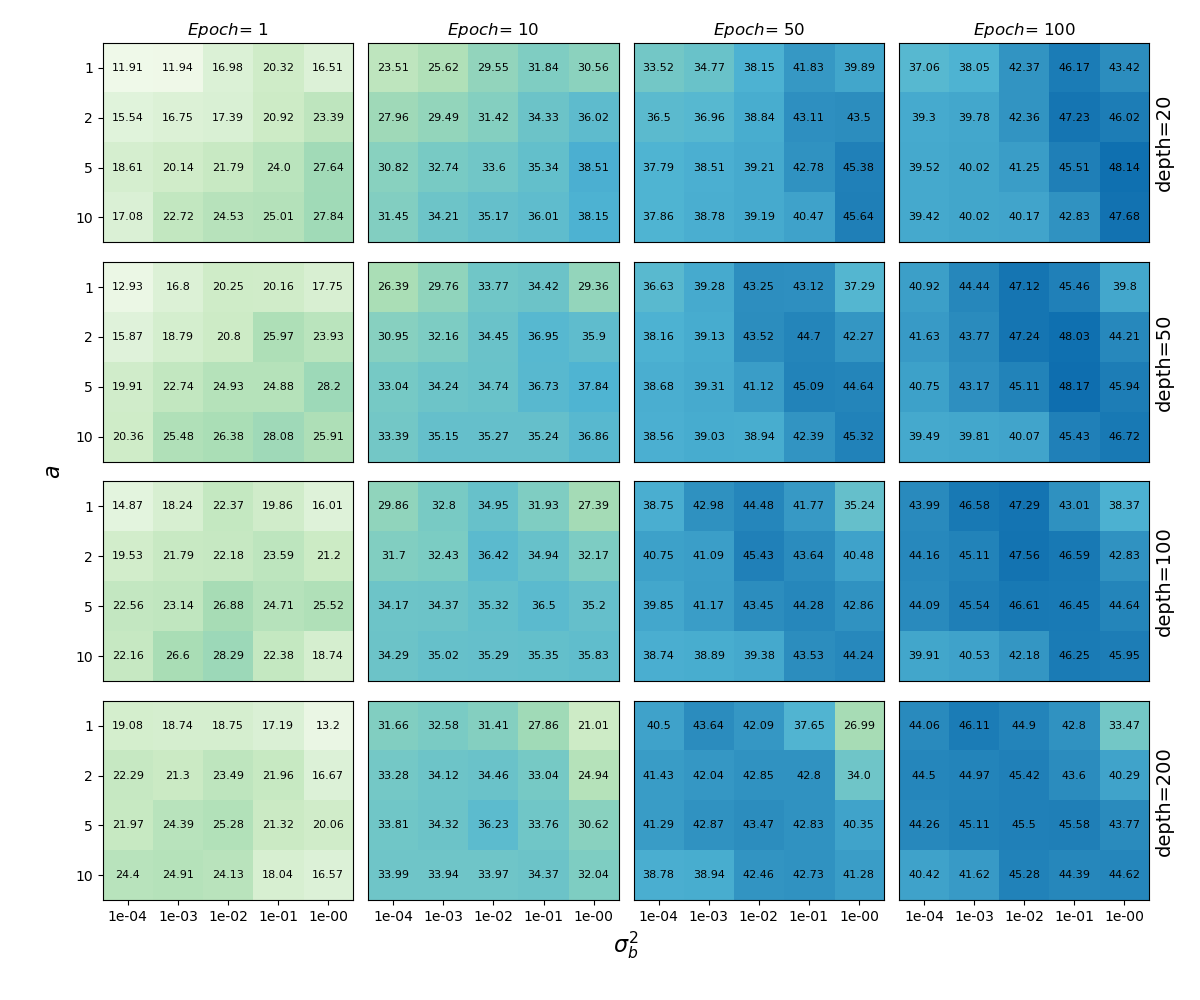}}
  \label{fig:htanh_gauss}
\end{figure} 

\begin{figure}[ht!]
\centering  
\ContinuedFloat
 \subfloat[Shtanh with orthogonal initialisation]{\includegraphics[trim={0 0 0 0},clip,scale=.5]{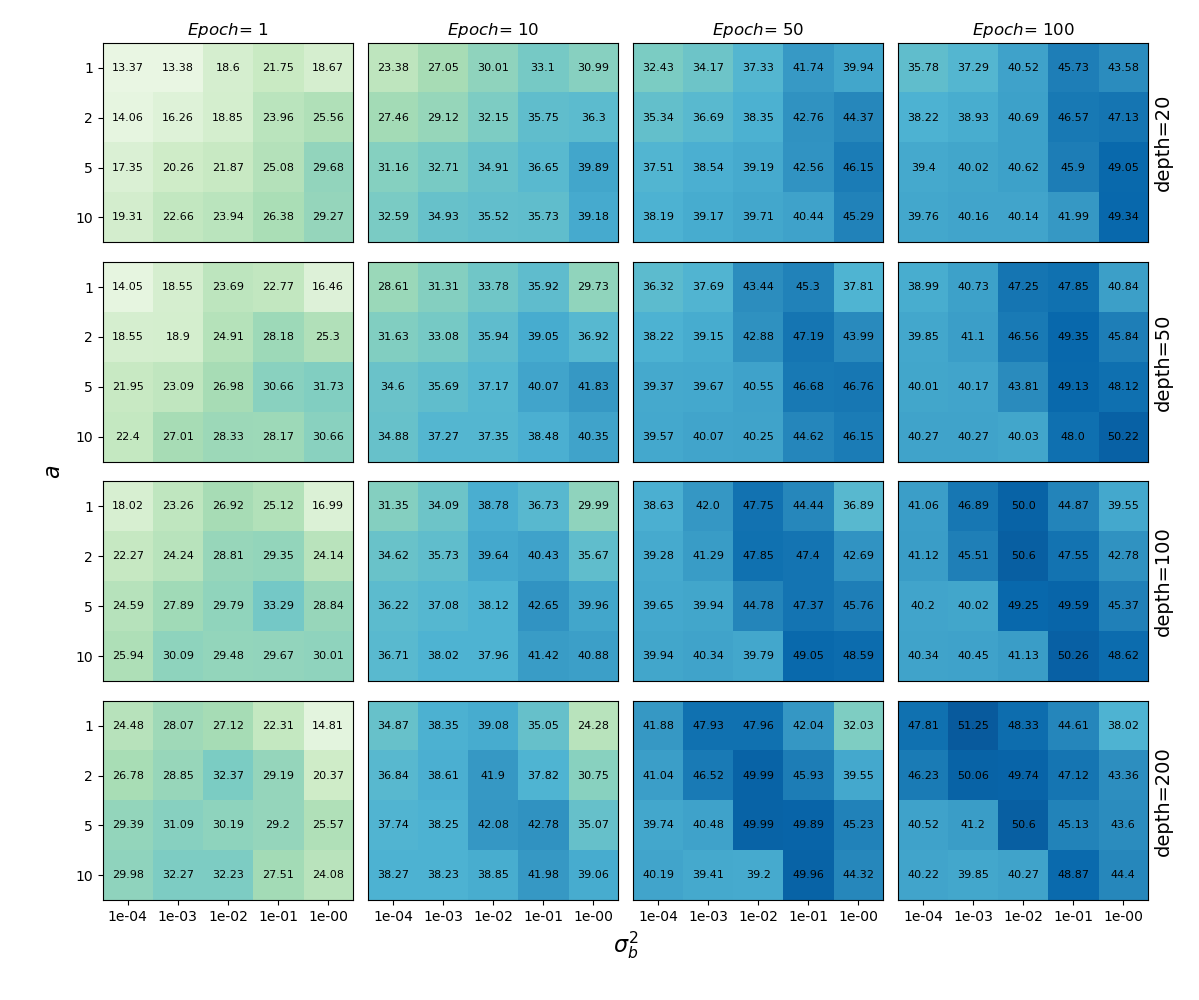}}
 \caption{\small test accuracy on CIFAR-10 at different stages of training. Note that all sub-heatmap plots share the same colour scale.} 
 \label{fig:cifar_S_htanh}
\end{figure}

We also provide some very preliminary results concerning a new training protocol, in which a scaled-bounded activation function is deployed and the value of $a$ is decreased slowly over the first few epochs. To motivate this, observe in Figure \ref{fig:test_error_curves} that, as per our theory, a large value of $a$ always gives the best performance at initialisation. However, for reasons already discussed, an overly large value of $a$ has a negative impact on training long term. This strategy can therefore be interpreted as computing a good initialisation for the final network. The purple curve in Figure \ref{fig:test_error_curves} displays the outcome of this strategy, which appears promising over a wide range of depths and hyperparameter choices.

\begin{figure}[!tbp] 
\setcounter{figure}{8}
\centering
\subfloat[Shtanh with Gaussian initialisation]{\includegraphics[trim={60 0 0 0},clip,scale=.6]{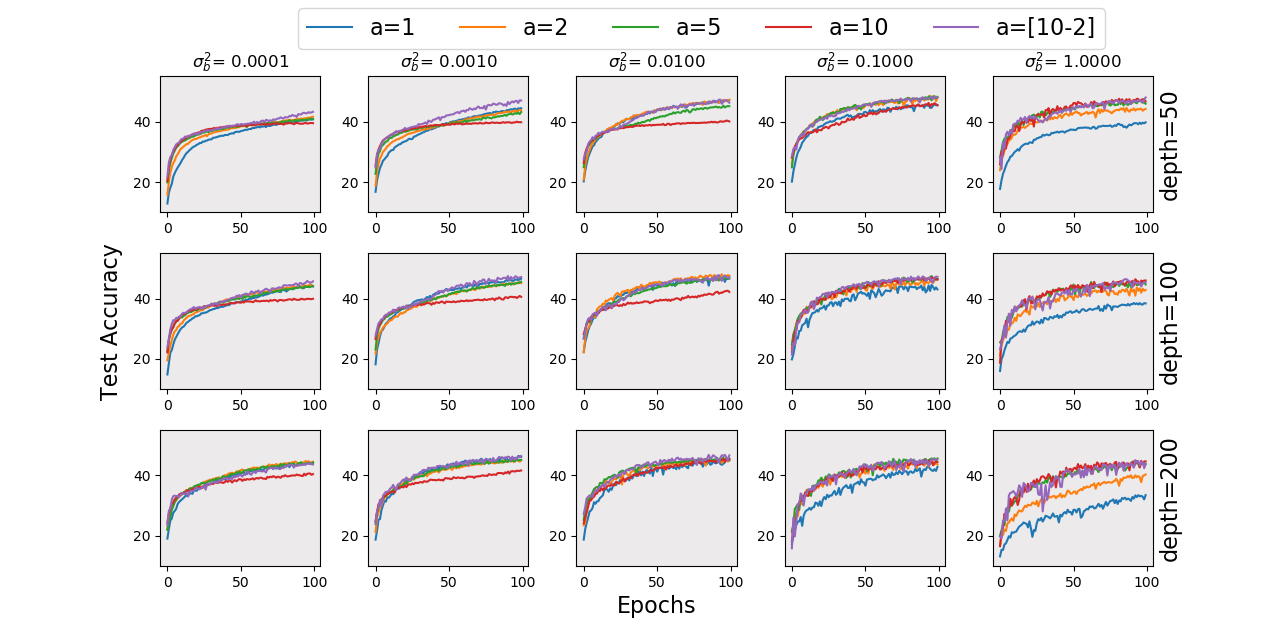}}
\end{figure}

\begin{figure}[!tbp] 
\centering
\ContinuedFloat
\subfloat[Shtanh with orthogonal initialisation]{ \includegraphics[trim={60 0 0 0},clip,scale=.6]{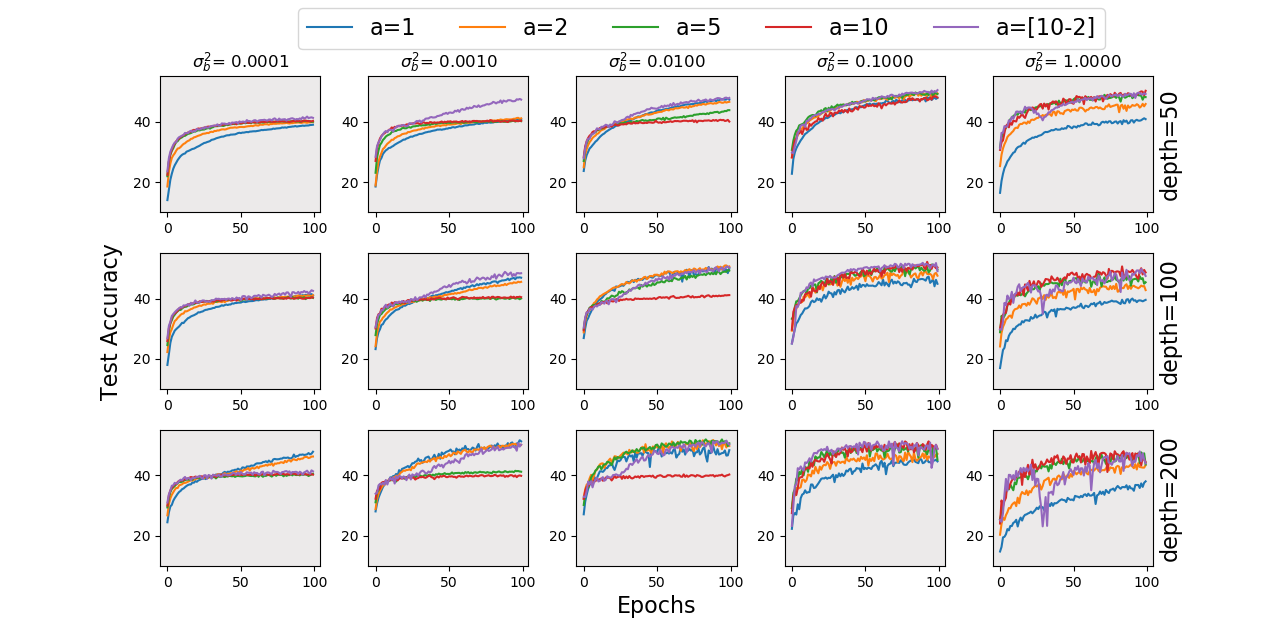}}
 \caption{\small test accuracy during training on CIFAR-10. Networks are trained either with a fixed value of $a$ or with $a$ linearly decreasing from 10$\rightarrow$2 over the first 30 epochs. } \label{fig:test_error_curves}
\end{figure}

\section{Conclusion and avenues for future work} \label{sec:conclusion}
In this paper we considered the role of the activation function in avoiding certain problems at initialisation, namely limited information propagation with depth, high sensitivity or insensitivity to perturbations of the input, and vanishing and exploding gradients. The first two of these we investigated by studying the dynamics of the preactivation correlations, and the third by analysing the spectrum of the input-output Jacobian of the network. Previously, these problems have been ameliorated by shrinking $\sigma_b^2$ in relation to the depth of the network. This is unsatisfactory for two reasons: firstly it results in the expected euclidean length of the activations shrinking towards 0 with depth, and second it places constraints on the initialisation regime, which can result in suboptimal training outcomes. Our theory and experiments clarify that shrinking $\sigma_b^2$ is not necessary, instead, it is possible to avoid these problems at initialisation by ensuring that the activation function deployed has a sufficiently large linear region around the origin. This work therefore provides a rigorous explanation for the observation that activation functions which approximate the identity near the origin perform well, particularly at initialisation.

Avenues for future work include characterising more precisely how large the linear region needs to be for a given depth, a more comprehensive investigation as to the potential benefits of a more affine initialisation, and an analysis of the approximation capabilities of a neural network whose parameters are constrained to lie in some neighbourhood of a given initialisation point. In addition, a quantitative description of how the preactivation correlation dynamics impact training is desirable. 


\section*{Acknowledgements}
This work is supported by the Alan Turing Institute under the EPSRC grant EP/N510129/1 and the Ana Leaf Foundation.

\clearpage
\bibliographystyle{unsrt}  
\bibliography{references}

\begin{thebibliography}{10}

\bibitem{glorot10a}
Xavier Glorot and Yoshua Bengio.
\newblock Understanding the difficulty of training deep feedforward neural
  networks.
\newblock In {\em International Conference on Artificial Intelligence and
  Statistics}, pages 249--256, May 2010.

\bibitem{searching_activations}
Prajit Ramachandran, Barret Zoph, and Quoc~V. Le.
\newblock Searching for activation functions.
\newblock In {\em 6th International Conference on Learning Representations,
  {ICLR} 2018, Workshop Track Proceedings}, 2018.

\bibitem{parametric_elu}
Ludovic Trottier, Philippe Gigu{\`{e}}re, and Brahim Chaib{-}draa.
\newblock Parametric exponential linear unit for deep convolutional neural
  networks.
\newblock In {\em 16th {IEEE} International Conference on Machine Learning and
  Applications, {ICMLA} 2017}, pages 207--214, 2017.

\bibitem{tanhexp}
Xinyu Liu and Xiaoguang Di.
\newblock Tanhexp: A smooth activation function with high convergence speed for
  lightweight neural networks.
\newblock {\em IET Computer Vision}, 15(2):136--150, 2021.

\bibitem{Selu}
G\"{u}nter Klambauer, Thomas Unterthiner, Andreas Mayr, and Sepp Hochreiter.
\newblock Self-normalizing neural networks.
\newblock In {\em Advances in Neural Information Processing Systems 30}, pages
  971--980, December 2017.

\bibitem{ELU}
Djork-Arn{\'e} Clevert, Thomas Unterthiner, and Sepp Hochreiter.
\newblock Fast and accurate deep network learning by exponential linear units
  {(ELUs)}.
\newblock {\em International Conference on Learning Representations (ICLR)},
  May 2015.

\bibitem{Poole2016}
Ben Poole, Subhaneil Lahiri, Maithra Raghu, Jascha Sohl-Dickstein, and Surya
  Ganguli.
\newblock Exponential expressivity in deep neural networks through transient
  chaos.
\newblock In {\em International Conference on Neural Information Processing
  Systems (NIPS)}, pages 3368--3376, 2016.

\bibitem{samuel2017}
Samuel~S. Schoenholz, Justin Gilmer, Surya Ganguli, and Jascha Sohl-Dickstein.
\newblock Deep information propagation.
\newblock In {\em International Conference on Learning Representations (ICLR)},
  April 2017.

\bibitem{pennington18a}
Jeffrey Pennington, Samuel Schoenholz, and Surya Ganguli.
\newblock The emergence of spectral universality in deep networks.
\newblock In {\em International Conference on Artificial Intelligence and
  Statistics (AISTATS)}, pages 1924--1932, April 2018.

\bibitem{hayou19a}
Soufiane Hayou, Arnaud Doucet, and Judith Rousseau.
\newblock On the impact of the activation function on deep neural networks
  training.
\newblock In {\em International Conference on Machine Learning (ICML)}, pages
  2672--2680, June 2019.

\bibitem{Hochreiter98}
Sepp Hochreiter.
\newblock The vanishing gradient problem during learning recurrent neural nets
  and problem solutions.
\newblock {\em International Journal of Uncertainty, Fuzziness and
  Knowledge-Based Systems}, 6(2):107--116, 1998.

\bibitem{Saxe13}
Andrew~M. Saxe, James~L. McClelland, and Surya Ganguli.
\newblock Exact solutions to the nonlinear dynamics of learning in deep linear
  neural networks.
\newblock In {\em International Conference on Learning Representations(ICLR)},
  April 2014.

\bibitem{pennington17a}
Jeffrey Pennington, Samuel~S. Schoenholz, and Surya Ganguli.
\newblock Resurrecting the sigmoid in deep learning through dynamical isometry:
  Theory and practice.
\newblock In {\em International Conference on Neural Information Processing
  Systems (NeurIPS)}, page 4788–4798, December 2017.

\bibitem{Hochreiter1997}
Sepp Hochreiter and J\"{u}rgen Schmidhuber.
\newblock Long short-term memory.
\newblock {\em Neural Computing}, 9(8):1735–1780, November 1997.

\bibitem{kaiming16}
K.~{He}, X.~{Zhang}, S.~{Ren}, and J.~{Sun}.
\newblock Deep residual learning for image recognition.
\newblock In {\em International Conference on Computer Vision and Pattern
  Recognition (CVPR)}, pages 770--778, June 2016.

\bibitem{rupesh15}
Rupesh~. Srivastava, Klaus Greff, and J\"{u}rgen Schmidhuber.
\newblock Training very deep networks.
\newblock In {\em Advances in Neural Information Processing Systems (NeurIPS)},
  pages 2377--2385, 2015.

\bibitem{He2015}
Kaiming He, Xiangyu Zhang, Shaoqing Ren, and Jian Sun.
\newblock Delving deep into rectifiers: Surpassing human-level performance on
  imagenet classification.
\newblock In {\em IEEE International Conference on Computer Vision (ICCV)},
  pages 1026--1034, 2015.

\bibitem{Ioffe2015}
Sergey Ioffe and Christian Szegedy.
\newblock Batch normalization: Accelerating deep network training by reducing
  internal covariate shift.
\newblock In {\em International Conference on International Conference on
  Machine Learning (ICML)}, pages 448--456, 2015.

\bibitem{ba2016}
Jimmy~Lei Ba, Jamie~Ryan Kiros, and Geoffrey~E Hinton.
\newblock Layer normalization.
\newblock {\em arXiv preprint arXiv:1607.06450}, 2016.

\bibitem{hamed17}
Hamed Aghdam and Elnaz Heravi.
\newblock {\em Guide to Convolutional Neural Networks: A Practical Application
  to Traffic-Sign Detection and Classification}.
\newblock Springer, January 2017.

\bibitem{radford_book}
Radford~M. Neal.
\newblock {\em Bayesian Learning for Neural Networks}.
\newblock Springer-Verlag, Berlin, Heidelberg, 1996.

\bibitem{matthews2018gaussian}
Alexander~G. de~G.~Matthews, Jiri Hron, Mark Rowland, Richard~E. Turner, and
  Zoubin Ghahramani.
\newblock Gaussian process behaviour in wide deep neural networks.
\newblock In {\em International Conference on Learning Representations}, 2018.

\bibitem{LeeBNSPS18}
Jaehoon Lee, Yasaman Bahri, Roman Novak, Samuel~S. Schoenholz, Jeffrey
  Pennington, and Jascha Sohl{-}Dickstein.
\newblock Deep neural networks as gaussian processes.
\newblock In {\em International Conference on Learning Representations (ICLR)},
  pages 1--11, April 2018.

\bibitem{boris19}
Boris Hanin and David Rolnick.
\newblock How to start training: The effect of initialization and architecture.
\newblock In {\em Advances in Neural Information Processing Systems (NeurIPS)},
  pages 571--581, December 2018.

\bibitem{Haber_2017}
Eldad Haber and Lars Ruthotto.
\newblock Stable architectures for deep neural networks.
\newblock {\em Inverse Problems}, 34(1):014004, Dec 2017.

\bibitem{backprop}
David~E. Rumelhart, Geoffrey~E. Hinton, and Ronald~J. Williams.
\newblock Learning representations by back-propagating errors.
\newblock {\em Nature}, 323(6088):533--536, 1986.

\bibitem{pennington17}
Jeffrey Pennington and Yasaman Bahri.
\newblock Geometry of neural network loss surfaces via random matrix theory.
\newblock In {\em International Conference on Machine Learning (ICML)}, pages
  2798--2806, 2017.

\bibitem{NEURIPS2018_13f9896d}
Boris Hanin.
\newblock Which neural net architectures give rise to exploding and vanishing
  gradients?
\newblock In {\em Advances in Neural Information Processing Systems},
  volume~31, 2018.

\bibitem{meckes_2019}
Elizabeth~S. Meckes.
\newblock {\em The Random Matrix Theory of the Classical Compact Groups}.
\newblock Cambridge Tracts in Mathematics. Cambridge University Press, 2019.

\bibitem{folland}
G.~B. Folland.
\newblock {\em Real analysis : modern techniques and their applications}.
\newblock Wiley, New York, 1999.

\bibitem{4289989}
G.~K. {Karagiannidis} and A.~S. {Lioumpas}.
\newblock An improved approximation for the gaussian q-function.
\newblock {\em IEEE Communications Letters}, 11(8):644--646, 2007.

\bibitem{lambert}
R.~M. Corless, G.~H. Gonnet, D.~E.~G. Hare, D.~J. Jeffrey, and D.~E. Knuth.
\newblock On the lambertw function.
\newblock {\em Advances in Computational Mathematics}, 5(1):329--359, 1996.

\end{thebibliography}
\clearpage
\appendix
\section{Supporting Lemmas}
For the sake of clarity and completeness we recall here two well known Lemmas concerning Lebesgue integrals. Both can be proved using a combination of Lebesgue's dominated convergence theorem and the mean value theorem (See e.g., Chapter 2 of \cite{folland}). \newline

\begin{lemma} \label{lemma:cont_DCT}
    Let $(X, \cF, \mu)$ be a measure space and $Y\subset \reals$ an open interval. Consider a function $f:X \times Y \rightarrow \reals$ such that the following are true.
    \begin{enumerate}
        \item For each $y \in Y$ then the function $f_y:X \rightarrow \reals$ with $f_y(x) \defeq f(x,y)$ satisfies $f_y(x) \in L^1(X, \cF, \mu)$.
        \item For each $y \in Y$ then $\lim_{y' \rightarrow y} f(x,y') = f(x,y)$ almost everywhere in $X$.
        \item For each $y \in Y$ there exists an open interval $K$, with $y \in K$, and a $g_K(x)\in L^1(X, \cF, \mu)$ such that 
        \[
        |f(x,y')| \leq g_K(x)
        \]
        for all $y' \in K$. 
    \end{enumerate}
    Then the function $F: Y \rightarrow \reals$ with $F(y):= \int_X f(x,y)d\mu$ is continuous on $Y$.\newline
\end{lemma}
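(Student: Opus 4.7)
The plan is to establish continuity of $F$ at an arbitrary point $y \in Y$ via the sequential characterisation of continuity for functions on a subset of $\reals$. That is, I would fix $y \in Y$ and an arbitrary sequence $(y_n)_{n \geq 1} \subset Y$ with $y_n \to y$, and show $F(y_n) \to F(y)$. Since $Y$ is an open interval, hypothesis 3 supplies an open interval $K$ with $y \in K \subset Y$ and a nonnegative envelope $g_K \in L^1(X, \cF, \mu)$ with $|f(x, y')| \leq g_K(x)$ for all $y' \in K$. Because $y_n \to y$ and $K$ is open, there exists $N$ such that $y_n \in K$ for all $n \geq N$; continuity of $F$ at $y$ is unaffected by truncating the first $N-1$ terms, so I may assume $y_n \in K$ for all $n$.

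Next I would set up and invoke Lebesgue's dominated convergence theorem. Define the sequence of measurable functions $\tilde f_n : X \to \reals$ by $\tilde f_n(x) \defeq f(x, y_n)$. Hypothesis 2, applied along the sequence $y_n \to y$, gives $\tilde f_n(x) \to f(x, y)$ for $\mu$-almost every $x \in X$. The reduction above gives the uniform domination $|\tilde f_n(x)| \leq g_K(x)$ for every $n$ and every $x$, with $g_K \in L^1(X, \cF, \mu)$. The dominated convergence theorem therefore applies and yields
\[
    \lim_{n \to \infty} \int_X \tilde f_n \, d\mu \;=\; \int_X f(x,y) \, d\mu,
\]
which is exactly $F(y_n) \to F(y)$. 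As the sequence $(y_n)$ was arbitrary, $F$ is continuous at $y$, and as $y \in Y$ was arbitrary, $F$ is continuous on $Y$.

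\textbf{Main obstacle.} There is no real analytical difficulty here — the result is the standard ``continuity under the integral sign'' consequence of DCT — so the only things to be careful about are bookkeeping. First, hypothesis 2 is stated pointwise-in-$y$, so the $\mu$-null exceptional set may depend on $y$; this is harmless because we only invoke the condition at the single limit $y$, not uniformly. Second, the dominating function $g_K$ supplied by hypothesis 3 depends on $y$ through the neighbourhood $K$, but again this is fine since continuity is a local property and a single $g_K$ suffices to handle the tail of any sequence $y_n \to y$. These points simply need to be noted; no further machinery is required.
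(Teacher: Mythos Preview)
Your proposal is correct and is exactly the standard argument. The paper does not actually give a proof of this lemma; it merely states that it follows from Lebesgue's dominated convergence theorem (the mean value theorem mentioned alongside is needed only for the companion differentiability lemma), which is precisely the tool you invoke via the sequential characterisation of continuity.
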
 

\begin{lemma}
\label{lemma:diff_DCT}
    Let $(X, \cF, \mu)$ be a measure space and $Y\subset \reals$ an open interval. Consider a function $f:X \times Y \rightarrow \reals$ such that the following are true.
    \begin{enumerate}
        \item For each $y \in Y$ then the function $f_y:X \rightarrow \reals$ with $f_y(x) \defeq f(x,y)$ satisfies $f_y(x) \in L^1(X, \cF, \mu)$.
        \item For each $y \in Y$ then $\frac{\partial f}{\partial y}(x,y)$ exists almost everywhere in $X$.
        \item For each $y \in Y$ there exists an open interval $K$, with $y \in K$, and a $g_K(x)\in L^1(X, \cF, \mu)$ such that 
        \[
        |\frac{\partial f}{\partial y}(x,y')| \leq g_K(x)
        \]
        for all $y' \in K$. 
    \end{enumerate}
    Then the function $F:Y \rightarrow \reals$ with $F(y):= \int_X f(x,y)dx$ is differentiable on $Y$ with
    \begin{equation}\label{eq:integral_derivative}
        F'(y) = \int_{X} \frac{\partial f}{\partial y}(x,y)d \mu.
    \end{equation}
\end{lemma}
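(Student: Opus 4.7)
The plan is to prove that $F$ is differentiable at an arbitrary $y \in Y$ by applying Lebesgue's dominated convergence theorem to a sequence of difference quotients, using the mean value theorem to furnish the required integrable dominant. Fix $y \in Y$ and let $K$ and $g_K$ be as provided by hypothesis~3. By the sequential criterion for limits, it suffices to pick any sequence $(h_n)_{n \in \naturals}$ with $h_n \neq 0$, $h_n \to 0$, and $y + h_n \in K$ eventually (legitimate since $K$ is open and contains $y$), and to prove
\[
\frac{F(y + h_n) - F(y)}{h_n} = \int_X \frac{f(x, y + h_n) - f(x, y)}{h_n} \, d\mu(x) \;\longrightarrow\; \int_X \tfrac{\partial f}{\partial y}(x, y)\, d\mu(x),
\]
where the first equality uses hypothesis~1 and the linearity of the integral.

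Next, write $g_n(x) \defeq [f(x, y + h_n) - f(x, y)] / h_n$. By hypothesis~2, $g_n(x) \to \tfrac{\partial f}{\partial y}(x, y)$ for $\mu$-almost every $x \in X$, which is the pointwise convergence ingredient. For the integrable dominant, I would apply the mean value theorem, for each $n$ and $\mu$-almost every fixed $x$, to the scalar map $t \mapsto f(x, t)$ on the closed interval with endpoints $y$ and $y + h_n$. This produces some $\xi_n(x)$ strictly between those endpoints, in particular with $\xi_n(x) \in K$, satisfying $g_n(x) = \tfrac{\partial f}{\partial y}(x, \xi_n(x))$. Hypothesis~3 then gives $|g_n(x)| \leq g_K(x)$ for all sufficiently large $n$ and $\mu$-almost every $x$, with $g_K \in L^1(X, \cF, \mu)$.

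Almost-everywhere pointwise convergence together with an $L^1$ dominant is exactly the input required by the dominated convergence theorem, which delivers the limit exchange and hence both the existence of $F'(y)$ and the explicit formula \eqref{eq:integral_derivative}. The main piece of technical care in this plan is the invocation of the mean value theorem: hypothesis~2 in isolation only asserts existence of $\tfrac{\partial f}{\partial y}(x, y)$ at the single point $y$ for $\mu$-a.e.\ $x$, whereas MVT requires $t \mapsto f(x, t)$ to be differentiable on the whole interval $[\min(y, y + h_n), \max(y, y + h_n)]$. The natural reading of hypothesis~3 is that the partial exists throughout $K$ for $\mu$-almost every $x$ and obeys the stated bound; excluding the null exceptional set on which this fails yields a full-measure set of $x$ on which the MVT step is valid, after which the remainder of the argument is entirely mechanical.
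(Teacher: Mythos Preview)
Your proposal is correct and follows precisely the approach the paper indicates: the paper does not actually give a proof of this lemma but simply states that it ``can be proved using a combination of Lebesgue's dominated convergence theorem and the mean value theorem'' with a reference to Folland. Your identification of the one delicate point---that hypothesis~2 alone only guarantees existence of the partial at the single $y$, so the MVT step relies on the implicit existence throughout $K$ supplied by hypothesis~3---is apt and is exactly the kind of care needed to make the standard textbook argument go through under these hypotheses.
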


We also now present a specific adaptation of integration by parts for piecewise continuously differentiable functions of Gaussian random variables with a finite number of discontinuities and bounded derivative. \newline

\begin{lemma} \label{lemma:piecewise_integration_by_parts}
    Suppose $f:\reals \rightarrow \reals$ is bounded, piecewise continuously differentiable at all but a finite number $T \in \naturals$ of non-differentiable points, $t_1< t_2<... <t_T$, and has bounded derivative. Then
    \begin{equation} \label{eq:piecewise_integration_by_parts}
        \int_\reals zf(z) d \gamma(z) = \sum_{i}^{T} \left[-\frac{\exp(-\frac{1}{2}z^2)}{\sqrt{2 \pi}} f(z) \right]_{t_i^+}^{t_{i}^-} +\int_{\reals}f'(z) d \gamma(z)
    \end{equation}
\end{lemma}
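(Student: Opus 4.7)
The plan is to split $\reals$ at the non-differentiable points and apply the ordinary integration-by-parts formula on each piece, then reassemble. Setting $t_0 \defeq -\infty$ and $t_{T+1} \defeq +\infty$, I would first decompose
\[
\int_\reals zf(z)\, d\gamma(z) = \sum_{i=0}^{T} \int_{t_i}^{t_{i+1}} zf(z)\, d\gamma(z),
\]
noting that each piece is (possibly improperly) finite because $f$ is bounded and $|z|e^{-z^2/2}$ is integrable.

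On each open subinterval $(t_i,t_{i+1})$, $f$ is continuously differentiable, and $-e^{-z^2/2}/\sqrt{2\pi}$ is an antiderivative of $z e^{-z^2/2}/\sqrt{2\pi}$, so the classical integration-by-parts formula (applied on compact subintervals and passing to one-sided limits at the endpoints) gives
\[
\int_{t_i}^{t_{i+1}} zf(z)\, d\gamma(z) = \left[-\frac{e^{-z^2/2}}{\sqrt{2\pi}} f(z)\right]_{t_i^+}^{t_{i+1}^-} + \int_{t_i}^{t_{i+1}} f'(z)\, d\gamma(z).
\]
Summing over $i=0,\dots,T$ and using that $\{t_1,\dots,t_T\}$ has Lebesgue measure zero, the derivative terms combine into the single integral $\int_\reals f'(z)\, d\gamma(z)$ on the right-hand side of \eqref{eq:piecewise_integration_by_parts}.

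The remaining boundary pieces telescope: writing $G(z) \defeq -e^{-z^2/2} f(z)/\sqrt{2\pi}$, the outermost endpoints $G(\pm\infty)$ vanish because $f$ is bounded and $e^{-z^2/2}$ decays super-exponentially, while each interior $t_i$ appears twice, once as the upper endpoint of $(t_{i-1},t_i)$ contributing $G(t_i^-)$ and once as the lower endpoint of $(t_i,t_{i+1})$ contributing $-G(t_i^+)$. These combine to give precisely the $i$-th term $[G(z)]_{t_i^+}^{t_i^-}$ of the claimed sum. The main point requiring care is the improper integration-by-parts on $(-\infty,t_1)$ and $(t_T,\infty)$, handled by a standard truncation argument with Gaussian tail bounds; I do not anticipate any real obstacle, only careful bookkeeping of the one-sided limits $f(t_i^\pm)$, which may differ if $f$ has jump discontinuities at the $t_i$ and which collapse to zero contributions when $f$ is continuous (as is the case for the activations in Definition \ref{def:Omega} themselves, though not for $\phi'$).
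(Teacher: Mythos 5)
Your proposal is correct and follows essentially the same route as the paper: split the real line at the non-differentiable points, apply classical integration by parts on each piece, let the boundary terms at $\pm\infty$ vanish by boundedness of $f$ and Gaussian decay, and collect the interior one-sided boundary terms into the stated sum. The only cosmetic difference is that the paper truncates globally to $[-n,n]$ and justifies the passage to the limit via monotone convergence, whereas you truncate the two unbounded pieces separately; both handle the same technical point and the telescoping/sign bookkeeping in your argument matches the paper's.
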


\begin{proof}
    We first note that as $f$ and $f'$ are continuous and continuous almost everywhere but at a finite number of points respectively, then they are clearly measureable with respect to the completion of the Borel sigma algebra on $\reals$. Additionally, under the assumptions that $f$ and $f'$ are bounded, it follows that $f(z), zf(z), f'(z) \in L^1(\reals, \cB(\reals), \gamma)$ where $\gamma$ is the standard one dimensional Gaussian measure. Defining, for typographical ease, $g(z)\defeq zf(z)$, $g_{-}(z) \defeq \max\{-zf(z), 0\} $ and $g_{+}(z) \defeq \max\{zf(z), 0\}$, then as $|g_{\pm}(z)|\leq |g(z)|$ it follows that $g_{-}(z), g_{+}(z) \in L^1(\reals, \cB(\reals), \gamma)$. As both $g_{-}$ and $g_{+}$ are nonnegative functions then $(g_{-} \textbf{1}_{[-n,n]})_{n\in \naturals}$ and $(g_{+} \textbf{1}_{[-n,n]})_{n \in \naturals}$ are sequences of non-decreasing functions converging to $g_{-}$ and $g_{+}$ respectively. Therefore, by monotone convergence
    \[
    \begin{aligned}
    \int_{\reals} zf(z) d \gamma(z) &= \int_\reals g_{+}(z) d \gamma(z) - \int_\reals g_{-}(z) d \gamma(z)\\
    &= \lim_{n\rightarrow \infty} \int_{-n}^n g_{+}(z) d \gamma(z) - \lim_{n\rightarrow \infty}\int_{-n}^n g_{-}(z) d \gamma(z)\\
    &= \lim_{n\rightarrow \infty}\int_{-n}^n zf(z) d \gamma(z).
    \end{aligned}
    \]
    Let $n \in \naturals$ be any integer such that $n>|t_T|$. For typographical ease let $t_0 \defeq -n$ and $t_{T+1} \defeq n$, we proceed to analyse the integral of interest over $[-n,n]$.
    \[
    \begin{aligned}
    \int_{-n}^n z f(z) d \gamma(z)
    & = \sum_{i=0}^T\int_{t_{i}}^{t_{i+1}} f(z) z \frac{\exp(-\frac{1}{2}z^2)}{\sqrt{2 \pi}}  dz.
    \end{aligned}
    \]
    By construction $f$ is continuously differentiable on each of the above intervals of integration. Standard integration by parts gives
    \[
    \int_{t_i}^{t_{i+1}} f(z) z \frac{\exp(-\frac{1}{2}z^2)}{\sqrt{2 \pi}}  dz = \left[-\frac{\exp(-\frac{1}{2}z^2)}{\sqrt{2 \pi}} f(z) \right]_{t_i^+}^{t_{i+1}^-} + \int_{t_i}^{t_{i+1}} f'(z) d \gamma(z).
    \]
    Collecting terms it follows that
    \[
    \begin{aligned}
     \int_{-n}^n z f(z) d \gamma(z) &= \sum_{i=0}^{T+1} \left(\left[-\frac{\exp(-\frac{1}{2}z^2)}{\sqrt{2 \pi}} f(z) \right]_{t_i^+}^{t_{i+1}^-} + \int_{t_i}^{t_{i+1}} f'(z) d \gamma(z) \right)\\
     & = \left[-\frac{\exp(-\frac{1}{2}z^2)}{\sqrt{2 \pi}} f(z) \right]_{-n}^{n} + \sum_{i=1}^{T} \left[-\frac{\exp(-\frac{1}{2}z^2)}{\sqrt{2 \pi}} f(z) \right]_{t_i^+}^{t_{i}^-} + \int_{-n}^{n} f'(z) d \gamma(z).
    \end{aligned}
    \]
    Defining $f_{-}'(z)\defeq \max \{-f'(z),0 \}$ and $f_{+}'(z)\defeq \max \{f'(z),0 \}$, then as $|f_{\pm}(z)|\leq |f(z)|$ it follows that $f_{-}(z), f_{+}(z) \in L^1(\reals, \cB(\reals), \gamma)$. As both $f_{-}$ and $f_{+}$ are nonnegative functions, $(f_{-} \textbf{1}_{[-n,n]})_{n\in \naturals}$ and $(f_{+} \textbf{1}_{[-n,n]})_{n \in \naturals}$ are sequences of non-decreasing functions converging to $f^+$ and $f^-$ respectively. Therefore by monotone convergence
    \[
    \begin{aligned}
    \int_{\reals} f'(z) d \gamma(z) &= \int_\reals f_{-}'(z) d \gamma(z) - \int_\reals f_{+}'(z) d \gamma(z)\\
    &= \lim_{n\rightarrow \infty} \int_{-n}^n f_{-}'(z) d \gamma(z) - \lim_{n\rightarrow \infty}\int_{-n}^n f_{+}'(z) d \gamma(z)\\
    &= \lim_{n\rightarrow \infty}\int_{-n}^n f'(z) d \gamma(z).
    \end{aligned}
    \]
    As a result
    \[
    \begin{aligned}
    \int_{\reals} z f(z) d \gamma(z) & = \lim_{n \rightarrow \infty} \int_{-n}^n z f(z) d \gamma(z)\\
    & = \lim_{n \rightarrow \infty}\left[-\frac{\exp(-\frac{1}{2}z^2)}{\sqrt{2 \pi}} f(z) \right]_{-n}^{n} + \sum_{i=1}^{T} \left[-\frac{\exp(-\frac{1}{2}z^2)}{\sqrt{2 \pi}} f(z) \right]_{t_i^+}^{t_{i}^-} + \lim_{n \rightarrow \infty}\int_{-n}^{n} f'(z) d \gamma(z)\\
    & = \sum_{i=1}^{T} \left[-\frac{\exp(-\frac{1}{2}z^2)}{\sqrt{2 \pi}} f(z) \right]_{t_i^+}^{t_{i}^-} +\int_{\reals}f'(z) d \gamma(z)
    \end{aligned}
    \]
    as claimed. 
\end{proof}


\section{Section \ref{sec:theory} proofs}
\subsection{Lemma \ref{lemma:W}} \label{appendix:proof_lemma_W}
\textbf{Lemma \ref{lemma:W}.} \textit{Let $\phi$ be a scaled-bounded activation, see Definition \ref{def:Omega}, and $\sigma_b^2>0$. Define
    \[
        W_{\phi}(q) := \frac{\expec[\phi(\sqrt{q}Z)^2]}{\expec[\phi'(\sqrt{q}Z)^2]} + \sigma_b^2
    \]
    for all $q \in \reals_{\geq 0}$. Then $W_{\phi}: \reals_{\geq 0} \rightarrow \reals_{\geq 0}$ and $W$ has a fixed point $q^*>0$.}

\begin{proof}
    We first prove that $W_{\phi}: \reals_{\geq 0} \rightarrow \reals_{\geq 0}$. To bound the numerator term then for any scaled-bounded activation function $\phi$, see Definition \ref{def:Omega}, it follows that there exists $a,k\in \reals{>0}$ such that
    \begin{equation} \label{eq:phi_expec_bound}
        0 \leq \expec[\phi(\sqrt{q}Z)^2] < a^2 k^2 < \infty.
    \end{equation}
    for all $q \in \reals_{\geq 0}$. As $\expec[\phi'(\sqrt{q}Z)^2] \geq 0$ it suffices to show that the denominator is nonzero. For a given $q\in \reals_{\geq 0}$. as $\phi'(\sqrt{q}z)^2\geq 0$ for all $z \in \reals \backslash \cD$, then $\expec[\phi'(\sqrt{q}Z)^2]=0$ implies that $\phi'(\sqrt{q}z)=0$ almost everywhere for $z \in \reals$. This is a contradiction however as there exists by construction an $a>0$ such that $\phi(\sqrt{q}z) = kz$ for $z\in [-a/\sqrt{q},a/\sqrt{q}]$. As a result $\expec[\phi'(\sqrt{q}Z)^2] > 0$. We therefore conclude that $W_{\phi}: \reals_{\geq 0} \rightarrow \reals_{\geq 0}$.
    
    To prove that $W_{\phi}$ has a fixed point $q^*>0$ we need to lower and upper bound $\expec[\phi'(\sqrt{q}Z)^2]$. A lower bound can be derived as follows,
    \[
    \begin{aligned}
    \expec[\phi'(\sqrt{q}Z)^2] &= \int_{\reals} \phi'(\sqrt{q}Z)^2 d \gamma\\
    & = 2\int_{0}^{\infty} \phi'(\sqrt{q}Z)^2 d \gamma\\
    & = 2\int_{0}^{a/ \sqrt{q}} k^2 d \gamma + 2\int_{a/ \sqrt{q}}^{\infty} \phi'(\sqrt{q}Z)^2  d \gamma\\
    &> 2 k\int_0^{a/ \sqrt{q}} \phi'(\sqrt{q}Z)^2 d \gamma\\
    & = k^2\text{erf}\left( \frac{a}{\sqrt{2q}}\right).
    \end{aligned}
    \]
    Here the second equality follows from the fact that integrand is an even function and the equality on the third line by the construction of $\phi$. The inequality on the fourth line follows from zeroing the second integral which is positive. Furthermore $|\phi'(z)|\leq k$ almost everywhere by construction, therefore we conclude that
    \begin{equation}\label{eq:phi_diff_expec_bound}
        k^2\text{erf}\left( \frac{a}{\sqrt{2q}}\right) \leq \expec[\phi'(\sqrt{q}Z)^2] < k^2 < \infty
    \end{equation}
    for all $q \in \reals_{0}$. 
    
    We now prove that $W_{\phi}$ is continuous. Due to fact that  $\expec[\phi'(\sqrt{q}Z)^2] > 0$ then $W_{\phi}$ has no singularities. It therefore suffices to prove that both $q \mapsto \expec[\phi(\sqrt{q}Z)^2]$ and $q \mapsto \expec[\phi'(\sqrt{q}Z)^2]$ are continuous functions on $\reals_{\geq 0}$. In both cases we achieve this by applying Lemma \ref{lemma:cont_DCT} in the context of the measure space $(\reals, \cB(\reals), \gamma)$, where $\gamma$ denotes the standard one dimensional Gaussian measure and $\cB(\reals)$ the completion of the Borel $\sigma$ algebra on the real numbers. For $q \mapsto \expec[\phi(\sqrt{q}Z)^2]$ with $q \in \reals_{\geq0}$ then condition 1 is satisfied due to \eqref{eq:phi_expec_bound}, condition 2 follows from the continuity of $\phi$ and condition 3 is satisfied by $g(z) \defeq a^2k^2$. For $q \mapsto \expec[\phi'(\sqrt{q}Z)^2]$ with $q \in \reals_{\geq0}$ then condition 1 is satisfied due to \eqref{eq:phi_diff_expec_bound}, condition 2 follows from the fact that $\phi'$ is continuous almost everywhere in $\reals$ and condition 3 is satisfied for all $q \in \reals_{\geq 0}$ by $g(z): = k^2$. We conclude then that $W_{\phi}$ is continuous on $\reals_{\geq 0}$.

    To prove that $0$ is not a fixed point, observe that $\frac{\expec[\phi(\sqrt{q}Z)^2]}{\expec[\phi'(\sqrt{q}Z)^2]}\geq 0$ and therefore $W_{\phi}(q) \geq \sigma_b^2>0$ for all $q \in \reals_{\geq0}$. Using \eqref{eq:phi_expec_bound} and \eqref{eq:phi_diff_expec_bound} we may in addition derive the following upper bound on $W_{\phi}$,
    \begin{equation} \label{eq:W_bounds}
        \sigma_b^2 \leq W_{\phi}(q) < \frac{a^2}{\text{erf}\left( \frac{a}{\sqrt{2q}}\right)} + \sigma_b^2 \eqdef U(q)
    \end{equation}
    for all $q \in \reals_{\geq 0}$. Observe that as $W_{\phi}$ is continuous and $W_{\phi}(0)>0$, then if $W_{\phi}$ has no fixed points it must hold that $W_{\phi}(q) > q$ for all $q \in \reals_{\geq 0}$. Otherwise, by the intermediate value theorem, the function $W_{\phi}(p) - p$ must have a root and hence $W$ must have a fixed point $q^*>0$. Considering the upper bound $U$ on $W_{\phi}$ from equation \ref{eq:W_bounds}, assume that $U(q)>q$ for all $q\in \reals_{\geq 0}$. Then
    \[
    \begin{aligned}
    \frac{a^2}{\text{erf}\left( \frac{a}{\sqrt{2q}}\right)} + \sigma_b^2&>q,\\
     a^2 + \sigma_b^2\text{erf}\left( \frac{a}{\sqrt{2q}} \right) &> q \text{erf}\left( \frac{a}{\sqrt{2q}} \right)\\
    \end{aligned}
    \]
    However, $\lim_{q \rightarrow \infty}  a^2 + \sigma_b^2\text{erf} \left( \frac{a}{\sqrt{2q}} \right)  = a^2 < \infty$ while $\lim_{q \rightarrow \infty}q \text{erf}\left( \frac{a}{\sqrt{2q}} \right) = \infty$. As $q \text{erf}\left( \frac{a}{\sqrt{2q}} \right)$ is continuous then there must exist a $q \in \reals_{\geq 0}$ such that $U(q)<q$, which is a contradiction. We therefore conclude that $W_{\phi}(q)<U(q)<q$ for some $q\in \reals_{\geq 0}$ and that therefore $W_{\phi}$ must have a fixed point $q^*>0$.
\end{proof}

\subsection{Corollary \ref{cor:V_fixed_point}} \label{appendix:proof_V_fixed_point}
\textbf{Corollary  \ref{cor:V_fixed_point}.} \textit{  Let $\phi$ be a scaled-bounded activation, see Definition \ref{def:Omega}, $\sigma_b^2>0$ and suppose
    \[
        \chi_1 \defeq \sigma_w^2 \expec[\phi'(\sqrt{q^*}Z)^2] =1,
    \]
    where $q^*>0$ is a fixed point of $W_{\phi}$, defined in (\ref{eq:W}). Then $q^*$ is a fixed point of the associated variance function $V_{\phi}$, defined in (\ref{eq:var_map}).}
    
\begin{proof}
    From Lemma \ref{lemma:W} it holds that there exists $q^*>0$ such that $W_{\phi}(q^*) = q^*$. Inspecting \eqref{eq:var_map} and \eqref{eq:W} it follows that $V_{\phi}(q^*) = W_{\phi}(q^*) = q^*$, therefore $q^*$ is a fixed point of $V_{\phi}$.
\end{proof}

\subsection{Lemma \ref{lemma:omega_corr_diff}} \label{appendix:proof_omega_corr_diff}
\textbf{Lemma \ref{lemma:omega_corr_diff}.} \textit{ Let $\phi$ be a scaled-bounded activation, see Definition \ref{def:Omega}, $\sigma_b^2>0$ and suppose
    \[
        \chi_1 \defeq \sigma_w^2 \expec[\phi'(\sqrt{q^*}Z)^2] =1,
    \]
    where $q^*>0$ is a fixed point of $W_{\phi}$. In addition, assume that all inputs $\vx$ are normalised so that $||\vx||_2^2 = q^*$. Then the associated correlation map $R_{\phi, q*}$, defined in \eqref{eq:corr_map}, is fixed at each layer $l \in [L]$, satisfies $R_{\phi, q*}:[-1,1] \rightarrow [-1,1]$ and is differentiable with
    \[
        R_{\phi, q^*}'(\rho) = \sigma_w^2 \expec[\phi'(U_1)\phi'(U_2)]
    \]
    for all input correlations $\rho \in (-1,1)$.}

\begin{proof}
    Recall that $U_1 := \sqrt{q^*} Z_1$ and $U_2 := \sqrt{q^*}(\rho Z_1 + \sqrt{1- \rho^2}Z_2)$. From Corollary \ref{cor:V_fixed_point} we know that $q^*$ is a fixed point of $V_{\phi}$ and therefore the variance of all inputs, given the assumed normalisation, will remain fixed at $q^*$ for all layers of the network. Since $q^*>0$, then $R_{\phi, q*}:[-1,1] \rightarrow [-1,1]$ by construction as long as the correlation function \eqref{eq:corr_map} is finite for any $\rho \in [-1,1]$. This follows by Cauchy-Schwarz,
     \begin{equation} \label{eq:H_arg_integrable}
     \begin{aligned}
        |\expec[\phi(U_1)\phi(U_2)]| \leq \expec[\phi(U_1)^2]^{1/2}\expec[\phi'(U_2)^2]^{1/2}
         < a^2k^2 < \infty.
    \end{aligned}
     \end{equation}
     It remains to be proved that $R_{\phi, q^*}$ is differentiable on $(-1,1)$ and to derive \eqref{eq:omega_corr_map}.  To this end it suffices to show that $H(\rho) \defeq \expec[\phi(U_1)\phi(U_2)]$ is differentiable on $(-1,1)$ and derive an expression for its derivative. We rewrite $H$ as follows,
    \[
     \begin{aligned}
        H(\rho) &\defeq \int_{\reals \times \reals}\phi(u_1) \phi(u_2) d \gamma^{(2)}(z_1,z_2)
     \end{aligned}
     \]
     recalling that $\gamma^{(2)}$ denotes the standard two dimensional Gaussian measure. We proceed by applying Lemma \ref{lemma:diff_DCT} in the context of the measure space $(\reals^2, \cB(\reals^2), \gamma^2)$, where $\cB(\reals^2)$ denotes the completion of the Borel $\sigma$-algebra on $\reals^2$, and the interval $(-1,1)$. First observe that condition 1 of Lemma \ref{lemma:diff_DCT} is satisfied as
     \[
     \int_{\reals \times \reals} |\phi(u_1) \phi(u_2)| d \gamma^{(2)}(z_1,z_2) < a^2k^2 < \infty.
     \]
     For condition 2, by construction $(\phi \circ u_2)(\rho, z_1, z_2)$ is non-differentiable only on the set
     \[
     \bigcup_{i=1}^{|\cD|} \{(z_1,z_2): \rho z_1 + \sqrt{1 - \rho^2}z_2 = \frac{d_i}{\sqrt{q^*}}\}.
     \]
    This is the union of a finite number of one dimensional lines in $\reals^2$ and hence has measure 0. Therefore, for each $\rho \in (-1,1)$, then $\frac{\partial \phi \circ u_2}{(\partial \rho)}(x,z_1,z_2)$ exists almost everywhere and hence condition 2 is also satisfied. Finally, for condition 3 note that this partial derivative can be expressed as
    \[
    \begin{aligned}
        \frac{\partial \phi \circ u_2}{\partial \rho}(\rho,z_1,z_2) &= \frac{\partial u_2}{\partial \rho} (\rho, z_1,z_2) \phi'(u_2)\\
        &=\sqrt{q^*}\left( z_1 - \frac{\rho}{\sqrt{1 - \rho^2}}z_2\right)\phi'(u_2),
    \end{aligned}
    \]
    from which it follows
    \[
        \begin{aligned}
        \left| \frac{\partial \phi \circ u_2}{\partial \rho}(\rho,z_1,z_2) \right| &\leq \sqrt{q^*}\left( |z_1| + \frac{|\rho|}{\sqrt{1 - \rho^2}}|z_2| \right) |\phi'(u_2)| \\
        & < \sqrt{q^*}k\left( |z_1| + \frac{|\rho|}{\sqrt{1 - \rho^2}}|z_2| \right).
        \end{aligned}
    \]
     For any $\rho\in (-1,1)$ consider the open interval $(-1+\delta(\rho), 1- \delta(\rho))$, where $\delta(\rho):= \frac{|1-\rho|}{2}$. It follows that 
    \[
    \sup_{\rho \in (-1+\delta(\rho), 1- \delta(\rho))} \left| \frac{\partial \phi \circ u_2}{\partial \rho}(\rho,z_1,z_2) \right| \leq \sqrt{q^*}k \left(|z_1| +\frac{1-\delta(\rho)}{\sqrt{2\delta(\rho)-\delta(\rho)^2}} |z_2|\right) \eqdef g_K(z_1,z_2).
    \]
    Letting $\kappa_{\delta} \defeq  \frac{1-\delta(\rho)}{\sqrt{2\delta(\rho)-\delta(\rho)^2}}$, then by applying the Fubini-Tonelli theorem it follows that
    \[
    \begin{aligned}
    \int_{\reals^2} g_K(z_1,z_2) d \gamma^{(2)}(z_1,z_2)  & = \sqrt{q^*}k \left( \int_{\reals^2} |z_1| d \gamma^{(2)}(z_1,z_2) +  \kappa_{\delta} \int_{\reals^2} |z_2| d\gamma^{(2)}(z_1,z_2) \right)\\
    &=\sqrt{q^*}k \sqrt{\frac{2}{\pi}}\left(1 + \kappa_{\delta} \right)\\
    &< \infty.
    \end{aligned}
    \]
    Hence for any $\rho \in (-1,1)$ there exists an open interval $K=(-1+\delta(\rho), 1- \delta(\rho))$ with $\rho \in K$ and an integrable function $g_{K}(z_1, z_2):\reals^2 \rightarrow \reals$, such that
    \[
        \left|\frac{\partial (\phi \circ u_2)}{\partial \rho}(\rho,z_1,z_2) \right| \leq  g_{K}(z_1, z_2).
    \]
    We conclude then that condition 3 is also satisfied, and therefore, by Lemma \ref{lemma:diff_DCT}, $H$ is differentiable on $(-1,1)$. The derivative of $H$ can be expressed as follows,
    \begin{equation}\label{eq:H_diff_intermediate_result}
    \begin{aligned}
         H'(\rho) &= \int_{\reals^2} \phi(u_1)  \frac{\partial u_2}{\partial \rho}\phi'(u_2) d\gamma^{(2)}(z_1,z_2)\\
         & = \sqrt{q^*} \int_{\reals}\int_{\reals} \phi(u_1)\left( z_1 - \frac{\rho}{\sqrt{1 - \rho^2}}z_2\right)\phi'(u_2)d\gamma(z_1) d\gamma(z_2)\\
         & = \sqrt{q^*} \int_{\reals} \left( \int_{\reals} z_1\phi(u_1)\phi'(u_2)d \gamma(z_1)\right) d \gamma(z_2)
         - \frac{\rho\sqrt{q^*}}{\sqrt{1 - \rho^2}} \int_{\reals}\phi(u_1)\left(\int_{\reals} z_2 \phi'(u_2)d\gamma(z_2) \right) d\gamma(z_1),
    \end{aligned}
    \end{equation}
    where the third equality in the above follows by applying the Fubini-Tonelli theorem. We proceed first to derive an expression for $H'(0)$ and then to analyse $H'(\rho)$ for $\rho \in (-1,0)\cup (0,1)$. In all that follows, and as $\phi$ is odd, we let $-d_T...<-d_2<-d_1<d_1<d_2..<d_T$ be the elements of $\cD$ where $T \defeq |\cD|/2$. From \eqref{eq:H_diff_intermediate_result} it follows that
    \[
    \begin{aligned}
       H'(0) &= \sqrt{q^*} \int_{\reals} \left( \int_{\reals} z_1\phi(\sqrt{q^*}z_1)\phi'(\sqrt{q^*z_2})d\gamma(z_1)\right) d\gamma(z_2)\\
       &= \sqrt{q^*} \int_{\reals} \phi'(\sqrt{q^*z_2}) \left( \int_{\reals} z_1\phi(\sqrt{q^*}z_1)d\gamma(z_1)\right) d\gamma(z_2)
    \end{aligned}
    \]
    By construction $\phi(\sqrt{q^*}z_1)$ is differentiable at all points other than
    \[
    \begin{aligned}
    & s_i \defeq \frac{d_i}{\sqrt{q^*}},\\
    & l_i \defeq -\frac{d_i}{\sqrt{q^*}}
    \end{aligned}
    \]
    for $i \in [T]$. Applying Lemma \ref{lemma:piecewise_integration_by_parts}
    \[
    \begin{aligned}
        \int_{\reals} z_1\phi(\sqrt{q^*}z_1)d\gamma(z_1) &= - \sum_{i}^{T} \left( \left[\frac{\exp(-\frac{1}{2}z_1^2)}{\sqrt{2 \pi}} \phi(\sqrt{q^*}z_1)\right]_{l_i^+}^{l_{i}^-} + \left[\frac{\exp(-\frac{1}{2}z_1^2)}{\sqrt{2 \pi}} \phi(\sqrt{q^*}z_1) \right]_{s_i^+}^{s_{i}^-}\right)\\ 
        &+ \sqrt{q^*}\int_{\reals}\phi'(\sqrt{q^*}z_1) d \gamma(z_1)\\
        & =\sqrt{q^*} \int_{\reals}\phi'(\sqrt{q^*}z_1) d \gamma(z_1).
    \end{aligned}
    \]
    Here the second equality follows from the continuity of $\phi$. Therefore
    \begin{equation}
    \begin{aligned}
       H'(0) &=  \sqrt{q^*} \int_{\reals} \phi'(\sqrt{q^*z_2}) \left( \sqrt{q^*} \int_{\reals}\phi'(\sqrt{q^*}z_1) d \gamma(z_1)\right) d\gamma(z_2)\\
       &= q^* \int_{\reals^2} \phi'(\sqrt{q^*}z_1)\phi'(\sqrt{q^*}z_2) d \gamma^{(2)}(z_1,z_2)
    \end{aligned}
    \end{equation}

    To derive an expression for $H'(\rho)$ for any $\rho \in (-1,0)\cup (0,1)$, we apply integration by parts to each of the inner integrals in \eqref{eq:H_diff_intermediate_result} using Lemma \ref{lemma:piecewise_integration_by_parts}. Starting with the second inner integral, with $\rho \in (-1,0)\cup(0,1)$ and $z_1 \in \reals$ fixed, we define $\psi:\reals \rightarrow \reals$ as the function $z_2 \mapsto (\phi' \circ u_2)(z_1,z_2, \rho)$. By construction $\phi'$ is continuously differentiable at all points other than the following,
    \[
    \begin{aligned}
    p_i(z_1) &\defeq \frac{d_i}{\sqrt{q^*(1 - \rho^2)}} - z_1 \frac{ \rho}{\sqrt{1-\rho^2}},\\
    n_i(z_1) &\defeq -\frac{d_i}{\sqrt{q^*(1 - \rho^2)}} - z_1 \frac{ \rho}{\sqrt{1-\rho^2}}
    \end{aligned}
    \]
    for all $i \in [T]$. Applying Lemma \ref{lemma:piecewise_integration_by_parts} it follows that
    \[
    \begin{aligned}
    \int_{\reals} z_2 \psi(z_2)d\gamma(z_2) &= - \sum_{i}^{T} \left( \left[\frac{\exp(-\frac{1}{2}z_2^2)}{\sqrt{2 \pi}} \psi(z_2) \right]_{n_i^+}^{n_{i}^-} + \left[\frac{\exp(-\frac{1}{2}z_2^2)}{\sqrt{2 \pi}} \psi(z_2) \right]_{p_i^+}^{p_{i}^-}\right) +\int_{\reals}\psi'(z_2) d \gamma(z_2)\\
    & \eqdef - \kappa_1(z_1) + \int_{\reals}\psi'(z_2) d \gamma(z_2).
    \end{aligned}
    \]
    Therefore the second integral on the final line of \eqref{eq:H_diff_intermediate_result} can be expressed as
    \[
    \begin{aligned}
    &\int_{\reals}\phi(u_1)\left(\int_{\reals} z_2 \phi'(u_2)d\gamma(z_2) \right) d\gamma(z_1) \\
    & = -\int_{\reals}\phi(u_1) \kappa_1(z_1)d\gamma(z_1) + \int_{\reals}\phi(u_1) \int_{\reals}\psi'(z_2) d \gamma(z_2)d\gamma(z_1)\\
    &= -\int_{\reals}\phi(u_1) \kappa_1(z_1)d\gamma(z_1) + \sqrt{q^*(1 - \rho^2)}\int_{\reals^2}\phi(u_1) \phi''(u_2) d \gamma^{(2)}(z_1,z_2).
    \end{aligned}
    \]
    Analysing $\kappa_1(z_1)$, then by construction $\psi(n_i) = \phi'(-d_i)$ and $\psi(p_i) = \phi'(d_i)$. Furthermore, as $\phi$ is odd and continuous then $\phi'$ is even and as a result $\phi'(-d_i^-) = \phi'(d_i^+)$ and $\phi'(-d_i^+) = \phi'(d_i^-)$. Therefore
    \[
    \begin{aligned}
    \kappa_1(z_1) &= \sum_{i}^{T} \left( \left[\frac{\exp(-\frac{1}{2}z^2)}{\sqrt{2 \pi}} \psi(z) \right]_{n_i^+}^{n_{i}^-} + \left[\frac{\exp(-\frac{1}{2}z^2)}{\sqrt{2 \pi}} \psi(z) \right]_{p_i^+}^{p_{i}^-}\right)\\
    & = \sum_{i}^{T} \left(\frac{\exp(-\frac{1}{2}n_i^2)}{\sqrt{2 \pi}} \left(\psi(n_i^-) - \psi(n_i^+)  \right) + \frac{\exp(-\frac{1}{2}p_i^2)}{\sqrt{2 \pi}} \left(\psi(p_i^-) - \psi(p_i^+)  \right) \right)\\
    & = \sum_{i}^{T} \left(\frac{\exp(-\frac{1}{2}n_i^2)}{\sqrt{2 \pi}} \left(\phi'(-d_i^-) - \phi'(-d_i^+)  \right) + \frac{\exp(-\frac{1}{2}p_i^2)}{\sqrt{2 \pi}}  \left(\phi'(d_i^-) - \phi'(d_i^+)  \right) \right)\\
    & = \sum_{i}^{T} \left(\frac{\exp(-\frac{1}{2}n_i^2)}{\sqrt{2 \pi}} \left(\phi'(d_i^+) - \phi'(d_i^-)  \right) + \frac{\exp(-\frac{1}{2}p_i^2)}{\sqrt{2 \pi}}  \left(\phi'(d_i^-) - \phi'(d_i^+)  \right) \right)\\
    & = \frac{1}{\sqrt{2 \pi}}\sum_{i}^{T} \left(\phi'(d_i^+) - \phi'(d_i^-)  \right) \left(\exp(-\frac{1}{2}n_i^2)- \exp(-\frac{1}{2}p_i^2)\right).
    \end{aligned}
    \]
    Expanding the integral involving $\kappa_1$ then
    \[
    \begin{aligned}
    \int_{\reals}\phi(u_1) \kappa_1(z_1)d\gamma(z_1) = \sum_{i}^{T} \left(\phi'(d_i^+) - \phi'(d_i^-)  \right) \int_{\reals} \phi(u_1)\frac{\exp(-\frac{1}{2}n_i^2)- \exp(-\frac{1}{2}p_i^2)}{\sqrt{2 \pi}}d \gamma(z_1).
    \end{aligned}
    \]
    Observe that as
    \[
    \begin{aligned}
    &p_i(z_1)^2 = \frac{d_i^2}{q^*(1 - \rho^2)} - z_1 \frac{2d_i\rho}{\sqrt{q^*}(1 - \rho^2)} + z_1^2 \frac{\rho^2}{1- \rho^2}\\
    &n_i(z_1)^2 = \frac{d_i^2}{q^*(1 - \rho^2)} + z_1 \frac{2d_i\rho}{\sqrt{q^*}(1 - \rho^2)} + z_1^2 \frac{\rho^2}{1- \rho^2}
    \end{aligned}
    \]
    then clearly $n_i(z_1)^2 = p_i(-z_1)^2$. Let $\beta_i(z_1) \defeq e^{-\frac{1}{2}n_i(z_1)^2} - e^{-\frac{1}{2}p_i(z_1)^2}$ for all $i \in [T]$, then these $\beta_i$ are odd functions as
    \[
    \begin{aligned}
    \beta_i(-z_1) &= e^{-\frac{1}{2}n_i(-z_1)^2} - e^{-\frac{1}{2}p_i(-z_1)^2}\\
    & = e^{-\frac{1}{2}p_i(z_1)^2} - e^{-\frac{1}{2}n_i(z_1)^2}\\
    & = -\beta_i(z_1).
    \end{aligned}
    \]
    As the product of two odd functions is odd then
    \[
    \begin{aligned}
    \int_{\reals}\phi(u_1) \kappa_1(z_1)d\gamma(z_1) = \frac{1}{\sqrt{2 \pi}} \sum_{i}^{T} \left(\phi'(d_i^+) - \phi'(d_i^-)  \right) \int_{\reals} \phi(u_1) \beta_i(z_1) d \gamma(z_1)
     = 0
    \end{aligned}
    \]
    and so 
    \begin{equation} \label{eq:righthand_H_diff}
        \int_{\reals}\phi(u_1)\left(\int_{\reals} z_2 \phi'(u_2)d\gamma(z_2) \right) d\gamma(z_1) = \sqrt{q^*(1 - \rho^2)}\int_{\reals^2}\phi(u_1) \phi''(u_2) d \gamma^{(2)}(z_1,z_2).
    \end{equation}
    Now we turn our attention to the first inner integral on the final line of \eqref{eq:H_diff_intermediate_result}. With $\rho\in (-1,0) \cup (0,1)$ and $z_1\in \reals$ fixed, and defining for typographical ease $\varphi:\reals \rightarrow \reals$ as the function $z_1 \mapsto (\phi' \circ u_2)(z_1,z_2,\rho)$, then $\phi(u_1)\phi'(u_2)$ is continuously differentiable at all points other than the following,
    \[
    \begin{aligned}
    &o_i(z_2) \defeq \frac{d_i}{\rho \sqrt{q^*}} - z_2\frac{\sqrt{1 - \rho^2}}{\rho}\\
    &e_i(z_2) \defeq -\frac{d_i}{\rho \sqrt{q^*}} - z_2\frac{\sqrt{1 - \rho^2}}{\rho}\\
    & s_i \defeq \frac{d_i}{\sqrt{q^*}}\\
    & l_i \defeq -\frac{d_i}{\sqrt{q^*}}
    \end{aligned}
    \]
    for all $i \in [T]$. Applying Lemma \ref{lemma:piecewise_integration_by_parts},
    \[
    \begin{aligned}
    \int_{\reals}z_1\phi(\sqrt{q^*}z_1)\varphi(z_1)d\gamma(z_1) &=
     - \sum_{i}^{T} \left( \left[\frac{\exp(-\frac{1}{2}z_1^2)}{\sqrt{2 \pi}} \phi(\sqrt{q^*}z_1)\varphi(z_1) \right]_{e_i^+}^{e_{i}^-} + \left[\frac{\exp(-\frac{1}{2}z_1^2)}{\sqrt{2 \pi}} \phi(\sqrt{q^*}z_1) \varphi(z_1) \right]_{o_i^+}^{o_{i}^-}\right)\\
    &- \sum_{i}^{T} \left( \left[\frac{\exp(-\frac{1}{2}z_1^2)}{\sqrt{2 \pi}} \phi(\sqrt{q^*}z_1)\varphi(z_1) \right]_{l_i^+}^{l_{i}^-} + \left[\frac{\exp(-\frac{1}{2}z_1^2)}{\sqrt{2 \pi}} \phi(\sqrt{q^*}z_1) \varphi(z_1) \right]_{s_i^+}^{s_{i}^-}\right)\\
    & + \int_{\reals}\phi(\sqrt{q^*}z_1) \varphi'(z_1) + \sqrt{q^*} \phi'(\sqrt{q^*}z_1)\varphi(z_1) d \gamma(z_1).
    \end{aligned}
    \]
    As both $\phi$ and $\varphi$ are continuous at $l_i$ and $s_i$ for all $i \in [T]$, then the left and right limits of $\phi(\sqrt{q^*}z_1)\varphi(z_1)$ at these points are equal. Therefore
    \[
    \begin{aligned}
    \int_{\reals}z_1\phi(\sqrt{q^*}z_1)\varphi(z_1)d\gamma(z_1) &=
     - \sum_{i}^{T} \left( \left[\frac{\exp(-\frac{1}{2}z_1^2)}{\sqrt{2 \pi}} \phi(\sqrt{q^*}z_1)\varphi(z_1) \right]_{e_i^+}^{e_{i}^-} + \left[\frac{\exp(-\frac{1}{2}z_1^2)}{\sqrt{2 \pi}} \phi(\sqrt{q^*}z_1) \varphi(z_1) \right]_{o_i^+}^{o_{i}^-}\right)\\
    & + \int_{\reals}\phi(\sqrt{q^*}z_1) \varphi'(z_1) + \sqrt{q^*} \phi'(\sqrt{q^*}z_1)\varphi(z_1) d \gamma(z_1)\\
    & \eqdef - \kappa_2(z_2) + \int_{\reals}\phi(\sqrt{q^*}z_1) \varphi'(z_1) + \sqrt{q^*} \phi'(\sqrt{q^*}z_1)\varphi(z_1) d \gamma(z_1).
    \end{aligned}
    \]
    Analysing $\kappa_2$, then by construction $\varphi(e_i) = \phi'(-d_i)$ and $\varphi(o_i) = \phi'(d_i)$. Similar to before it follows that
    \[
    \begin{aligned}
    \kappa_2(z_2) &= \sum_{i=1}^{T} \left( \left[\frac{\exp(-\frac{1}{2}z_1^2)}{\sqrt{2 \pi}} \phi(\sqrt{q^*}z_1)\varphi(z_1) \right]_{e_i^+}^{e_{i}^-} + \left[\frac{\exp(-\frac{1}{2}z_1^2)}{\sqrt{2 \pi}} \phi(\sqrt{q^*}z_1) \varphi(z_1) \right]_{o_i^+}^{o_{i}^-}\right)\\
    &=\sum_{i=1}^{T} \left(\frac{\exp(-\frac{1}{2}e_i^2)}{\sqrt{2 \pi}} \phi(\sqrt{q^*}e_i) \left(\varphi(-d_i^-) - \varphi(-d_i^+)  \right) + \frac{\exp(-\frac{1}{2}o_i^2)}{\sqrt{2 \pi}} \phi(\sqrt{q^*}o_i) \left(\varphi(d_i^-) - \varphi(d_i^+)  \right)   \right)\\
    &=\sum_{i=1}^{T} \left(\frac{\exp(-\frac{1}{2}e_i^2)}{\sqrt{2 \pi}} \phi(\sqrt{q^*}e_i) \left(\varphi(d_i^+) - \varphi(d_i^-)  \right) + \frac{\exp(-\frac{1}{2}o_i^2)}{\sqrt{2 \pi}} \phi(\sqrt{q^*}o_i) \left(\varphi(d_i^-) - \varphi(d_i^+)  \right)   \right)\\
    & = \frac{1}{\sqrt{2 \pi}} \sum_{i=1}^{T} \left(\left(\varphi(d_i^+) - \varphi(d_i^-)  \right) \left( \exp(-\frac{1}{2}e_i^2) \phi(\sqrt{q^*}e_i) - \exp(-\frac{1}{2}o_i^2) \phi(\sqrt{q^*}o_i)\right)  \right)
    \end{aligned}
    \]
    Note that $o_i(-z_2) = - e_i(z_2)$, therefore $\phi(\sqrt{q^*}o_i(-z_2)) = \phi(-\sqrt{q^*} e_i(z_2)) = - \phi( \sqrt{q^*}e_i(z_2))$ as $\phi$ is odd. Likewise $e_i(-z_2) = - o_i(z_2)$ and so $\phi(\sqrt{q^*}e_i(-z_2)) = \phi(-\sqrt{q^*}o_i(z_2)) = -\phi(\sqrt{q^*}o_i(z_2))$. Furthermore as
    \[
    \begin{aligned}
    o_i(z_2)^2 &= \frac{d_i^2}{\rho^2 q^*} - z_2 \frac{\sqrt{1 - \rho^2}}{\rho} + z_2^2 \frac{1 - \rho^2}{\rho^2},\\
    e_i(z_2)^2 &= \frac{d_i^2}{\rho^2 q^*} + z_2 \frac{\sqrt{1 - \rho^2}}{\rho} + z_2^2 \frac{1 - \rho^2}{\rho^2}
    \end{aligned}
    \]
    then $o_i(-z_2)^2 = e_i(z_2)^2$. Analogous to $\beta_i$, we now define $\Gamma_i(z_2) \defeq \exp(-\frac{1}{2}e_i^2) \phi(\sqrt{q^*}e_i) - \exp(-\frac{1}{2}o_i^2) \phi(\sqrt{q^*}o_i)$. The fact that $\Gamma_i(z_2)$ is odd follows from
    \[
    \begin{aligned}
    \Gamma_i(-z_2) &= \exp(-\frac{1}{2}e_i(-z_2)^2) \phi(\sqrt{q^*}e_i(-z_2))) - \exp(-\frac{1}{2}o_i(-z_2)^2) \phi(\sqrt{q^*}o_i(-z_2))\\
    & = -\exp(-\frac{1}{2}o_i(z_2)^2) \phi(\sqrt{q^*}o_i(z_2))) + \exp(-\frac{1}{2}e_i(z_2)^2) \phi(\sqrt{q^*}e_i(z_2))\\
    & = - \Gamma_i(z_2).
    \end{aligned}
    \]
    Analysing the first integral on the last line of \ref{eq:H_diff_intermediate_result}), then as $\Gamma_i$ is odd for each $i \in [T]$
    \begin{equation} \label{eq:lefthand_H_diff}
    \begin{aligned}
    &\int_{\reals} \left( \int_{\reals}  z_1\phi(u_1)\phi'(u_2)d\gamma(z_1)\right) d\gamma(z_2)\\
    &= - \int_{\reals}\kappa_2(z_2) d\gamma(z_2) + \int_{\reals}\int_{\reals}\phi(\sqrt{q^*}z_1) \varphi'(z_1) + \sqrt{q^*} \phi'(\sqrt{q^*}z_1)\varphi(z_1) d \gamma(z_1) d\gamma(z_2)\\
    & = -\frac{1}{\sqrt{2 \pi}} \sum_{i=1}^{T} \left(\varphi(d_i^+) - \varphi(d_i^-)  \right) \int_{\reals} \Gamma_i(z_2) d \gamma(z_2) + \int_{\reals^2} \phi(\sqrt{q^*}z_1) \varphi'(z_1) d \gamma^{(2)}(z_1, z_2)\\
    & + \sqrt{q^*}\int_{\reals^2} \phi'(\sqrt{q^*}z_1)\varphi(z_1) d \gamma^{(2)}(z_1, z_2)\\
    & = \sqrt{q^*}\rho \int_{\reals^2} \phi(u_1) \phi''(u_2) d \gamma^{(2)}(z_1, z_2) + \sqrt{q^*}\int_{\reals^2} \phi'(u_1)\phi'(u_2) d \gamma^{(2)}(z_1, z_2).
    \end{aligned}
    \end{equation}
    Substituting \eqref{eq:lefthand_H_diff} and \eqref{eq:righthand_H_diff} into \eqref{eq:H_diff_intermediate_result} it follows that
    \[
    \begin{aligned}
         H'(\rho)
         & = \sqrt{q^*}\left( \int_{\reals} \left( \int_{\reals} z_1\phi(u_1)\phi'(u_2)d\gamma(z_1)\right) d\gamma(z_2)  - \frac{\rho}{\sqrt{1 - \rho^2}} \int_{\reals}\phi(u_1)\left(\int_{\reals} z_2 \phi'(u_2)d\gamma(z_2) \right) d\gamma(z_1) \right)\\
         & = q^* \left(\rho \int_{\reals^2} \phi(u_1) \phi''(u_2) d \gamma^{(2)}(z_1, z_2) + \int_{\reals^2} \phi'(u_1)\phi'(u_2) d \gamma(z_1, z_2) - \rho \int_{\reals^2} \phi(u_1) \phi''(u_2) d \gamma^{(2)}(z_1, z_2) \right)\\
         & = q^* \int_{\reals^2} \phi'(u_1)\phi'(u_2) d \gamma^{(2)}(z_1, z_2).
    \end{aligned}
    \]
    It therefore follows for all $\rho \in (-1,1)$ that
    \[
    \begin{aligned}
    R_{\phi, q^*}'(\rho) = \sigma_w^2\int_{\reals^2} \phi'(u_1)\phi'(u_2) d \gamma^{(2)}(z_1, z_2)
    &=\sigma_w^2 \expec[\phi'(U_1)\phi'(U_2)]
    \end{aligned}
    \]
    as claimed.
\end{proof}

\subsection{Lemma \ref{lemma:bound_from_identity}} \label{appendix:proof_bound_from_identity}
\textbf{Lemma \ref{lemma:bound_from_identity}.} \textit{Under the same conditions and assumptions as in Lemma \ref{lemma:omega_corr_diff}, it holds that
	\[
    \max_{\rho \in [0,1]}|R_{\phi, q^*}(\rho) - \rho| = \frac{\sigma_b^2}{q^*}.
    \]}

\begin{proof}
Observe that
\[
	\begin{aligned}
	R_{\phi, q^*}(0) &= \frac{\expec[\phi(\sqrt{q^*}Z_1) \phi(\sqrt{q^*}Z_2)]}{q^*\expec[\phi^{'}(\sqrt{q^*}Z)^2]} + \frac{\sigma_b^2}{q^*}\\
	& = \frac{\expec[\phi(\sqrt{q^*}Z_1)]^2}{q^*\expec[\phi^{'}(\sqrt{q^*}Z)^2]} + \frac{\sigma_b^2}{q^*}.
	\end{aligned}
\]
As $\phi$ is odd then $\expec[\phi(\sqrt{q^*}Z_1)] = 0$ and therefore
\[
R_{\phi, q^*}(0) = \frac{\sigma_b^2}{q^*}>0.
\]
Lemma \ref{lemma:bound_from_identity} follows then as long as $|R_{\phi, q^*}(\rho) - \rho|< R_{\phi, q^*}(0)$ for all $\rho \in [0,1]$. As
\[
\begin{aligned}
&R_{\phi, q^*}(1) = \frac{\sigma_w^2}{q^*}\expec[\phi(\sqrt{q^*}Z_1)^2] + \frac{\sigma_b^2}{q^*} = \frac{V(q^*)}{q^*} = 1
\end{aligned}
\]
then $|R_{\phi, q^*}(1) - 1| = 0 < R_{\phi, q^*}(0)$. All that remains to show is that the inequality holds for $\rho \in (0,1)$. We proceed using an approach similar to that used to prove Proposition 3 in \cite{hayou19a}. Using Lemma \ref{lemma:omega_corr_diff} then for any  $\rho \in (0,1)$ we have
	\[
	\begin{aligned}
	R_{\phi, q^*}'(\rho) &= \frac{\expec[\phi'(U_1) \phi'(U_2)]}{\expec[\phi^{'}(\sqrt{q^*}Z)^2]}\\
	&\leq \frac{\expec[\phi'(\sqrt{q^*}Z_1)^2]^{\frac{1}{2}} \expec[\phi'(U_2)^2]^{\frac{1}{2}} }{\expec[\phi^{'}(\sqrt{q^*}Z)^2]}\\
	& = \frac{\expec[\phi'(U_2)^2]^{\frac{1}{2}} }{\expec[\phi^{'}(\sqrt{q^*}Z)^2]^{\frac{1}{2}}}\\
	& =1.
	\end{aligned}
	\]
The inequality on the second line of the above follows from Cauchy-Schwarz and the equalities on the third and fourth lines are due to the fact that $Z_1, Z, U_2 \sim \cN(0,1)$ are all identically distributed. Note that equality holds iff either $\rho=0$ or there exists an $\alpha \in \reals$ such that $\phi'(U_1) = \alpha\phi'(U_2)$. Since $Z_1$ and $Z_2$ are i.i.d. this can only occur if $\phi'$ is a constant, which in turn would imply that $\phi$ must be linear. However, by construction linear functions are clearly not scaled-bounded activations and therefore for any $\rho \in (0,1)$ it holds that
	\[
	R_{\phi, q^*}'(\rho) < 1.
	\]
    For $\rho \in (0,1)$ then integrating both sides of the above inequality and applying the fundamental theorem of calculus we have
	\[
	\int_0^{\rho} R_{\phi, q^*}'(t) dt < \int_0^{\rho}1 dt \implies 	R_{\phi, q^*}(\rho) - \rho  < R_{\phi, q^*}(0)
	\]
	and 
	\[
	\int_{\rho}^1 R_{\phi, q^*}'(t) dt < \int_{\rho}^1 1 dt  \implies  \rho - R_{\phi, q^*}(\rho)  < 0.
	\]
    As $R_{\phi, q^*}(0)>0$ then we conclude for $\rho \in (0,1)$ that $|R_{\phi, q^*}(\rho) - \rho|< R_{\phi, q^*}(0)$. Therefore
    \[
    |R_{\phi, q^*}(\rho) - \rho| \leq R_{\phi, q^*}(0) = \frac{\sigma_b^2}{q^*}
    \]
    for all $\rho \in [0,1]$ as claimed.
\end{proof}

\subsection{Lemma \ref{lemma:a_q_ratio_bound}} \label{appendix:proof_a_q_ratio_bound}
\textbf{Lemma \ref{lemma:a_q_ratio_bound}.} \textit{
Under the same conditions and assumptions as in Lemma \ref{lemma:omega_corr_diff}, and defining $y:=\frac{\sigma_b^2}{a^2}$, then
    \[
        \Lambda(y) < \frac{a}{\sqrt{q^*}} < \left(\frac{8}{\pi}\right)^{1/6} y^{-1/3},
    \]
    where $\Lambda(y)$ is defined as
    \[
    \left(W_0 \left(\frac{2}{\pi}\left(\sqrt{\frac{8}{\pi}}\exp \left(- \frac{\left(\sqrt{W_0 \left(\frac{2}{\pi}y^{-2}\right)}\right)^2}{2}\right) \left( \frac{1}{\left(\sqrt{W_0 \left(\frac{2}{\pi}y^{-2}\right)}\right)} + \left(\sqrt{W_0 \left(\frac{2}{\pi}y^{-2}\right)}\right)\right) \right)^{-2}\right)\right)^{1/2}
    \]
    and $W_0$ denotes the principal branch of the Lambert W function.
}

\begin{proof}
To derive the upper bound on $a/\sqrt{q^*}$ we study lower bounds for $V_{\phi}(q)$. To this end we first lower bound $\expec[\phi(\sqrt{q}Z)^2]$. The fact that $\phi$ is odd implies $\phi^2$ is even, therefore
    \[
	\begin{aligned}
		\expec[\phi(\sqrt{q}Z)^2] &=  2\left(\int_{0}^{a/\sqrt{q}} k^2q z^2 d\gamma(z) +  \int_{a/\sqrt{q}}^{\infty} \phi(\sqrt{q}z)^2  d\gamma(z) \right) \\
		& \geq 2\left(k^2q \int_{0}^{a/\sqrt{q}} z^2  d\gamma(z) \right)\\
		& = qk^2 \text{erf}\left( \frac{a}{\sqrt{2q}}\right) - \sqrt{\frac{2}{\pi}}ak^2 \sqrt{q} \exp\left(-\frac{a^2}{2q}\right).
	\end{aligned}
	\]
To now upper bound $\expec[\phi^{'}(\sqrt{q*}Z)^2]$ we use the fact that $|\phi'(z)| \leq k$, which implies
	\[
	\begin{aligned}
		\expec[\phi^{'}(\sqrt{q^*}Z)^2] &\leq  2k^2 \int_{0}^{\infty} d \gamma(z)\\
		& = k^2.
	\end{aligned}
	\]
As $q^* = V_{\phi}(q^*)$, it therefore follows that
\[
q^* \geq q^* \text{erf}\left( \frac{a}{\sqrt{2q^*}}\right) - \sqrt{\frac{2}{\pi}}a \sqrt{q^*} \exp\left(-\frac{a^2}{2q^*}\right) + \sigma_b^2.
\]
Rearranging and dividing by $a^2$ gives
\[
\frac{q^*}{a^2} \text{erfc}\left( \frac{a}{\sqrt{2q^*}}\right) \geq \frac{\sigma_b^2}{a^2} -  \frac{\sqrt{q^*}}{a} \sqrt{\frac{2}{\pi}} \exp\left(-\frac{a^2}{2q^*}\right).
\]
For typographical ease we now substitute $x = \frac{a}{\sqrt{q^*}}$ and multiply by $x^2$,
\[
\text{erfc}\left( \frac{x}{\sqrt{2}}\right) \geq x^2\frac{\sigma_b^2}{a^2} - x \sqrt{\frac{2}{\pi}} \exp\left(-\frac{x^2}{2}\right).
\]
It is known that $\text{erfc}\left( \frac{x}{\sqrt{2}}\right) \leq \sqrt{\frac{2}{\pi}} \frac{1}{x} \exp\left(-\frac{x^2}{2}\right) \leq \sqrt{\frac{2}{\pi}} \frac{1}{x}$ (see e.g., \cite{4289989}). Additionally it holds that $\exp\left(-\frac{x^2}{2}\right) < \frac{1}{x^2}$, this follows from the fact that $x> 2\ln(x)$, which can be proved using elementary calculus. Using these results we formulate the following inequality,
\[
x^2\frac{\sigma_b^2}{a^2} - x \sqrt{\frac{2}{\pi}} \frac{1}{x^2} < \sqrt{\frac{2}{\pi}}\frac{1}{x} ,
\]
which simplifies to
\[
\frac{a}{\sqrt{q^*}} \eqdef x < \left(\frac{8}{\pi}\right)^{1/6} \frac{a^{2/3}}{\sigma_b^{2/3}} = \left(\frac{8}{\pi}\right)^{1/6} y^{-1/3}.
\]
as claimed.

The derivation of the lower bound is more involved, we proceed by deriving an upper bound on $V_{\phi}(q)$ which will allow us to upper bound $q^*$. First, as $\phi$ is upper bounded by $k^2a^2$ then
\[
	\begin{aligned}
		\expec[\phi(\sqrt{q}Z)^2] & \leq  2\left(\int_{0}^{a/\sqrt{q}} k^2q z^2 d \gamma(z) +  k^2a^2\int_{a/\sqrt{q}}^{\infty} d\gamma(z) \right) \\
		&= k^2q \text{erf}\left( \frac{a}{\sqrt{2q}}\right) - k^2 \sqrt{\frac{2}{\pi}} a \sqrt{q} \exp \left(- \frac{a^2}{2q} \right) +  k^2a^2\text{erfc}\left( \frac{a}{\sqrt{2q}}\right).
	\end{aligned}
	\]
A simple lower bound for $\expec[\phi^{'}(\sqrt{q^*}Z)^2]$ is as follows,
	\[
	\begin{aligned}
		\expec[\phi^{'}(\sqrt{q^*}Z)^2] & \geq  2k^2 \int_{0}^{a/\sqrt{q^*}} \gamma(z)\\
		& = k^2 \text{erf}\left( \frac{a}{\sqrt{2q^*}}\right).
		\end{aligned}
	\]
As a result of these inequalities we may formulate the following inequality,
\[
q^* \leq \frac{k^2q^* \text{erf}\left( \frac{a}{\sqrt{2q^*}}\right) - k^2 \sqrt{\frac{2}{\pi}} a \sqrt{q^*} \exp \left(- \frac{a^2}{q^*} \right) +  k^2a^2\text{erfc}\left( \frac{a}{\sqrt{2q^*}}\right)}{k^2 \text{erf}\left( \frac{a}{\sqrt{2q^*}}\right)}  + \sigma_b^2.
\]
Rearranging this expression gives
\[
q^*\text{erf}\left( \frac{a}{\sqrt{2q^*}}\right) \leq q^* \text{erf}\left( \frac{a}{\sqrt{2q^*}}\right) -  \sqrt{\frac{2}{\pi}} a \sqrt{q^*} \exp \left(- \frac{a^2}{2q^*} \right) +  a^2\text{erfc}\left( \frac{a}{\sqrt{2q^*}}\right) + \sigma_b^2\text{erf}\left( \frac{a}{\sqrt{2q^*}}\right).
\]
Further simplification leads to
\[
\begin{aligned}
\sqrt{\frac{2}{\pi}} a \sqrt{q^*} \exp \left(- \frac{a^2}{2q^*} \right) &\leq (a^2 - \sigma_b^2)\text{erfc}\left( \frac{a}{\sqrt{2q^*}}\right) + \sigma_b^2\\
& <  a^2\text{erfc}\left( \frac{a}{\sqrt{2q^*}}\right) + \sigma_b^2.
\end{aligned}
\]
Dividing by $a^2$ and making the substitution $x = \frac{a}{\sqrt{q^*}}$ we arrive at the key inequality
\begin{equation}\label{eq:key_ratio_proof_inequality}
\sqrt{\frac{2}{\pi}} \frac{1}{x}\exp \left(- \frac{x^2}{2}\right)  <  \text{erfc}\left( \frac{x}{\sqrt{2}}\right) + \frac{\sigma_b^2}{a^2}.
\end{equation}
Recalling that our objective is to lower bound the quantity $\frac{a}{\sqrt{q^*}}$, then by construction any value of $x$ which satisfies the above inequality is a viable candidate. The challenging aspect now is to find a non-trivial candidate, in particular one that scales appropriately with $a$ and $\sigma_b^2$. For typographical ease we now define $g(x):= \sqrt{\frac{2}{\pi}} \frac{1}{x}\exp \left(- \frac{x^2}{2}\right)$ and $h(x): = \text{erfc}\left( \frac{x}{\sqrt{2}}\right) + \frac{\sigma_b^2}{a^2}$. The derivatives of each of these function are as follows,
\[
\begin{aligned}
g'(x) &=  -\sqrt{\frac{2}{\pi}}\exp \left(- \frac{x^2}{2}\right) \left(\frac{1}{x^2} + 1\right) \\
h'(x) &= -\sqrt{\frac{2}{\pi}}\exp \left(- \frac{x^2}{2}\right),
\end{aligned}
\]
note that $g'(x) > h'(x)$ for all $x\in \reals$. We now identify a candidate lower bound by considering the linear equation $u(x)$ defined by being tangent to $g(x)$ at $x = \delta:= \sqrt{W_0 \left(\frac{2}{\pi}y^{-2}\right)}$. Here $W_0$ refers to the principle branch of the Lambert W function (see e.g., \cite{lambert}). The line $u(x)$ therefore has the form
\[
u(x) =  -\sqrt{\frac{2}{\pi}}\exp \left(- \frac{\delta^2}{2}\right) \left(\frac{1}{\delta^2} + 1\right)x + \beta.
\]
Observe that as $u(\delta) = y = g(\delta)$ by construction, then
\[
\begin{aligned}
\beta &= y + \sqrt{\frac{2}{\pi}}\exp \left(- \frac{\delta^2}{2}\right)\left( \frac{1}{\delta} +  \delta\right)\\
&= \sqrt{\frac{2}{\pi}} \frac{1}{\delta}\exp \left(- \frac{\delta^2}{2}\right) + \sqrt{\frac{2}{\pi}}\exp \left(- \frac{\delta^2}{2}\right)\left( \frac{1}{\delta} +  \delta\right)\\
& =  \sqrt{\frac{2}{\pi}}\exp \left(- \frac{\delta^2}{2}\right) \left( \frac{2}{\delta} + \delta\right)
\end{aligned}
\]
We now identify a candidate lower bound $\gamma$ as the solution of $g(\gamma) = \beta$, the solution of which can be expressed using the Lambert W function, $\gamma = \sqrt{W_0 \left(\frac{2}{\pi}\beta^{-2}\right)}$. Consider the following compositions of functions, $\Phi_g(y) := g(\gamma(\beta(\delta(y))))$ and $\Psi_h(y) := h(\gamma(\beta(\delta(y))))$. To ensure that $\gamma \leq \frac{a}{\sqrt{q^*}}$, i.e., that $\gamma$ is indeed a valid lower bound, then it must hold that $\Phi_g(y) \geq \Phi_h(y)$. We refer the reader to Figure \ref{fig:proof_illustration} for a graphical description of this approach to identifying a candidate lower bound. Our task then is to inspect the range of $y \in \reals_{\geq 0}$ values for which this inequality holds true. First we note that $\Phi_f(0)= \Phi_h(0) = 0$. Second we observe $h, g, \gamma, \beta$ and $\delta$ are differentiable functions on $\reals_{> 0}$. As
\[
\begin{aligned}
\frac{d \delta}{dy} &= - \frac{ \sqrt{W_0 \left(\frac{2}{\pi}y^{-2}\right)}}{y\left(W_0 \left(\frac{2}{\pi}y^{-2}\right) + 1 \right)} < 0,\\
\frac{d \beta}{d \delta} &= - \frac{\sqrt{\frac{2}{\pi}} \exp \left(- \frac{\delta^2}{2}\right) \left(  \delta^4 + \delta^2 + 2 \right)}{\delta^2} < 0,\\
\frac{d \gamma}{d \beta} &= - \frac{ \sqrt{W_0 \left(\frac{2}{\pi}\beta^{-2}\right)}}{\beta\left(W_0 \left(\frac{2}{\pi}\beta^{-2}\right) + 1 \right)} < 0
\end{aligned}
\]
then $\frac{d \gamma}{dy} =\frac{d \gamma}{d \beta} \frac{d \beta}{d \delta}\frac{d \delta}{dy} <0$. Applying the chain rule
\[
\begin{aligned}
\Phi_g'(y) &= \frac{dg}{d\gamma} \frac{d \gamma}{dy}\\
&= -\sqrt{\frac{2}{\pi}}\exp \left(- \frac{\gamma^2}{2}\right) \left(\frac{1}{\gamma^2} + 1\right) \frac{d \gamma}{dy}\\
& = \sqrt{\frac{2}{\pi}}\exp \left(- \frac{\gamma^2}{2}\right) \left(\frac{1}{\gamma^2} + 1\right) \left| \frac{d \gamma}{dy} \right|
\end{aligned}
\]
and
\[
\begin{aligned}
\Phi_h'(y)&= \frac{dh}{d\gamma} \frac{d \gamma}{dy}\\
&= \sqrt{\frac{2}{\pi}}\exp \left(- \frac{\gamma^2}{2}\right) \left| \frac{d \gamma}{dy} \right|.
\end{aligned}
\]
The fact that
\[
\begin{aligned}
\sqrt{\frac{2}{\pi}}\exp \left(- \frac{\gamma^2}{2}\right) \left(\frac{1}{\gamma^2} + 2\right) \left| \frac{d \gamma}{dy} \right| >  \sqrt{\frac{2}{\pi}}\exp \left(- \frac{\gamma^2}{2}\right) \left| \frac{d \gamma}{dy} \right|,
\end{aligned}
\]
which can be further simplified to
\[
\left(\frac{1}{\gamma^2} + 1\right) > 1,
\]
holds for all $\gamma \in \reals$ implies that $\Phi_g'(y)>\Phi_h'(y)$ for all $y \in \reals_{>0}$. Therefore
\[
\begin{aligned}
\int_{0}^{y}\Phi_g'(t) - \Phi_h'(t) dt & > 0
\end{aligned}
\]
for all $y \in \reals_{>0}$. Now applying the fundamental theorem of calculus
\[
\begin{aligned}
\int_{0}^{y}\Phi_g'(t)  - \Phi_h'(t) dt &=  \Phi_g(y) -  \Phi_h(y) + \Phi_h(0) - \Phi_f(0)\\
& =  \Phi_g(y) -  \Phi_h(y)
\end{aligned}
\]
we conclude that $\Phi_g(y) > \Phi_h(y)$ for all $y \in \reals_{> 0}$. As a result, $\gamma$ is a valid lower bound for $\frac{a^2}{\sqrt{q^*}}$ as long as $\frac{\sigma_b^2}{a^2}>0$. Finally, to recover the statement of the theorem, we define the composite function $\Lambda: \reals_{\geq0} \rightarrow \reals_{\geq0}$ as $\Lambda(y) \defeq \gamma( \beta ( \delta(y)))$.
\end{proof}

\begin{figure}[htbp]
\centering
\includegraphics[scale=0.6]{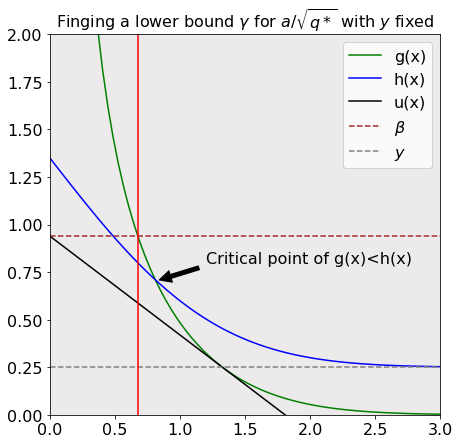}
\caption{illustration of proof approach to finding a lower bound $\gamma$ for $\frac{a}{\sqrt{q^*}}$ for a given value of $y = \sigma_b^2/a^2$ (in this example $y=0.25$). The red line indicates the value on the x-axis defined to be $\gamma$. This is computed by first identifying where the line $u(x)$, which lies tangent to the point where $g$ hits the asymptotic limit of $h(x)$ which in turn corresponds to the value of $y$, intercepts the y-axis, denoted as $\beta$. We define $\gamma$ then to be the point at which the horizontal line at $\beta$ (the upper dashed brown line) intercepts $g(x)$. To ensure that $\beta$ intercepts $g(x)$ above the critical point of the inequality $g(x) < h(x)$, it is necessary to check that $h(\gamma) < g(\gamma)$ is true.}
\end{figure}\label{fig:proof_illustration}

\newpage
\subsection{Convergence of preactivation correlations with depth for a range of activation functions} \label{appendix:conv_of_corrs}

\begin{figure}[H] 
\centering
\subfloat{ \includegraphics[scale=.45]{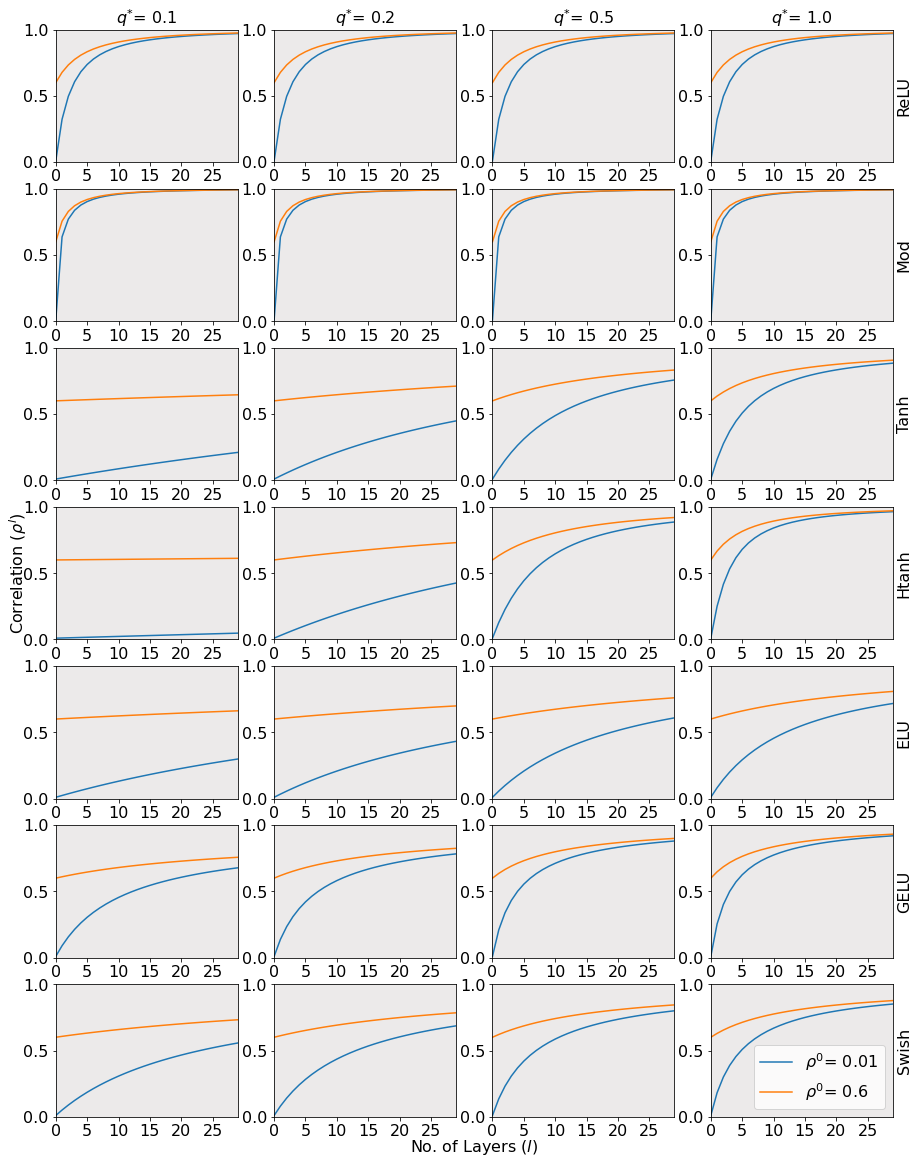}}
 \caption{evolution of numerically computed preactivation correlations from two initial correlations, 0.01 and 0.6, across a range of activation functions and fixed point values.} \label{fig:conv_of_corrs}
\end{figure}

\end{document}